\newtheorem{theorem}{Theorem}
\newtheorem{proposition}{Proposition}
\newtheorem{lemma}{Lemma}
\newtheorem{definition}{Definition}
\newtheorem{assumption}{Assumption}
\newtheorem{remark}{Remark}
\title{Online Knowledge Distillation with Reward Guidance}
\author{%
	Chen Jia \\
	SI-TECH Information Technology \\
	\texttt{jiachenwestlake@gmail.com} \\
}
\begin{document}

\maketitle

\begin{abstract}
This work studies knowledge distillation (KD) for large language models (LLMs) through preference optimization. We propose a reward-guided imitation learning framework for sequential KD, formulating a min-max optimization problem between the policy and reward model (RM) to minimize the performance gap between the student and teacher policies. Specifically, the reward optimization is constrained to achieve near-optimality within a confidence set for preference alignment. For preference data construction, we explore both offline and online preference-based KD. Additionally, we reformulate the RM using the $Q$-value function and extend the framework to white-box KD, where the teacher policy's predicted probabilities are accessible. Theoretical analysis and empirical results demonstrate the effectiveness of the proposed framework.
\end{abstract}

\section{Introduction}
Large language models (LLMs) such as GPT-4 \cite{achiam2023gpt} and PaLM \cite{anil2023palm} have demonstrated impressive capabilities across a wide range of natural language processing (NLP) tasks. Their strong generalization performance is largely attributed to large-scale pretraining and increased model capacity. However, these benefits come at the cost of substantial computational and memory demands, which pose significant challenges for real-world deployment, particularly in resource-constrained settings.

To address these challenges, knowledge distillation (KD) \cite{hinton2015distilling} has become a widely adopted technique to compress large teacher models into smaller, more efficient student models, while aiming to preserve much of the teacher’s performance. KD methods can be broadly categorized into black-box and white-box approaches, depending on the accessibility of the teacher model’s internal outputs. In black-box KD, the student learns by mimicking the teacher’s outputs using maximum likelihood estimation (MLE) \cite{kim2016sequence}. In white-box KD, the student has access to the teacher’s output distributions, allowing for more informative training through distribution matching \cite{hinton2015distilling,lin2020autoregressive,wen2023f,gu2023minillm,agarwal2024policy}.

Despite their success, existing KD methods face two major challenges. First, the large capacity gap between teacher and student models makes it difficult for the student to accurately replicate the teacher’s outputs, particularly in white-box KD. Second, the teacher’s outputs may themselves be suboptimal for specific downstream tasks, making it undesirable to blindly mimic them.

To overcome these limitations, we propose a novel preference-based knowledge distillation \textbf{(PbKD)} framework as shown in Figure \ref{fig:overall}. Instead of directly cloning the teacher’s behavior, PbKD formulates KD as a reward-guided imitation learning problem. Our method optimizes the performance gap between the student and teacher policies through a min-max game between the student and a reward model (RM). The RM is constrained within a confidence set constructed from preference data and is optimized to maximize the distinguishability between the student and teacher, guiding the student to improve accordingly.

We further develop an online PbKD variant that iteratively augments the preference dataset with new samples collected from the evolving student policy, enabling continual refinement of the RM and more accurate alignment. In the white-box setting, we extend PbKD by reformulating the RM using $Q$-functions to more effectively leverage the teacher’s output probabilities.

We provide theoretical guarantees for both the offline and online settings. For offline PbKD, we derive a suboptimality bound with a convergence rate of $\mathcal{O}(\sqrt{\log(N/\delta)/N})$, and for online PbKD, we prove a regret bound of order $\mathcal{O}(\sqrt{ T \log T \log(T/\delta) })$, where $N$ is the number of preference samples, $d$ is the feature dimension, $T$ is the number of iterations, and $\delta$ is the confidence level. Extensive empirical evaluations across five black-box and five white-box KD benchmarks confirm the effectiveness of our approach, outperforming prior methods.

\begin{figure}[t!]
	\centering	
	\includegraphics[width=1.\linewidth]{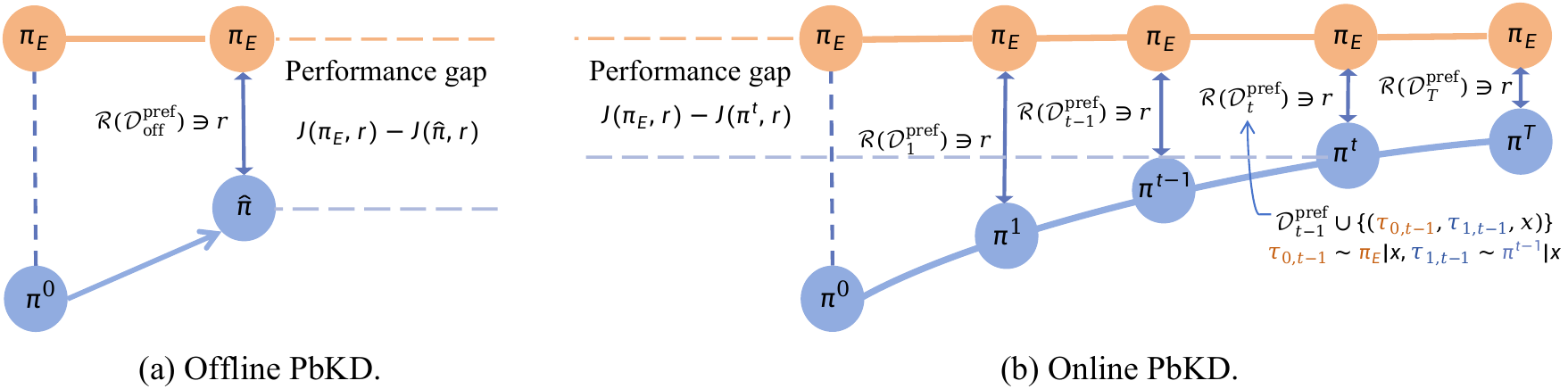}
	\caption{We formulate reward-guided imitation learning as the optimization of the performance gap between the teacher policy $\pi_E$ and the student policy $\hat{\pi}$, defined as $\hat{\pi} := \mathop{\arg\min}_\pi \max_{r} J(\pi_E, r) - J(\pi, r)$ (Eq. (\ref{eq:obj})). In the offline PbKD setting, a pre-collected preference dataset is used to construct a confidence set for constraining the reward model with MLE, i.e., $r \in \mathcal{R}(\mathcal{D}^{\rm pref}_{\rm off})$ (Algorithm \ref{alg:offlinekd}). In the online PbKD setting, new preference data are iteratively collected from the current student policy and incorporated into the confidence set, resulting in time-dependent constraints: $r \in \mathcal{R}(\mathcal{D}^{\rm pref}_t)$ at each iteration $t \in \{1, 2, \ldots, T\}$ (Algorithm \ref{alg:onlinekd}).}
	\label{fig:overall}
\end{figure}
%

\section{Related Work}
\noindent\textbf{Black-Box Knowledge Distillation.}
Black-box knowledge distillation (KD) has emerged as a practical solution for compressing large language models (LLMs) like GPT-4 \cite{brown2024gpt}, Claude 3 \cite{anthropic2024claude3}, and Gemini (Google, 2023) into smaller, more efficient student models. Black-box KD leverages only the output responses of proprietary APIs. Recent efforts such as Alpaca \cite{taori2023alpaca}, Vicuna \cite{chiang2023vicuna}, and Orca \cite{mukherjee2023orca} have explored using these responses as supervision to transfer capabilities like reasoning, tool use, and code generation. Techniques include prompting LLMs to generate Chain-of-Thought (CoT) explanations \cite{hsieh2023distilling,wang2023selfinstruct} rationale-based reasoning \cite{liu2023tinygsm}, or code-language pairs \cite{azerbayev2023explicit}, which are then used to fine-tune smaller models. These approaches demonstrate the potential of black-box LLMs as data generators to train open-source alternatives without requiring access to their internal weights or training data.

\noindent\textbf{White-Box Knowledge Distillation.}
White-box knowledge distillation (KD) builds upon the classical KD framework introduced by Hinton et al. \cite{hinton2015distilling}, where the student model learns from the soft predictions of a larger teacher model. This paradigm has been widely applied to compress large pre-trained language models (PLMs), with early work focusing on logits and intermediate features \cite{jiao2020tinybert,sun2019patient,wang2020minilm}. Subsequent methods extended KD to sequence generation tasks, including machine translation and summarization, by aligning entire output distributions \cite{chen2020distilling,wang2021selective}.
Recent efforts target large language models (LLMs) specifically, exploring divergence-based objectives such as KL, reverse KL (RKL), and Jensen-Shannon (JS) divergence for auto-regressive generation \cite{wen2023f,agarwal2024policy,gu2023minillm}. For instance, Gu et al. \cite{gu2023minillm} introduced a policy gradient method to reduce RKL variance, while Ko et al. \cite{ko2024distillm} proposed an efficient skew-divergence-based technique. Additionally, moment-matching and step-wise supervision have gained attention for better aligning generation dynamics \cite{jiaadversarial}. These works reveal the limitations of traditional divergence objectives and highlight the need for more adaptive and data-efficient strategies in white-box KD for LLMs.

\noindent\textbf{Preference-based Knowledge Distillation.}
While knowledge distillation (KD) has been extensively studied for large language models (LLMs), relatively few efforts have explored the use of preference optimization to guide the distillation process, aiming to align student models with human preferences. Recent works on preference-based knowledge distillation (PbKD) include \cite{li2024direct,zhang2024plad,chen2024knowledgedistillationblackboxlarge}, where a preference loss is incorporated—typically using pairwise comparisons between teacher and student outputs—alongside the conventional maximum likelihood estimation (MLE) loss used in behavior cloning. Notably, Li et al. \cite{li2024direct} and Chen et al. \cite{chen2024knowledgedistillationblackboxlarge} focus on black-box settings, whereas Zhang et al. \cite{zhang2024plad} propose a method that falls under white-box KD. In contrast to these approaches, we propose a more general PbKD framework grounded in imitation learning, applicable to both black-box and white-box scenarios. Our improvements stem from a more robust min-max optimization framework with an online, iterative training scheme that enhances preference alignment and training stability.

\section{Problem Formulation}
In this section, we formulate sequential knowledge distillation as an imitation learning problem for language generation. Specifically, we consider language generation as a \textit{finite-horizon}, \textit{time-independent} sequence-to-sequence decision-making task. Given an input prompt $x \in \mathcal{X}$, the policy generates an output sequence $y = \{y_1, \ldots, y_H\}$ under an episodic Markov Decision Process (MDP), defined by the tuple $(\mathcal{S}, \mathcal{A}, T, r^*)$. Here, $\mathcal{S} = \{s_h\}_{0 \leq h \leq H}$ denotes the state space consisting of output prefixes $s_h = y_{\leq h}$, and $\mathcal{A} = \{a_h\}_{0 \leq h \leq H-1}$ denotes the action space corresponding to predicted tokens $a_h = y_{h+1}$. We assume a deterministic state-transition function $T$ in sequence generation, where each action deterministically advances the state.

Given a policy $\pi$, the prediction of the next action at a state $s$ is modeled as a probability distribution $\pi(\cdot \mid s)$ over $\mathcal{A}$. Starting from an initial state (i.e., prompt) $x \sim d_0$, a trajectory is generated as $(a_0, s_1, a_1, \ldots, s_{H-1}, a_{H-1}, s_H) \sim \pi \mid x$. The value function of policy $\pi$ with respect to a RM $r$ is defined as the expected cumulative reward over the episode: $V^{\pi}(x) = \mathbb{E}_{\tau \sim \pi \mid x} \left[ \sum_{h=0}^{H-1} \gamma^h r(s_h, a_h) \right]$,
where $\gamma \in [0, 1]$ is a discount factor. The overall performance of the policy $\pi$ under RM $r$ is then given by the expected value over the input distribution $d_0$:
$J(\pi, r) = \mathbb{E}_{x \sim d_0} \left[ V^{\pi}(x) \right]$. 

\noindent\textbf{Preference Learning.} 
Given a RM $r$, the preference probability over a binary feedback $o \in \{0,1\}$ for a trajectory pair $(\tau_0, \tau_1)$ conditioned on an input $x$ is modeled using the Bradley-Terry-Luce (BTL) formulation:
\begin{align}
	\mathbb{P}_r(\tau_0 \succ \tau_1 \mid x) = \mathbb{P}_r(o = 1 \mid x, \tau_0, \tau_1) = \sigma\left( r(x, \tau_0) - r(x, \tau_1) \right),
\end{align}
where $r(x, \tau) = \sum_{h=0}^{H-1} \gamma^h r(s_h, a_h)$ denotes the cumulative discounted reward of the trajectory $\tau = (s_1, a_1, \ldots, s_{H-1}, a_{H-1}, s_H)$ conditioned on input $x$, and $\sigma(\cdot)$ is the sigmoid function $\sigma(x) = 1/(1 + \exp(-x))$.

We assume the existence of a ground-truth reward model $r^*$ that induces the true preference distribution, i.e.,  $\mathbb{P}(\tau_0 \succ \tau_1 \mid x) = \mathbb{P}_{r^*}(o = 1 \mid x, \tau_0, \tau_1)$.$\mathbb{P}(\tau_0 \succ \tau_1 \mid x) = \mathbb{P}_{r^*}(o = 1 \mid x, \tau_0, \tau_1)$.
To evaluate the discrepancy between a candidate RM $r \in \mathcal{G}_r$ and the true model $r^*$, we define the maximum likelihood estimation (MLE) loss for a single preference instance as:
\begin{align}
	\ell_r(o; x, \tau_0, \tau_1) = 
	\begin{cases}
		\frac{1}{2} \log \frac{\mathbb{P}_r(o \mid x, \tau_0, \tau_1)}{\mathbb{P}_{r^*}(o \mid x, \tau_0, \tau_1)}, & \text{if } \mathbb{P}_{r^*}(o \mid x, \tau_0, \tau_1) \neq 0 \\
		0, & \text{otherwise}
	\end{cases}
\end{align}

The empirical MLE objective over a dataset of $N$ preference-labeled samples $\mathcal{D}^{\text{pref}} = \{(o_n; x_n, \tau_{0,n}, \tau_{1,n})\}_{n=1}^N$ is given by:
$L_r(\mathcal{D}^{\text{pref}}) = \sum_{n=1}^N \ell_r(o_n; x_n, \tau_{0,n}, \tau_{1,n})$.
We denote by $\mathcal{L}_r = \{\ell_r : r \in \mathcal{G}_r\}$ the class of MLE objective functions corresponding to the reward model function class $\mathcal{G}_r$.

\begin{definition}[\bf $\epsilon$-Bracketing Number of the MLE Class $\mathcal{L}_r$] \label{def:bracketnum}
	Let $\mathcal{L}_r$ be the class of MLE objective functions. A pair $(\ell_1, \ell_2)$ with $\ell_1, \ell_2 \in \mathcal{L}_r$ is called an \emph{$\epsilon$-bracket} if it satisfies:
	\begin{itemize}
		\item $\ell_1(o; x, \tau_0, \tau_1) \leq \ell_2(o; x, \tau_0, \tau_1)$ for all preference samples $(o; x, \tau_0, \tau_1)$;
		\item The $L_1$-distance between the two functions is bounded as $\left\| \ell_1 - \ell_2 \right\|_1 \leq \epsilon$.
	\end{itemize}
	The \emph{$\epsilon$-bracketing number} of $\mathcal{L}_r$ with respect to the $L_\infty$ norm, denoted by $\mathcal{N}_{[]}(\epsilon, \mathcal{L}_r, L_{\infty})$, is the minimal number of such brackets $\{(\ell_{1,n}, \ell_{2,n})\}_{n=1}^N$ required to cover $\mathcal{L}_r$, in the sense that for every $\ell \in \mathcal{L}_r$, there exists an index $n \in [N]$ such that $\ell_{1,n}(o; x, \tau_0, \tau_1) \leq \ell(o; x, \tau_0, \tau_1) \leq \ell_{2,n}(o; x, \tau_0, \tau_1), \quad \forall (o; x, \tau_0, \tau_1)$.
\end{definition}

In the following PAC analysis, we assume that the reward function class $\mathcal{G}_r$ satisfies three conditions: realizability, linearity, and boundedness.

\begin{assumption}[\bf Realizability] \label{asp:rea}
	We assume the reward class is realizable, i.e., the ground-truth reward function $r^*$ lies in the function class: $r^* \in \mathcal{G}_r$.
\end{assumption}

\begin{assumption}[\bf Linearity and Boundedness] \label{asp:linearbound}
	We assume the reward function is linearly parameterized as $r_{\theta}(x, \tau) = \theta^\top \phi(x,\tau)$, $\phi: (x, \tau) \mapsto \phi(x,\tau) \in \mathbb{R}^d$ is a fixed feature extractor. Furthermore, we assume the features and parameters are bounded such that $\Vert \phi(x, \tau) \Vert_{2} \leq 1$ for all $(x, \tau)$ and $\| \theta \|_2 \leq B$ for some constant $B > 0$.
\end{assumption}

\begin{proposition}[\bf Bracketing Number of the MLE Loss Class \cite{zhanprovable}] \label{prop:bracketing}
	Under Assumption~\ref{asp:linearbound}, for any $\epsilon \leq 1$, the $\epsilon$-bracketing number of the MLE objective function class $\mathcal{L}_r$ with respect to the $L_\infty$ norm satisfies: $\log \mathcal{N}_{[]}(\epsilon, \mathcal{L}_r, L_{\infty}) \leq \mathcal{O} \left( d \log \frac{B}{\epsilon} \right)$.
\end{proposition}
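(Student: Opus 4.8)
The plan is to reduce the bracketing number of the loss class $\mathcal{L}_r$ to an ordinary Euclidean covering number of the parameter ball $\Theta = \{\theta \in \mathbb{R}^d : \|\theta\|_2 \leq B\}$, and then invoke the standard volumetric bound on covering numbers of balls in $\mathbb{R}^d$. The key observation is that, by Assumption~\ref{asp:linearbound}, the loss $\ell_{r_\theta}$ depends on $\theta$ only through the scalar $z_\theta := r_\theta(x,\tau_0) - r_\theta(x,\tau_1) = \theta^\top\big(\phi(x,\tau_0) - \phi(x,\tau_1)\big)$, since $\mathbb{P}_{r_\theta}(o=1\mid x,\tau_0,\tau_1) = \sigma(z_\theta)$ and the normalizing term $\mathbb{P}_{r^*}$ is fixed across the class. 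I would therefore first establish a uniform (sample-independent) Lipschitz bound of $\ell_{r_\theta}$ in $\theta$.

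For the Lipschitz step, I would note that $\frac{d}{dz}\log\sigma(z) = 1-\sigma(z) \in (0,1)$ and $\frac{d}{dz}\log\sigma(-z) = -\sigma(z) \in (-1,0)$, so $\log\mathbb{P}_{r_\theta}(o\mid\cdot)$ is $1$-Lipschitz in $z$ for either value of $o$. Combining this with $|z_\theta - z_{\theta'}| = |(\theta-\theta')^\top(\phi(x,\tau_0)-\phi(x,\tau_1))| \leq \|\theta-\theta'\|_2\,\|\phi(x,\tau_0)-\phi(x,\tau_1)\|_2 \leq 2\|\theta-\theta'\|_2$, and the factor $\tfrac12$ in the definition of $\ell_r$, yields the clean uniform estimate $\big|\ell_{r_\theta}(o;x,\tau_0,\tau_1) - \ell_{r_{\theta'}}(o;x,\tau_0,\tau_1)\big| \leq \|\theta-\theta'\|_2$ for every preference sample $(o;x,\tau_0,\tau_1)$. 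Because this bound holds pointwise (i.e. in $L_\infty$), an $\eta$-net in $\theta$ induces $L_\infty$-close loss functions.

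Next I would take a minimal $\eta$-net $\{\theta_j\}_{j=1}^M$ of $\Theta$, which by the standard volume-ratio argument satisfies $M \leq (1 + 2B/\eta)^d$. For each net point I would define the bracket endpoints $\ell_{1,j} = \ell_{r_{\theta_j}} - \eta$ and $\ell_{2,j} = \ell_{r_{\theta_j}} + \eta$; the Lipschitz bound guarantees that every $\ell_{r_\theta}$ with $\|\theta-\theta_j\|_2 \leq \eta$ satisfies $\ell_{1,j} \leq \ell_{r_\theta} \leq \ell_{2,j}$ pointwise, and the $L_\infty$ width of each bracket is exactly $2\eta$. Choosing $\eta = \epsilon/2$ gives a valid family of $\epsilon$-brackets, so $\log\mathcal{N}_{[]}(\epsilon,\mathcal{L}_r,L_\infty) \leq \log M \leq d\log\!\big(1 + 4B/\epsilon\big) = \mathcal{O}\!\big(d\log(B/\epsilon)\big)$, as claimed. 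The main obstacle is really the first step: making sure the Lipschitz constant is genuinely uniform over all samples (which hinges on the bounded sigmoid gradient together with $\|\phi\|_2 \leq 1$ and the fixed $r^*$ normalizer), so that a covering in the finite-dimensional parameter space translates into $L_\infty$ brackets without any dependence on the trajectory space. A minor technical caveat I would flag is that the offset endpoints $\ell_{r_{\theta_j}} \pm \eta$ need not themselves lie in $\mathcal{L}_r$; the standard notion of bracketing permits arbitrary measurable envelope functions, so I would use that relaxation (or replace the additive offsets by nearby in-class elements) rather than the literal in-class requirement stated in Definition~\ref{def:bracketnum}.
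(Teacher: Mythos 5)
Your argument is correct. Note that the paper itself gives no proof of this proposition -- it is imported verbatim from \cite{zhanprovable} -- so there is nothing in the appendix to compare against; your reduction (uniform $1$-Lipschitzness of $\ell_{r_\theta}$ in $\theta$ via the bounded sigmoid derivative, $\Vert\phi\Vert_2\le 1$, and the fixed $r^*$ normalizer, followed by a volumetric $\eta$-net of the ball $\{\Vert\theta\Vert_2\le B\}$ with $\eta=\epsilon/2$) is exactly the standard argument used in that reference. Your caveat about the bracket endpoints $\ell_{r_{\theta_j}}\pm\eta$ not lying in $\mathcal{L}_r$ is well taken: Definition~\ref{def:bracketnum} as written requires $\ell_1,\ell_2\in\mathcal{L}_r$, which is stricter than the usual notion of bracketing and is what your construction (and the cited one) actually needs relaxed; this is an imprecision in the paper's definition rather than a gap in your proof.
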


\noindent\textbf{Reward-Guided Imitation Learning.}  
We consider an imitation learning formulation for knowledge distillation, where the goal is to learn a student policy $\pi \in \Pi$ that closely imitates an expert (i.e., teacher) policy $\pi_E$. Formally, we aim to minimize the worst-case performance gap between the student and expert policies over a set of plausible reward models inferred from preference data:
\begin{align} \label{eq:obj}
	\hat{\pi} = \mathop{\arg\min}_{\pi \in \Pi} \max_{r \in \mathcal{R}(\mathcal{D}^{\rm pref})} J(\pi_E, r) - J(\pi, r),  
\end{align}
where the reward confidence set $\mathcal{R}(\mathcal{D}^{\rm pref}) \subseteq \mathcal{G}_r$ is constructed via the maximum likelihood estimation (MLE) on the preference dataset $\mathcal{D}^{\rm pref}$:
\begin{align}
	\mathcal{R}(\mathcal{D}^{\rm pref})= \left\{ r \in \mathcal{G}_r:  L_r(\mathcal{D}^{\rm pref}) \geq \max_{r \in \mathcal{G}_r} L_r(\mathcal{D}^{\rm pref}) - \zeta  \right\} 
\end{align}
with $\zeta \geq 0$ being a slack parameter controlling the confidence radius.

The resulting objective in Eq.~\eqref{eq:obj} constitutes a min-max optimization problem between the student policy and an uncertain reward function. The inner maximization identifies the least favorable reward model $r \in \mathcal{R}(\mathcal{D}^{\rm pref})$ that maximizes the performance gap, thereby encouraging robustness to reward uncertainty. Rather than simply using the MLE estimate $\hat{r} = \arg\max_{r \in \mathcal{G}_r} L_r(\mathcal{D}^{\rm pref})$, this approach penalizes student policies that do not generalize well under plausible alternative reward functions, leading to a distributionally robust formulation of imitation learning.

In what follows, we introduce different instantiations of this reward-guided imitation learning framework for knowledge distillation, categorized into three settings:
\begin{enumerate}
 \item Offline preference-based knowledge distillation \textbf{(offline PbKD; Section~\ref{sec:offlinekd})}, which uses a fixed offline preference dataset;
\item Online preference-based knowledge distillation \textbf{(online PbKD; Section~\ref{sec:onlinekd})}, where preferences are collected and incorporated during training;
\item  \textbf{Moment-matching PbKD (Section~\ref{sec:mmpbkd})}, which leverages the preference difference lemma (PDL) and reformulates the objective in terms of $Q$-function matching.
\end{enumerate}
All theoretical results are supported with detailed proofs provided in the Appendix.

\section{Offline Preference-based Knowledge Distillation} \label{sec:offlinekd}

\subsection{Algorithm}

In the offline PbKD setting, we are given a fixed offline preference dataset $\mathcal{D}_{\rm off}^{\rm pref} = \{ (o_n; x_n, \tau_{0,n}, \tau_{1,n})\}_{n=1}^N$, where each input $x_n$ is sampled i.i.d. from a distribution $d_0$, and the corresponding trajectory pair $(\tau_{0,n}, \tau_{1,n})$ is independently sampled from two policies $\mu_0(\cdot \mid x_n)$ and $\mu_1(\cdot \mid x_n)$, respectively. The preference label $o_n \in \{0,1\}$ is sampled according to the ground-truth preference distribution $\mathbb{P}_{r^*}(o_n \mid x_n, \tau_{0,n}, \tau_{1,n})$, which is typically derived from human annotations \cite{christiano2017deep,ziegler2019fine,stiennon2020learning,ouyang2022training} or high-quality AI feedback \cite{guo2024direct,pang2024iterative,cen2024value,cui2024ultrafeedback}.

We present the algorithm for offline preference-based knowledge distillation (offline PbKD) in Algorithm~\ref{alg:offlinekd}. Given the offline preference dataset $\mathcal{D}_{\rm off}^{\rm pref}$, our goal is to obtain the student policy $\hat{\pi}$ that minimizes the worst-case performance gap with respect to the expert policy $\pi_E$, following the reward-guided imitation learning objective in Eq.~(\ref{eq:obj}).

To avoid directly solving the constrained optimization problem involving the reward confidence set, we adopt a Lagrangian relaxation approach. By introducing a Lagrange multiplier $\beta \geq 0$, we convert the original constrained problem into the following unconstrained bi-level optimization formulation:
\begin{align}
	\hat{\pi} = \mathop{\arg\min}_{\pi \in \Pi} \max_{r \in \mathcal{G}_r}  J(\pi_E, r) - J(\pi, r) + \beta L_r(\mathcal{D}_{\rm off}^{\rm pref}) ,
\end{align}
where the reward model $r$ is simultaneously optimized to maximize the likelihood of the observed preferences via the MLE objective $L_r(\mathcal{D}_{\rm off}^{\rm pref})$, while inducing the largest performance gap between the student and expert. This encourages the student policy to be robust against plausible reward functions supported by the offline preference data.

\RestyleAlgo{ruled}
\SetKwComment{Comment}{/* }{ */}

\begin{algorithm}[t!] 
	\caption{Offline PbKD} \label{alg:offlinekd}
	\KwIn{Offline preference dataset $\mathcal{D}_{\rm off}^{\rm pref} = \{ (0_n; x_n, \tau_{0,n}, \tau_{1,n}) \}_{n=1}^N$, student policy class $\Pi$, teacher policy $\pi_E$,  input distribution $d_0$.}
	Compute the best policy under constraints on offline confidence set:
	\begin{align} \label{eq:objoffline}
	\hat{\pi} &= \mathop{\arg\min}_{\pi \in \Pi} \max_{r \in \mathcal{R}(\mathcal{D}^{\rm pref}_{\rm off})} J(\pi_E, r) - J(\pi, r) \\
	{s.t.}\ \mathcal{R}(\mathcal{D}^{\rm pref}_{\rm off}) &= \left\{ r \in \mathcal{G}_r:  L_r(\mathcal{D}^{\rm pref}_{\rm off}) \geq \max_{r \in \mathcal{G}_r} L_r(\mathcal{D}^{\rm pref}_{\rm off}) - \zeta  \right\} 
	\end{align} \\
	\KwOut{The optimized student policy $\hat{\pi}$.}
\end{algorithm}

\subsection{Learning Guarantee}
We evaluate the quality of the learned student policy $\hat{\pi}$ by its suboptimality under the ground-truth reward model $r^*$. Specifically, we consider the performance gap between $\hat{\pi}$ and the optimal policy $\pi^* = \mathop{\arg\max}_{\pi \in \Pi} J(\pi, r^*)$, defined as: $J(\pi^*, r^*) - J(\hat{\pi}, r^*)$. This quantity characterizes how far the learned policy is from the best achievable performance under the true reward.

The following theorem provides a learning guarantee for offline PbKD, bounding the suboptimality of $\hat{\pi}$ in terms of the quantity of offline preference data and the functional complexity.

\begin{definition}[\bf Concentrability Coefficient for Offline Preference] \label{def:concencoefoffline}
	The concentrability coefficient w.r.t. a RM class $\mathcal{G}_r$, the teacher policy $\pi_E$ and the optimal student policy $\pi^*$ is defined as:
	\begin{align}
		C_r(\mathcal{G}_r, \pi_E, \pi^*) = \sup_{r \in \mathcal{G}_r} \frac{\mathbb{E}_{x \sim d_0, \tau_0 \sim \pi_E \mid x, \tau_1 \sim \pi^* \mid x}\left[ r^*(\tau_0) - r^*(\tau_1) - \left( r(\tau_0) - r(\tau_1) \right)\right]}{\sqrt{\mathbb{E}_{x \sim d_0, \tau_0 \sim \mu_0 \mid x, \tau_1 \sim \mu_1 \mid x} \left[ \vert r^*(\tau_0) - r^*(\tau_1) - \left( r(\tau_0) - r(\tau_1) \right) \vert^2 \right]}}
	\end{align}
\end{definition}

The concentrability coefficient, which has also been adopted in the literature \cite{chen2019information,songhybrid,zhanprovable,ye2024online}, is defined as an information-theoretic ratio that quantifies the mismatch between the distribution of the preference data and the distributions induced by the teacher policy $\pi_E$ and the optimal student policy $\pi^*$. It captures this discrepancy in terms of the reward differences between positive and negative preference-labeled outputs. Based on the definition of concentrability coefficient, we derive a suboptimality bound as follows.

\begin{theorem}[\bf Suboptimality]\label{thm:subopt}
	For any $\delta \in (0,1]$, given an offline preference dataset with a size of $N$, and let $\zeta = \mathcal{O}\left( \sqrt{\log(\mathcal{N}_{[]}(\epsilon, \mathcal{L}_r, L_\infty)/\delta)/N} \right)$, then under Assumption \ref{asp:rea}, and assuming that $\Vert 1/\sigma(1-\sigma) \Vert_{\infty} \leq \kappa$, for some constant $\kappa \geq  0$,  we have with probability at least $1-\delta$:
\begin{align*}
	J(\pi^*, r^*) - J(\hat{\pi}, r^*) \leq \mathcal{O} \left(\kappa C_r(\mathcal{G}_r, \pi_E, \pi^*) \sqrt{\frac{\log(\mathcal{N}_{[]}(1/N, \mathcal{L}_r, L_\infty)/\delta)}{N}}  \right)
\end{align*}

If we further assume that Assumption \ref{asp:linearbound} holds, we have
\begin{align*}
	J(\pi^*, r^*) - J(\hat{\pi}, r^*) \leq \mathcal{O} \left( (1+e^{2B})\left\Vert \phi(x, \pi_E) - \phi(x, \pi^*) \right\Vert_{\Sigma^{-1}}\sqrt{\frac{d\log(BN)+\log(1/\delta)}{N}} \right),
\end{align*}
where $\Sigma$ denotes the covariance matrix corresponding to the expected difference between feature vectors of preference pairs, defined as: $\Sigma = \frac{\lambda}{B}I + \mathbb{E}_{x \sim d_0, \tau_0 \sim \mu_0\mid x, \tau_1 \sim \mu_1\mid x} \left[ \left( \phi(x, \tau_0) - \phi(x, \tau_1) \right) \left( \phi(x, \tau_0) - \phi(x, \tau_1) \right)^\top \right]$ given a positive constant $0 < \lambda \leq \mathbb{E}_{x \sim d_0, \tau_0 \sim \mu_0\mid x, \tau_1 \sim \mu_1\mid x} \left[ \vert r^*(\tau_0) - r^*(\tau_1) - \left( r(\tau_0) - r(\tau_1) \right) \vert^2 \right]$.
\end{theorem}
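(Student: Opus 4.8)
The plan is to follow the standard pessimism-style argument for offline preference learning, combining a uniform MLE concentration bound to certify the confidence set, the optimality of $\hat{\pi}$ in the min-max objective, and the concentrability coefficient of Definition~\ref{def:concencoefoffline} to transfer an in-distribution reward-error bound onto the $(\pi_E,\pi^*)$ distribution. First I would establish a uniform MLE concentration guarantee: with the stated choice $\zeta = \mathcal{O}(\sqrt{\log(\mathcal{N}_{[]}(\epsilon,\mathcal{L}_r,L_\infty)/\delta)/N})$, a bracketing argument (instantiating the brackets of Definition~\ref{def:bracketnum} at resolution $\epsilon = 1/N$ together with a Bernstein/martingale tail bound on the log-likelihood-ratio empirical process) shows that, with probability at least $1-\delta$, two facts hold simultaneously. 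First, the true reward $r^*$ lies in $\mathcal{R}(\mathcal{D}^{\rm pref}_{\rm off})$ (so the inner maximization over the confidence set dominates the value at $r^*$), which uses realizability, Assumption~\ref{asp:rea}. Second, every $r \in \mathcal{R}(\mathcal{D}^{\rm pref}_{\rm off})$ has small in-sample statistical distance to $r^*$, namely the expected squared Hellinger distance between $\mathbb{P}_r$ and $\mathbb{P}_{r^*}$ under $(x,\tau_0,\tau_1)\sim(d_0,\mu_0,\mu_1)$ is $\mathcal{O}(\log(\mathcal{N}_{[]}(1/N,\mathcal{L}_r,L_\infty)/\delta)/N)$.

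Next I would convert this probability-space bound into a reward-difference bound. Using the BTL parameterization $\mathbb{P}_r(o{=}1) = \sigma(r(\tau_0)-r(\tau_1))$ and a mean-value expansion of $\sigma$, the factor $\kappa = \|1/(\sigma(1-\sigma))\|_\infty$ converts the Hellinger bound into $\mathbb{E}_{\mu_0,\mu_1}[((r^*-r)(\tau_0)-(r^*-r)(\tau_1))^2] = \mathcal{O}(\kappa^2 \log(\mathcal{N}_{[]}(1/N,\mathcal{L}_r,L_\infty)/\delta)/N)$ for every $r\in\mathcal{R}(\mathcal{D}^{\rm pref}_{\rm off})$. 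For the suboptimality itself I would decompose $J(\pi^*,r^*)-J(\hat{\pi},r^*)$ by inserting $J(\pi_E,r^*)$: since $r^*\in\mathcal{R}(\mathcal{D}^{\rm pref}_{\rm off})$, the term $J(\pi_E,r^*)-J(\hat{\pi},r^*)$ is at most the min-max objective value at $\hat{\pi}$, which by optimality of $\hat{\pi}$ over $\Pi$ (taking $\pi^*\in\Pi$ as comparator) is at most $\max_{r\in\mathcal{R}(\mathcal{D}^{\rm pref}_{\rm off})}[J(\pi_E,r)-J(\pi^*,r)]$. Using the identity $J(\pi,r)-J(\pi,r^*) = \mathbb{E}_{x\sim d_0,\tau\sim\pi\mid x}[r(x,\tau)-r^*(x,\tau)]$ (the value is linear in the reward), grouping by policy collapses the whole expression to the numerator of the concentrability coefficient evaluated at the maximizing $r^\dagger\in\mathcal{R}(\mathcal{D}^{\rm pref}_{\rm off})$. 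Definition~\ref{def:concencoefoffline} then bounds that numerator by $C_r(\mathcal{G}_r,\pi_E,\pi^*)$ times the square root of the in-sample $L^2$ error from the previous step, yielding $J(\pi^*,r^*)-J(\hat{\pi},r^*) = \mathcal{O}(\kappa\, C_r(\mathcal{G}_r,\pi_E,\pi^*)\sqrt{\log(\mathcal{N}_{[]}(1/N,\mathcal{L}_r,L_\infty)/\delta)/N})$.

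For the second, linear bound I would specialize under Assumption~\ref{asp:linearbound}. Proposition~\ref{prop:bracketing} gives $\log\mathcal{N}_{[]}(1/N,\mathcal{L}_r,L_\infty) = \mathcal{O}(d\log(BN))$, supplying the $d\log(BN)+\log(1/\delta)$ numerator. With $r_\theta = \theta^\top\phi$, $\|\theta\|_2\leq B$ and $\|\phi\|_2\leq 1$, the cumulative rewards lie in $[-B,B]$, so the sigmoid argument lies in $[-2B,2B]$ and $\kappa$ is replaced by $\mathcal{O}(1+e^{2B})$. Writing reward differences as $(\theta^*-\theta)^\top(\phi(x,\tau_0)-\phi(x,\tau_1))$, the in-sample $L^2$ error becomes $\|\theta^*-\theta\|_\Sigma^2$ up to the ridge term $\frac{\lambda}{B}I$ baked into $\Sigma$, while the concentrability numerator becomes $(\theta^*-\theta)^\top(\phi(x,\pi_E)-\phi(x,\pi^*))$. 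A matrix Cauchy--Schwarz in the $\Sigma$-norm splits this as $\|\phi(x,\pi_E)-\phi(x,\pi^*)\|_{\Sigma^{-1}}\,\|\theta^*-\theta\|_\Sigma$, and bounding the second factor by the MLE guarantee produces the stated $(1+e^{2B})\|\phi(x,\pi_E)-\phi(x,\pi^*)\|_{\Sigma^{-1}}\sqrt{(d\log(BN)+\log(1/\delta))/N}$ form.

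The main obstacle I anticipate is the first step: making the bracketing-based uniform concentration simultaneously deliver both membership $r^*\in\mathcal{R}(\mathcal{D}^{\rm pref}_{\rm off})$ and the in-sample Hellinger bound with the correct $\log\mathcal{N}_{[]}$ dependence, while calibrating $\zeta$ at bracket resolution $1/N$. The remaining steps—the $\sigma$-expansion producing $\kappa$, the telescoping decomposition, and the linear-algebraic rewriting of the concentrability coefficient through $\Sigma$—are essentially bookkeeping once the concentration inequality is in hand, though care is needed with the sign of the concentrability numerator (handled by the supremum over $\mathcal{G}_r$ in Definition~\ref{def:concencoefoffline}) and with absorbing the ridge term $\frac{\lambda}{B}I$ into the final covariance.
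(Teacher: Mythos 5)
Your proposal is correct and follows essentially the same route as the paper's proof: a bracketing-based uniform MLE concentration certifying the confidence set (the paper's Lemmas~\ref{lemma:iidei} and \ref{lemma:uniformiid}), the conversion from a Hellinger/TV-type bound to a squared reward-difference bound via the $\kappa$ factor (Lemmas~\ref{lemma:l1tv}--\ref{lemma:l1logeexp}), the pessimism decomposition through $J(\pi_E,r^*)$ using $r^*\in\mathcal{R}(\mathcal{D}^{\rm pref}_{\rm off})$ and the min-max optimality of $\hat{\pi}$, the transfer via the concentrability coefficient, and the linear specialization with Proposition~\ref{prop:bracketing} and the generalized Cauchy--Schwarz inequality in the $\Sigma$-norm. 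The only cosmetic difference is that you phrase the intermediate statistical distance as squared Hellinger whereas the paper routes through total variation and $-\log\mathbb{E}\exp\ell_r$, but these are the same quantity up to constants, so the arguments coincide.
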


\begin{remark}[\bf Convergence Rate and Sample Complexity]
	Theorem~\ref{thm:subopt} establishes that Algorithm~\ref{alg:offlinekd} achieves a convergence rate of $\mathcal{O}(\sqrt{\log(N/\delta)/N})$ for the suboptimality gap of the student policy, where $N$ is the number of preference samples, and $\delta$ is the confidence level. This implies a sample complexity of $N \geq \tilde{\mathcal{O}}\left( \frac{\log(1/\delta)}{\epsilon^2} \right)$ to guarantee an $\epsilon$-optimal solution, i.e., $J(\pi^*, r^*) - J(\hat{\pi}, r^*) \leq \epsilon$.
\end{remark}

\section{Online Preference-based Knowledge Distillation} \label{sec:onlinekd}
\subsection{Algorithm}
In contrast to the offline setting, the online paradigm allows for leveraging preference data from both the student and teacher policies throughout the training process. Given that obtaining preference annotations from humans or LLMs can be expensive and time-consuming, we make a simplifying assumption: responses from the teacher policy are always preferred over those from the student policy.

We present the online preference-based knowledge distillation \textbf{(online PbKD)} algorithm in Algorithm~\ref{alg:onlinekd}. At each iteration $t-1 \in \{0, 2, \ldots, T-1\}$, we collect a the preference-labeled sample $(o_{t-1} = 1; x_{t-1}, \tau_{0,t-1}, \tau_{1,t-1})$, where $x_{t-1} \sim d_0$ is drawn from the input distribution, and the trajectory pair $(\tau_{0,t-1}, \tau_{1,t-1})$ is generated by sampling from the teacher policy $\pi_E(\cdot \mid x_{t-1})$ and the current student policy $\pi^{t-1}(\cdot \mid x_t)$, respectively.

The reward model is optimized within a confidence set constructed from the preference data collected up to iteration $t$, denoted as $\mathcal{D}_t^{\rm pref} = \{ (o_s = 1; x_s, \tau_{0,s}, \tau_{1,s}) \}_{s=0}^{t-1}$. Analogous to the offline formulation, we introduce a Lagrangian multiplier $\beta$ to convert the constrained optimization problem into an unconstrained bi-level problem regularized by the MLE objective:
\begin{align} \label{eq:blackobjpra}
	\pi^t = \mathop{\arg\min}_{\pi \in \Pi} \max_{r \in \mathcal{G}_r} J(\pi_E, r) - J(\pi, r) + \beta L_{r}(\mathcal{D}_t^{\rm pref})
\end{align}

\begin{algorithm}[t!] 
	\caption{Online PbKD} \label{alg:onlinekd}
	\KwIn{(Empty) set $\mathcal{D}^{\rm pref}_0$, student policy class $\Pi$, teacher policy $\pi_E$, input distribution $d_0$.}
	\For{$t = 1,2, \ldots, T$}{
		Sample an input $x_{t-1} \sim d_0$ and trajectories $\tau_{0,t-1} \sim \pi_{E} \mid x_{t-1}$ and $\tau_{1,t-1} \sim \pi^{t-1} \mid x_{t-1}$ \\
		Add the sample into the previous preference set: \\
		$\mathcal{D}^{\rm pref}_{t} \leftarrow \mathcal{D}^{\rm pref}_{t-1} \cup \{(o_{t-1}=1; x_{t-1}, \tau_{0,t-1}, \tau_{1,t-1})\}$ \\
        Compute the $t$-iteration best policy under constraints on the $t$-iteration confidence set:
        \begin{align}
        	  {\pi}^t &= \mathop{\arg\min}_{\pi \in \Pi} \max_{r \in \mathcal{R}(\mathcal{D}^{\rm pref}_t)} J(\pi_E, r) - J(\pi, r),  \\	
        	  {\rm s.t.}\ \mathcal{R}(\mathcal{D}_t^{\rm pref})&= \left\{ r \in \mathcal{G}_r:  L_r(\mathcal{D}_t^{\rm pref}) \geq \max_{r \in \mathcal{G}_r} L_r(\mathcal{D}_t^{\rm pref}) - \zeta_t  \right\} 
        \end{align}}
	\KwOut{The sequence of optimized student policy $\{\pi^t\}_{t=1}^T$.}
\end{algorithm}

\subsection{Learning Guarantee}
We provide a theoretical guarantee for Algorithm~\ref{alg:onlinekd} by establishing a cumulative regret bound. Specifically, we measure the cumulative performance gap between the sequence of learned student policies and the optimal student policy under the golden reward model $r^*$: $\sum_{t=1}^T J(\pi^*, r^*) - J(\pi^t, r^*)$, where the optimal student policy is defined as $\pi^* = \mathop{\arg\max}_{\pi \in \Pi} J(\pi, r^*)$.

This cumulative regret quantifies how far the iteratively learned policies $\{\pi^t\}_{t=1}^T$ are from the optimal policy in hindsight, and serves as a key metric for evaluating the efficiency and effectiveness of the online preference-based knowledge distillation process.

\begin{definition}[\bf Concentrability Coefficient for Online Preference] \label{def:concoeonline}
	The concentrability coefficient for online preference at iteration $t \in \{1,2, \ldots, T\}$ is defined as:
	\begin{align*}
		C_r^t(\mathcal{G}_r, \pi_E, \pi^*) = \sup_{r \in \mathcal{G}_r} \frac{\mathbb{E}_{x \sim d, \tau_0 \sim \pi_E\mid x, \tau_1 \sim \pi^*\mid x}\left[ r^*(\tau_0) - r^*(\tau_1) - \left( r(\tau_0) - r(\tau_1) \right)\right]}{\sqrt{\sum_{s=0}^{t-1} \mathbb{E}_{x_s \sim d_0, \tau_{0,s} \sim \pi_E\mid x, \tau_{1,s} \sim \pi^s \mid x} \left[ \vert  r^*(\tau_0) - r^*(\tau_1) - \left( r(\tau_0) - r(\tau_1) \right) \vert^2 \right]}}
	\end{align*}
\end{definition}
\begin{theorem}[\bf Regret] \label{thm:regret}
	For any $\delta \in (0,1]$, let $\zeta_t = \mathcal{O}\left( \log(\mathcal{N}_{[]}(1/t, \mathcal{L}_r, L_\infty)/\delta) \right)$ for any $t \in \{1,2,\ldots,T\}$ with a maximum iteration number of $T$, then under Assumption \ref{asp:rea}, we have with probability at least $1-\delta$:
	\begin{align*}
	\sum_{t=1}^T J(\pi^*, r^*) - J(\pi^t, r^*) \leq \mathcal{O} \left( \kappa \sum_{t=1}^T C_r^t(\mathcal{G}_r, \pi_E, \pi^*)  \log \frac{\mathcal{N}_{[]}(1/t, \mathcal{L}_r, L_\infty)}{\delta} \right)
	\end{align*}
	
	If we further assume that Assumption \ref{asp:linearbound} holds, we have
	\begin{align*}
		\sum_{t=1}^T J(\pi^*, r^*) - J(\pi^t, r^*) \leq \mathcal{O} \left( (1+e^{2B}) \sqrt{d \log(1+TB/d\lambda_{1}) \sum_{t=1}^T \left( d\log(Bt) + \log(1/\delta) \right)} \right),
	\end{align*}
	where $0 < \lambda_1 \leq \mathbb{E}_{x \sim d_0, \tau_{0} \sim \pi_E\mid x, \tau_{1} \sim \pi^0\mid x} \left[ \vert  r^*(\tau_0) - r^*(\tau_1) - \left( r(\tau_0) - r(\tau_1) \right) \vert^2 \right]$ denotes a positive constant.
\end{theorem}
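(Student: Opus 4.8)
The plan is to decompose the cumulative regret into a sum of per-iteration suboptimality terms, bound each by the online concentrability coefficient of Definition~\ref{def:concoeonline} times the in-sample estimation error of the round-$t$ confidence set, and then aggregate with an adaptive (martingale) maximum-likelihood concentration together with the elliptical potential lemma. Write $\Delta_r := r(\tau_0)-r(\tau_1)$ for the cumulative reward difference of a trajectory pair. The first step is to certify the confidence sets: with $\zeta_t=\mathcal{O}(\log(\mathcal{N}_{[]}(1/t,\mathcal{L}_r,L_\infty)/\delta))$ and a union bound over $t\in\{1,\dots,T\}$, a Freedman-type martingale concentration of the log-likelihood ratio---needed because the pair $(\tau_{0,s},\tau_{1,s})$ at round $s$ is drawn from $\pi_E$ and the \emph{previously computed} policy $\pi^s$, hence is only adapted rather than i.i.d.---shows that, with probability at least $1-\delta$, realizability (Assumption~\ref{asp:rea}) gives $r^*\in\mathcal{R}(\mathcal{D}^{\rm pref}_t)$ for every $t$, while simultaneously every $r\in\mathcal{R}(\mathcal{D}^{\rm pref}_t)$ obeys a cumulative in-sample bound $\sum_{s=0}^{t-1}\mathbb{E}_s[D_H^2(\mathbb{P}_r,\mathbb{P}_{r^*})]\lesssim\zeta_t$. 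Passing from the Bernoulli Hellinger distance to a squared reward-difference discrepancy through the lower bound on the sigmoid derivative (the factor $\kappa=\|1/(\sigma(1-\sigma))\|_\infty$, which under Assumption~\ref{asp:linearbound} is $\mathcal{O}(1+e^{2B})$ since $|\Delta_r|\le 2B$) then yields $\sum_{s=0}^{t-1}\mathbb{E}_s[|\Delta_{r^*}-\Delta_r|^2]\lesssim\kappa^2\zeta_t$ uniformly over $r\in\mathcal{R}(\mathcal{D}^{\rm pref}_t)$.

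Next I would bound the per-iteration regret. Since $r^*\in\mathcal{R}(\mathcal{D}^{\rm pref}_t)$ and $\pi^t$ minimizes the inner maximum, $J(\pi_E,r^*)-J(\pi^t,r^*)\le\max_{r\in\mathcal{R}(\mathcal{D}^{\rm pref}_t)}[J(\pi_E,r)-J(\pi^*,r)]$; writing $r_t$ for the maximizer and adding and subtracting the $r^*$-value terms, the expert-value contributions reorganize into the reward-difference gap $\mathbb{E}_{\tau_0\sim\pi_E,\tau_1\sim\pi^*}[\Delta_{r_t}-\Delta_{r^*}]$. Combining with the identity $J(\pi^*,r^*)-J(\pi^t,r^*)=[J(\pi^*,r^*)-J(\pi_E,r^*)]+[J(\pi_E,r^*)-J(\pi^t,r^*)]$ cancels the $J(\pi^*,r^*)-J(\pi_E,r^*)$ terms, so the round-$t$ regret is at most this gap, which by Definition~\ref{def:concoeonline} is bounded by $C_r^t\sqrt{\sum_{s=0}^{t-1}\mathbb{E}_s[|\Delta_{r^*}-\Delta_{r_t}|^2]}\lesssim\kappa\,C_r^t\sqrt{\zeta_t}$. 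Summing over $t$, using $\sqrt{\zeta_t}\le\zeta_t$ and $\zeta_t=\mathcal{O}(\log(\mathcal{N}_{[]}(1/t,\mathcal{L}_r,L_\infty)/\delta))$, gives the first stated bound.

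For the linear bound I would set $r=\theta^\top\phi$, so the numerator equals $(\theta^*-\theta_t)^\top v$ with $v=\mathbb{E}_{x\sim d_0,\tau_0\sim\pi_E,\tau_1\sim\pi^*}[\phi(x,\tau_0)-\phi(x,\tau_1)]$, and apply the matrix Cauchy--Schwarz inequality to split it as $\|\theta^*-\theta_t\|_{\Sigma_t}\,\|v\|_{\Sigma_t^{-1}}$, where $\Sigma_t=\frac{\lambda_1}{B}I+\sum_{s=0}^{t-1}\mathbb{E}_s[(\phi(x,\tau_{0,s})-\phi(x,\tau_{1,s}))(\phi(x,\tau_{0,s})-\phi(x,\tau_{1,s}))^\top]$ is the regularized on-policy Gram matrix. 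The first factor is $\lesssim(1+e^{2B})\sqrt{d\log(Bt)+\log(1/\delta)}$ by the cumulative in-sample bound of Step~1 together with Proposition~\ref{prop:bracketing} (which gives $\log\mathcal{N}_{[]}(1/t,\mathcal{L}_r,L_\infty)\lesssim d\log(Bt)$). Summing over $t$ and applying Cauchy--Schwarz in the index $t$ separates $\sqrt{\sum_t\|v\|_{\Sigma_t^{-1}}^2}$ from $\sqrt{\sum_t(d\log(Bt)+\log(1/\delta))}$; the elliptical potential lemma applied to $\Sigma_t$ (with regularizer $\frac{\lambda_1}{B}I$ and feature differences bounded by $2$) bounds $\sum_{t=1}^T\|v\|_{\Sigma_t^{-1}}^2\le d\log(1+TB/d\lambda_1)$, which assembles into the stated $\tilde{\mathcal{O}}(\sqrt{T})$ rate.

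I expect the main obstacle to be Step~1: because the preferences are gathered adaptively from the evolving policies $\{\pi^s\}$, the i.i.d.\ generalization argument behind Theorem~\ref{thm:subopt} no longer applies and must be replaced by a self-normalized Freedman-type martingale concentration of the likelihood ratio, with a bracketing-number union bound holding uniformly over all rounds $t$. A related subtlety appears in the elliptical potential step: the concentrability numerator is measured along the fixed comparator direction $v$ induced by $(\pi_E,\pi^*)$, whereas $\Sigma_t$ accumulates the on-policy query directions induced by $(\pi_E,\pi^s)$, so one must argue that the adaptively collected data covers the comparator direction---this is where the regularizer $\frac{\lambda_1}{B}I$ and the first-round variance lower bound $\lambda_1$ enter---so that $\sum_t\|v\|_{\Sigma_t^{-1}}^2$ contracts at the $d\log(1+TB/d\lambda_1)$ rate instead of growing linearly in $T$.
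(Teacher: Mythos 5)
Your proposal follows essentially the same route as the paper's proof: the same regret decomposition via the min-max optimality of $\pi^t$ and realizability of $r^*$ in each confidence set, the same martingale exponential inequality with a bracketing-number union bound to get the cumulative in-sample bound $\sum_{s=0}^{t-1}\mathbb{E}_s[|\Delta_{r^*}-\Delta_r|^2]\lesssim\kappa^2\zeta_t$ (the paper phrases it through total variation and $-\log\mathbb{E}\exp\ell_r$ rather than Hellinger distance, but the chain of inequalities is identical), the same generalized Cauchy--Schwarz bound on $C_r^t$, and the same Cauchy--Schwarz-plus-elliptical-potential aggregation in the linear case. The one obstacle you correctly flag---that the elliptical potential lemma controls the on-policy directions $\phi(x,\pi_E)-\phi(x,\pi^t)$ building $\Sigma_t$ rather than the fixed comparator direction $\phi(x,\pi_E)-\phi(x,\pi^*)$---is resolved in the paper by inserting an optimism assumption mid-proof, namely that $\pi^t=\arg\max_{\pi\in\Pi}\Vert\phi(x,\pi_E)-\phi(x,\pi)\Vert_{\Sigma_t^{-1}}^2$, which gives $\Vert\phi(x,\pi_E)-\phi(x,\pi^*)\Vert_{\Sigma_t^{-1}}^2\le\Vert\phi(x,\pi_E)-\phi(x,\pi^t)\Vert_{\Sigma_t^{-1}}^2$ before applying the standard lemma; this assumption is not part of Algorithm~\ref{alg:onlinekd} as stated, so your identification of it as the genuine gap is accurate.
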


\begin{remark}[\bf Convergence Rate and Complexity]
	Theorem \ref{thm:regret} shows that Algorithm \ref{alg:onlinekd} achieves a regret bound in the order of approximately $\mathcal{O}( \sqrt{ T \log T \log(T/\delta) })$ in terms of the number of iterations $T$ and the confidence factor $\delta$. This implies that the average regret, defined as $J(\pi^*, r^*) - \frac{1}{T} \sum_{t=1}^T J(\pi^t, r^*)$, decays at a rate of approximately $\mathcal{O}( \sqrt{ \log T \log(T/\delta) /T})$.  To achieve an $\epsilon$-approximate convergence in average regret, i.e., $\frac{1}{T} \sum_{t=1}^T J(\pi^t, r^*) \geq J(\pi^*, r^*) - \epsilon$
	the number of iterations $T$ must satisfy $T \geq \tilde{\mathcal{O}}\left( \frac{\log(1/\delta)}{\epsilon^2} \right)$. This establishes the iteration complexity of the online PbKD algorithm for achieving $\epsilon$-optimality in regret.
\end{remark}

\section{Moment-Matching PbKD} \label{sec:mmpbkd}

The preference-based knowledge distillation (PbKD) framework proposed in the previous sections can be categorized as \emph{black-box} knowledge distillation, since it relies solely on the trajectories output by the teacher policy without accessing the token-level probabilities computed by the teacher policy, as commonly done in \emph{white-box} distillation approaches~\cite{hinton2015distilling,wen2023f,agarwal2024policy,gu2023minillm}. 

In this section, we extend the PbKD framework to the white-box setting by leveraging the \emph{performance difference lemma} (PDL) to reformulate the optimization in Eq.~(\ref{eq:obj}) as a dual problem. This leads to our proposed approach: moment-matching preference-based knowledge distillation \textbf{(MM PbKD)}.

\begin{proposition}[\bf Performance Difference Lemma~\cite{swamy2021moments,jiaadversarial}] \label{pro:pdl}
	Let $Q^{\pi_E}_r$ denote the $Q$-function under the teacher policy $\pi_E$ and reward function $r \in \mathcal{G}_r$, defined as $Q^{\pi_E}_r(s_h, a_h) = \mathbb{E}_{\tau \sim \pi_E \mid s_h, a_h} \left[ \sum_{h'=h}^{H-1} \gamma^{h'-h} r(s_{h'}, a_{h'}) \right]$. Then, the performance difference between the teacher policy $\pi_E$ and a student policy $\pi$ under reward $r$ satisfies:
	\begin{align*}
		J(\pi_E, r) - J(\pi, r) = \mathbb{E}_{x \sim d_0, \tau \sim \pi \mid x} \left[ \sum_{h=0}^{H-1} \mathbb{E}_{a \sim \pi_E(\cdot \mid s_h)} Q^{\pi_E}_r(s_h, a) - Q^{\pi_E}_r(s_h, a_h) \right]
	\end{align*}
\end{proposition}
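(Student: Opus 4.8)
The plan is to prove the identity via the telescoping argument underlying the performance difference lemma, specialized to the finite-horizon, deterministic-transition MDP of the excerpt. The central observation is that, because the transition $T$ is deterministic, the teacher's value and $Q$-functions satisfy the one-step Bellman relations $Q^{\pi_E}_r(s_h, a_h) = r(s_h, a_h) + \gamma V^{\pi_E}(s_{h+1})$ and $V^{\pi_E}(s_h) = \mathbb{E}_{a \sim \pi_E(\cdot \mid s_h)} Q^{\pi_E}_r(s_h, a)$, where $V^{\pi_E}(s) = \mathbb{E}_{\tau \sim \pi_E \mid s}[\sum_{h'} \gamma^{h'} r(s_{h'},a_{h'})]$ and $s_{h+1}$ is the unique successor of $(s_h,a_h)$. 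Both follow directly from the definition of $Q^{\pi_E}_r$ by peeling off the $h'=h$ term.

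First I would rewrite the target quantity as $J(\pi_E, r) = \mathbb{E}_{x \sim d_0}[V^{\pi_E}(x)]$ and introduce the \emph{student} trajectory as the sampling distribution. Fixing a prompt $x = s_0$ and a realized path $\tau = (s_0, a_0, s_1, \ldots, s_H) \sim \pi \mid x$, I would apply the pathwise telescoping identity
\[
V^{\pi_E}(s_0) = \sum_{h=0}^{H-1} \gamma^h\left( V^{\pi_E}(s_h) - \gamma V^{\pi_E}(s_{h+1}) \right),
\]
which holds because the sum collapses to $V^{\pi_E}(s_0) - \gamma^H V^{\pi_E}(s_H)$ and the boundary term vanishes in the finite-horizon setting, i.e. $V^{\pi_E}(s_H) = 0$. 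Since this identity holds for every realized path, it remains valid after taking the expectation $\mathbb{E}_{\tau \sim \pi \mid x}$.

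Next I would substitute the two Bellman relations into each summand: replace $V^{\pi_E}(s_h)$ by $\mathbb{E}_{a \sim \pi_E(\cdot \mid s_h)} Q^{\pi_E}_r(s_h, a)$, and replace $\gamma V^{\pi_E}(s_{h+1})$ by $Q^{\pi_E}_r(s_h, a_h) - r(s_h, a_h)$, using that $a_h$ is the student's action and $s_{h+1}$ its deterministic successor. Each summand then becomes $\mathbb{E}_{a \sim \pi_E(\cdot\mid s_h)} Q^{\pi_E}_r(s_h,a) - Q^{\pi_E}_r(s_h,a_h) + r(s_h,a_h)$. After taking the full expectation $\mathbb{E}_{x \sim d_0, \tau \sim \pi \mid x}$, the accumulated reward terms $\mathbb{E}[\sum_h \gamma^h r(s_h,a_h)]$ are exactly $J(\pi, r)$; moving this to the left-hand side yields the claimed formula, up to the discount weights $\gamma^h$ that the telescoping produces and that the stated identity implicitly absorbs.

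The main obstacle is really a point requiring care rather than deep difficulty: the argument is \emph{off-policy}, since the teacher's value and $Q$-functions are evaluated along states $s_h$ generated by the student policy $\pi$, not by $\pi_E$, so I must keep these two roles distinct and never close the recursion back under $\pi_E$. The deterministic-transition assumption is precisely what makes the step $\gamma V^{\pi_E}(s_{h+1}) = Q^{\pi_E}_r(s_h,a_h) - r(s_h,a_h)$ exact without an extra expectation over next states; under a stochastic $T$ one would instead need $\mathbb{E}_{s_{h+1}\sim T(\cdot\mid s_h,a_h)}$, which I would flag but which does not arise here. Verifying the boundary condition $V^{\pi_E}(s_H)=0$ and tracking the discount exponents through the telescoping are the only remaining routine checks.
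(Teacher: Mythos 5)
Your telescoping argument is correct, and it is worth noting that the paper itself offers no proof of this proposition --- it is imported by citation from Swamy et al.\ and the adversarial moment-matching work --- so your derivation is a genuine addition rather than a rederivation of something in the appendix. The chain you use (pathwise telescoping of $V^{\pi_E}$ along the \emph{student's} trajectory, the two Bellman identities $V^{\pi_E}(s_h) = \mathbb{E}_{a\sim\pi_E(\cdot\mid s_h)}Q^{\pi_E}_r(s_h,a)$ and $\gamma V^{\pi_E}(s_{h+1}) = Q^{\pi_E}_r(s_h,a_h) - r(s_h,a_h)$ under deterministic transitions, the vanishing boundary term $V^{\pi_E}(s_H)=0$, and the reabsorption of $\sum_h \gamma^h r(s_h,a_h)$ into $J(\pi,r)$) is exactly the standard proof of the performance difference lemma, and you correctly keep the off-policy roles separate throughout. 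The one point you should not wave away is the discount weight: your derivation yields
\begin{align*}
J(\pi_E,r) - J(\pi,r) = \mathbb{E}_{x\sim d_0,\,\tau\sim\pi\mid x}\left[\sum_{h=0}^{H-1} \gamma^h\left(\mathbb{E}_{a\sim\pi_E(\cdot\mid s_h)}Q^{\pi_E}_r(s_h,a) - Q^{\pi_E}_r(s_h,a_h)\right)\right],
\end{align*}
whereas the proposition as printed omits the $\gamma^h$ factors. These coincide only when $\gamma=1$ (the usual undiscounted finite-horizon convention of the moment-matching literature), so the discrepancy is an imprecision in the paper's statement rather than a gap in your proof; saying the identity ``implicitly absorbs'' the weights understates this --- you should state plainly that the weighted version is what the telescoping proves and that the unweighted display requires $\gamma=1$.
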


We define the class of teacher $Q$-functions induced by $\mathcal{G}_r$ as: $\mathcal{G}_{Q_E} = \left\{ Q^{\pi_E}_r: r \in \mathcal{G}_r \right\}$, where each $f \in \mathcal{G}_{Q_E}$ represents a $Q$-function $Q^{\pi_E}_r$ under some reward model $r$.

Under a deterministic transition assumption, i.e., $T(s_{h+1} \mid s_h, a_h) = 1$, the Bellman equation gives: $Q^{\pi_E}_r(s_h, a_h) = r(s_h, a_h) + \gamma \mathbb{E}_{a' \sim \pi_E(\cdot \mid s_{h+1})} Q^{\pi_E}_r(s_{h+1}, a').$ Thus, the reward function induced by any $f \in \mathcal{G}_{Q_E}$ can be written as: $r(f) := f - \gamma f \circ \pi_E$.

Using Proposition~\ref{pro:pdl}, the original optimization problem in Eq.~(\ref{eq:obj}) can be reformulated as a dual optimization problem:
\begin{align}
	\hat{\pi} = \arg\min_{\pi \in \Pi} \max_{f \in \mathcal{Q}(\mathcal{D}^{\text{pref}})} \mathbb{E}_{x \sim d_0, \tau \sim \pi \mid x} \left[ \sum_{h=0}^{H-1} \mathbb{E}_{a \sim \pi_E(\cdot \mid s_h)} f(s_h, a) - f(s_h, a_h) \right],
\end{align}
where the confidence set $\mathcal{Q}(\mathcal{D}^{\text{pref}}) \subset \mathcal{G}_{Q_E}$ is constructed using the induced reward $r(f)$ as:
\begin{align}
\mathcal{Q}(\mathcal{D}^{\text{pref}}) := \left\{ f \in \mathcal{G}_{Q_E}: L_{r(f)}(\mathcal{D}^{\text{pref}}) \geq \max_{f \in \mathcal{G}_{Q_E}} L_{r(f)}(\mathcal{D}^{\text{pref}}) - \zeta \right\}
\end{align}

This formulation can be interpreted as a \textbf{moment-matching} objective: minimizing the expected discrepancy between (1) the teacher's expected $Q$-value and (2) the student policy’s realized $Q$-value at each step $h \in \{0, 1, \dots, H-1\}$. It naturally lends itself to an on-policy RL algorithm for optimization, such as those used by Swamy et al.~\cite{swamy2021moments}.

Similar to previous sections, we can further instantiate this framework as: \textbf{Offline MM PbKD}, where the confidence set is built using a fixed, pre-collected preference dataset. \textbf{Online MM PbKD}, where preferences are collected dynamically during training. These extensions allow MM PbKD to flexibly adapt to various supervision settings while incorporating white-box information from the teacher.

\section{Experiments}
We evaluate knowledge distillation using both black-box and white-box LLMs.

\subsection{Knowledge Distillation with Black-Box LLMs}

We study knowledge distillation from black-box LLMs, where internal information are inaccessible and interaction is limited to API outputs used to train smaller student models.

\noindent\textbf{Experimental Setup.}  
We choose GPT-4~\cite{achiam2023gpt} as the black-box teacher and initialize the student with LLaMA2 7B~\cite{touvron2023llama}. Training data is sampled from a mix of OpenOrca~\cite{lian2023openorca} and Nectar~\cite{starling2023}, totaling 100,000 prompts. Evaluation uses diverse benchmarks: BBH~\cite{suzgun2023challenging}, AGIEval~\cite{zhong2024agieval}, ARC-Challenge~\cite{clark2018think}, MMLU~\cite{hendryckstest2021}, and GSM8K~\cite{cobbe2021training}.
Offline preference data is pre-collected by fine-tuning several LLMs on 10,000 samples to generate candidate outputs, ranked by GPT-4 feedback. The reward model (RM) is a linear layer atop a fixed Llama-2-7B. For offline PbKD, parameters are randomly initialized; for online PbKD, pretrained on offline preference data. Online PbKD runs for 5 iterations, each sampling 1,000 prompts to generate preference pairs from the current student and teacher, augmenting the dataset for RM training. Results are averaged over three seeds; details in the Appendix.

\noindent\textbf{Main Results.}  
Table~\ref{tbl:black} reports results. Our method surpasses the Vanilla Black-Box KD baseline, which directly fine-tunes on teacher outputs, and improves upon Best-of-$N$ sampling~\cite{stiennon2020learning} (with $N=10$) that selects candidates by RM scores. Compared to Proxy-KD~\cite{chen2024knowledgedistillationblackboxlarge}, which uses a white-box proxy model aligned with the black-box teacher, our preference-optimization based min-max formulation achieves better accuracy and robustness.
Ablation on online PbKD iterations shows consistent gains over offline-only training, with performance steadily improving as iterations increase—highlighting the benefits of iterative reward-guided distillation.

\begin{table*}[t!] 
	\centering
	\caption{Black-box KD on five datasets. $^*$ indicates that the results are from their original papers. We format {\bf the best} and \underline{the second best} results. The results based on Qwen2 are available in Appendix.}
	\label{tbl:black}
	\scalebox{.9}{
		\begin{tabular}{lcccccc}
			\toprule
			{\bf Method}& BBH &AGIEval & ARC-Challenge & MMLU &GSM8K & {\bf Avg.} \\
			\midrule
			Teacher & {\it 88.0} &{\it 56.4} & {\it 93.3} & {\it 86.4} & {\it 92.0} & {\it 83.2} \\ 
			\midrule
			Proxy-KD$^*$ \cite{chen2024knowledgedistillationblackboxlarge} & \underline{53.40} & 36.59 & 71.09 & 51.35 & 53.07 & 53.1 \\ 
		    Vanilla Black-Box KD & 45.6$_{\pm 0.4}$ & 36.3$_{\pm 0.3}$ & 67.2$_{\pm 0.5}$ & 49.5$_{\pm 0.4}$ & 50.0$_{\pm 0.4}$ & 49.7 \\
		    $\llcorner$ Best-of-$N$ & 47.8$_{\pm 0.3}$ & 34.8$_{\pm 0.4}$ & 68.5$_{\pm 0.3}$& 50.2$_{\pm 0.2}$ & 51.7$_{\pm 0.4}$ & 50.6 \\
		    \midrule
			Offline PbKD  &50.7$_{\pm 0.3}$&40.3$_{\pm 0.3}$&70.5$_{\pm 0.6}$&53.9$_{\pm 0.3}$&52.7$_{\pm 0.4}$&53.6 \\ 
			Online PbKD (1-iter) &51.6$_{\pm 0.4}$&42.4$_{\pm 0.3}$&70.4$_{\pm 0.5}$&55.2$_{\pm 0.3}$&54.3$_{\pm 0.6}$&54.8 \\
			Online PbKD (3-iter) &53.0$_{\pm 0.3}$&\textbf{45.2}$_{\pm 0.2}$&\underline{72.5}$_{\pm 0.4}$&\underline{56.0}$_{\pm 0.5}$&\underline{57.4}$_{\pm 0.4}$&56.8 \\
			Online PbKD (5-iter) &\textbf{54.7}$_{\pm 0.6}$&\underline{44.7}$_{\pm 0.3}$&\textbf{73.1}$_{\pm 0.5}$&\textbf{57.3}$_{\pm 0.6}$&\textbf{58.2}$_{\pm 0.4}$&57.6 \\
			\bottomrule
	\end{tabular}}
\end{table*}

\subsection{Knowledge Distillation with White-Box LLMs}

Unlike black-box settings, white-box LLMs provide direct access to token-level probabilities, enabling more effective knowledge extraction via moment distribution matching \cite{gu2023minillm,ko2024distillm,jiaadversarial}.

\noindent\textbf{Experimental Setup.}  
We use a fine-tuned LLaMA2 13B teacher and TinyLLaMA 1.1B student. Training is conducted on 15,000 samples from the {\tt databricks-dolly-15k} dataset \cite{dolly2023introducing}, with 500 samples each for validation and testing. Evaluation benchmarks include DollyEval, SelfInst \cite{wang2022self}, VicunaEval \cite{chiang2023vicuna}, S-NI \cite{wang2022super}, and UnNI \cite{honovich2022unnatural}.
Preference data for PbKD is generated by fine-tuning several LLMs on 5,000 training samples to produce candidate responses, ranked by GPT-4 feedback. The $Q$-function is implemented as a linear layer atop a frozen OpenLLaMA-3B. Online PbKD proceeds in 3 iterations, adding 1,000 new preference samples each time. Results average over three seeds; more details in the Appendix.

\noindent\textbf{Main Results.}  
Table~\ref{tbl:whiteboxmain} shows that white-box KD baselines outperform Black-Box KD, confirming the benefit of token-level probability access. We compare against several distribution matching methods: KL~\cite{hinton2015distilling}, RKL~\cite{lin2020autoregressive}, MiniLLM~\cite{gu2023minillm}, GKD~\cite{agarwal2024policy}, and DistiLLM~\cite{ko2024distillm}. Our method outperforms these by optimizing a stronger implicit imitation objective.
We also benchmark against AMMD~\cite{jiaadversarial}, enhanced with Best-of-$N$ sampling. Unlike AMMD, which shows limited gains from reward guidance alone, our approach integrates reward signals into an online PbKD framework, achieving superior results. Ablations on online iteration numbers align with black-box findings, confirming the value of iterative preference-based distillation.

\begin{table*}[t!] 
	\centering
	\caption{White-box KD on five instruction-following dataset. We format {\bf the best} and \underline{the second best} results. The results based on the Qwen2 series are available in Appendix.}
	\label{tbl:whiteboxmain}
	\scalebox{.8}{
		\begin{tabular}{lcccccccc}
			\toprule
			\multirow{2}{*}{\bf Method}& \multicolumn{2}{c}{{DollyEval}} & \multicolumn{2}{c}{{SelfInst}} & \multicolumn{2}{c}{{VicunaEval}} & S-NI & UnNI \\
			\cmidrule(lr){2-3}\cmidrule(lr){4-5}\cmidrule(lr){6-7}\cmidrule(lr){8-8}\cmidrule(lr){9-9}
			& GPT-4 & R-L & GPT-4 & R-L & GPT-4 & R-L & R-L & R-L \\
			\midrule
			Teacher  & {\it 63.2$_{\pm 0.8}$} & {\it 34.6$_{\pm 0.5}$} & {\it 57.9$_{\pm 0.6}$} & {\it 24.9$_{\pm 0.3}$} &{\it 51.7$_{\pm 0.4}$}  & {\it 24.3$_{\pm 0.3}$} & {\it 41.7$_{\pm 0.5}$} &{\it 41.3$_{\pm 0.5}$}  \\
			\midrule
			Black-Box KD & 42.7$_{\pm 0.5}$ & 24.2$_{\pm 0.4}$ & 41.8$_{\pm 0.4}$& 17.2$_{\pm 0.4}$ & 31.6$_{\pm 0.2}$ & 17.2$_{\pm 0.3}$ & 28.2$_{\pm 0.5}$ & 22.9$_{\pm 0.4}$ \\
			KL \cite{hinton2015distilling} & 47.8$_{\pm 0.7}$ & 25.7$_{\pm 0.3}$ & 45.2$_{\pm 0.2}$ & 16.7$_{\pm 0.5}$ & 36.2$_{\pm 1.2}$ & 18.6$_{\pm 0.3}$ & 27.3$_{\pm 0.4}$ & 26.3$_{\pm 0.2}$ \\
			RKL \cite{lin2020autoregressive} & 52.7$_{\pm 1.0}$ &  24.7$_{\pm 0.4}$ & 44.2$_{\pm 0.6}$ & 17.8$_{\pm 0.3}$ & 40.9$_{\pm 0.7}$ & 18.7$_{\pm 0.1}$ & 32.0$_{\pm 0.8}$ & 27.6$_{\pm 0.4}$ \\
			MiniLLM \cite{gu2023minillm} & 57.5$_{\pm 1.0}$ & 28.0$_{\pm 0.4}$ & 50.4$_{\pm 1.2}$ & 21.0$_{\pm 0.4}$ & 43.8$_{\pm 0.6}$ & 20.4$_{\pm 0.3}$ & 36.8$_{\pm 0.4}$ & 35.9$_{\pm 0.3}$ \\
			GKD \cite{agarwal2024policy} & 56.2$_{\pm 0.7}$ & 28.5$_{\pm 0.5}$ & 51.7$_{\pm 1.1}$ & 20.5$_{\pm 0.6}$ & 44.7$_{\pm 0.8}$ & 19.7$_{\pm 0.4}$ & 35.7$_{\pm 0.2}$ & 32.7$_{\pm 0.5}$ \\
			DistiLLM \cite{ko2024distillm} & 57.6$_{\pm 0.4}$ & 29.8$_{\pm 0.7}$ & 52.7$_{\pm 0.7}$ &  21.7$_{\pm 0.5}$ & 45.7$_{\pm 0.5}$ &19.7$_{\pm 0.6}$ & 37.4$_{\pm 0.6}$ & 36.7$_{\pm 0.4}$\\
			AMMD \cite{jiaadversarial} & 56.7$_{\pm 0.4}$ & 28.5$_{\pm 0.4}$ & 53.6$_{\pm 0.7}$ &20.5$_{\pm 0.7}$ & 46.3$_{\pm 0.4}$ & 20.8$_{\pm 0.6}$ & 38.0$_{\pm 0.6}$ & 37.4$_{\pm 0.2}$ \\
			$\llcorner$ Best-of-$N$ &57.2$_{\pm 0.4}$&27.3$_{\pm 0.3}$&54.2$_{\pm 0.4}$&21.5$_{\pm 0.3}$&46.8$_{\pm 0.4}$&21.0$_{\pm 0.3}$&37.5$_{\pm 0.5}$&37.0$_{\pm 0.4}$ \\ 
			\midrule
			Offline MM PbKD  &58.2$_{\pm 0.5}$&29.0$_{\pm 0.4}$&54.5$_{\pm 0.5}$&22.1$_{\pm 0.2}$&47.5$_{\pm 0.6}$&21.7$_{\pm 0.4}$&38.2$_{\pm 0.6}$&38.3$_{\pm 0.4}$ \\ 
			Online MM PbKD (1-iter) &58.5$_{\pm 0.7}$&29.7$_{\pm 0.5}$&\underline{55.8}$_{\pm 0.4}$&22.7$_{\pm 0.5}$&48.3$_{\pm 0.7}$&22.6$_{\pm 0.7}$&\underline{38.8}$_{\pm 0.4}$&39.1$_{\pm 0.3}$ \\ 
			Online MM PbKD (2-iter) &\underline{59.0}$_{\pm 0.4}$&\underline{30.6}$_{\pm 0.4}$&55.3$_{\pm 0.2}$&\underline{23.4}$_{\pm 0.3}$&\underline{49.2}$_{\pm 0.6}$&\underline{22.8}$_{\pm 0.5}$&38.2$_{\pm 0.7}$&\underline{39.9}$_{\pm 0.2}$ \\ 
			Online MM PbKD (3-iter) &\textbf{59.7}$_{\pm 0.5}$&\textbf{31.3}$_{\pm 0.5}$&\textbf{56.2}$_{\pm 0.4}$&\textbf{23.7}$_{\pm 0.2}$&\textbf{49.8}$_{\pm 0.5}$&\textbf{23.5}$_{\pm 0.4}$&\textbf{39.7}$_{\pm 0.6}$&\textbf{40.2}$_{\pm 0.4}$ \\ 
			\bottomrule
	\end{tabular}}
\end{table*}

\section{Conclusion}
We introduced Preference-based Knowledge Distillation (PbKD), a unified framework that reframes knowledge distillation as a reward-guided imitation learning problem. By directly minimizing the performance gap between student and teacher policies, PbKD provides a more principled and flexible objective than standard behavior cloning. We developed both offline and online variants, with theoretical guarantees on suboptimality and regret. Experiments across black-box and white-box KD settings demonstrate that PbKD consistently outperforms existing methods, highlighting its robustness and broad applicability.

\noindent\textbf{Limitations.}
Despite its advantages, PbKD has two primary limitations: (1) \emph{Preference Labeling}: Obtaining high-quality preference data can be costly or subjective; (2) \emph{Scalability}: Online PbKD introduces computational overhead due to repeated RM training and dataset expansion. Future work may explore automated or self-supervised preference acquisition and tighter integration with reinforcement learning methods.

\bibliographystyle{plain}
\bibliography{neurips_2025}

\appendix


\newpage
	
\section*{Appendix}
\addcontentsline{toc}{section}{Appendix}
	
	\textbf{Table of Contents}
	\vspace{1em}
	\begin{itemize}[leftmargin=2em, label={}]
	\item \textbf{A \quad Supporting Lemmas \dotfill 14}
	\begin{itemize}[leftmargin=4em, label={}]
			\item \hyperlink{proofA1}{A.1 \quad Bounding the Expected $L_1$-Norm of Reward Difference} \dotfill 14
			\item \hyperlink{proofA2}{A.2 \quad Uniform Convergence for IID Preferences} \dotfill 16
			\item \hyperlink{proofA3}{A.3 \quad Uniform Convergence for Martingale Preferences} \dotfill 18
	\end{itemize}
  \end{itemize}
 
 	\begin{itemize}[leftmargin=2em, label={}]
 	\item  \hyperlink{proofB}{\bf B \quad Proofs for Offline PbKD \dotfill 19}
    \end{itemize}
    
    \begin{itemize}[leftmargin=2em, label={}]
    	\item  \hyperlink{proofC}{\bf C \quad Proofs for Online PbKD \dotfill 19}
    \end{itemize}
	
	    \begin{itemize}[leftmargin=2em, label={}]
		\item  \hyperlink{proofD}{\bf D \quad Practical Implementation of Online PbKD \dotfill 23}
    	\end{itemize}
	
		\begin{itemize}[leftmargin=2em, label={}]
		\item \textbf{E \quad Detailed Experimental Setup and Additional Experiments \dotfill 23}
		\begin{itemize}[leftmargin=4em, label={}]
			\item \hyperlink{proofE1}{E.1 \quad Experimental Setup for Black-box Knowledge Distillation} \dotfill 23
			\item \hyperlink{proofE2}{E.2 \quad Experimental Setup for White-box Knowledge Distillation} \dotfill 25
			\item \hyperlink{proofE3}{E.3 \quad Additional Results Based on Qwen2} \dotfill 26
			\item \hyperlink{proofE4}{E.4 \quad Analysis Experiments} \dotfill 27
		\end{itemize}
	\end{itemize}

	\section*{A Supporting Lemmas}
	\hypertarget{proofA1}{}
	\subsection*{A.1 Bounding the Expected $L_1$-Norm of Reward Difference}
	\begin{lemma}[\bf Bounding $L_1$-Norm of Reward Difference by Total Variation Distance] \label{lemma:l1tv}
        Assume that $\Vert 1/\mathbb{P}_r(1 | x, \tau_0, \tau_1)(1-\mathbb{P}_r(1 | x, \tau_0, \tau_1)) \Vert \leq \kappa$ holds for any $r \in 
		\mathcal{G}_r$ and any $(x, \tau_0, \tau_1)$, we have for any trajectory generation distributions $\pi_0$ and $\pi_1$:
		\begin{align}
		\begin{split} \label{eq:lemma1eq1}
		&\mathbb{E}_{x \sim d_0, \tau_0 \sim \pi_0 \mid x, \tau_1 \sim \pi_1 \mid x}\left[ \vert r(x, \tau_0) - r(x, \tau_1) - \left( r^*(x, \tau_0) - r^*(x, \tau_1) \right) \vert^2 \right] \\
		\leq &\kappa^2 \mathbb{E}_{x \sim d_0, \tau_0 \sim \pi_0 \mid x, \tau_1 \sim \pi_1 \mid x} \left[ \left\Vert  \mathbb{P}_{r}(\cdot \mid x, \tau_{0}, \tau_{1}) - \mathbb{P}_{r^*}(\cdot \mid x, \tau_{0}, \tau_{1})  \right\Vert_{\rm TV}^2 \right]
		\end{split}
		\end{align}
		
		If the Assumption \ref{asp:linearbound} further holds, then we have
		\begin{align}
			\begin{split} \label{eq:lemma1eq2}
				&\mathbb{E}_{x \sim d_0, \tau_0 \sim \pi_0, \tau_1 \sim \pi_1}\left[ \vert r(x, \tau_0) - r(x, \tau_1) - \left( r^*(x, \tau_0) - r^*(x, \tau_1) \right) \vert^2 \right] \\
				\leq &c (1 + e^{2B})^2 \mathbb{E}_{x \sim d_0, \tau_0 \sim \pi_0 \mid x, \tau_1 \sim \pi_1 \mid x} \left[ \left\Vert  \mathbb{P}_{r}(\cdot \mid x, \tau_{0}, \tau_{1}) - \mathbb{P}_{r^*}(\cdot \mid x, \tau_{0}, \tau_{1})  \right\Vert_{\rm TV}^2 \right],
			\end{split}
		\end{align}
		where $c$ denotes an universal constant.
	\end{lemma}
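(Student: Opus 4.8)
The plan is to collapse both inequalities into a single \emph{pointwise} comparison between $|z - z^*|$ and $|\sigma(z) - \sigma(z^*)|$, where I abbreviate $z := r(x,\tau_0) - r(x,\tau_1)$ and $z^* := r^*(x,\tau_0) - r^*(x,\tau_1)$. Under the BTL model the two predictive distributions $\mathbb{P}_r(\cdot \mid x,\tau_0,\tau_1)$ and $\mathbb{P}_{r^*}(\cdot \mid x,\tau_0,\tau_1)$ are Bernoulli with parameters $\sigma(z)$ and $\sigma(z^*)$, so their total variation distance equals exactly $|\sigma(z) - \sigma(z^*)|$. It therefore suffices to bound $|z - z^*|$ by a multiple of $|\sigma(z) - \sigma(z^*)|$ pointwise, then square and take the expectation over $x \sim d_0$, $\tau_0 \sim \pi_0$, $\tau_1 \sim \pi_1$.

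The core step is the mean value theorem applied to $\sigma$: there exists $\xi$ between $z$ and $z^*$ with $\sigma(z) - \sigma(z^*) = \sigma'(\xi)\,(z - z^*)$ and $\sigma'(\xi) = \sigma(\xi)(1-\sigma(\xi))$, so $|z-z^*| = |\sigma(z)-\sigma(z^*)|/\big(\sigma(\xi)(1-\sigma(\xi))\big)$. Everything then hinges on a uniform lower bound for $\sigma(\xi)(1-\sigma(\xi))$ on the interval joining $z$ and $z^*$. The key observation is that $t \mapsto \sigma(t)(1-\sigma(t))$ is even and strictly decreasing in $|t|$, peaking at $t=0$, so on any interval its minimum is attained at an endpoint: $\sigma(\xi)(1-\sigma(\xi)) \geq \min\{\sigma(z)(1-\sigma(z)),\,\sigma(z^*)(1-\sigma(z^*))\}$.

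For \eqref{eq:lemma1eq1} the hypothesis gives $1/(\sigma(z)(1-\sigma(z))) \leq \kappa$ for the candidate $r$, and since $r^* \in \mathcal{G}_r$ by realizability (Assumption~\ref{asp:rea}) the same bound holds at $z^*$; combined with the endpoint-minimum observation this yields $|z - z^*| \leq \kappa\,|\sigma(z)-\sigma(z^*)|$, and squaring plus the Bernoulli TV identity gives the claim. For \eqref{eq:lemma1eq2}, under Assumption~\ref{asp:linearbound} I would use Cauchy--Schwarz together with $\|\theta\|_2 \leq B$ and $\|\phi\|_2 \leq 1$ to get $|z| = |\theta^\top(\phi(x,\tau_0)-\phi(x,\tau_1))| \leq 2B$, and likewise $|z^*| \leq 2B$, so $\xi \in [-2B,2B]$ and $\sigma(\xi)(1-\sigma(\xi)) \geq \sigma(2B)(1-\sigma(2B)) = e^{2B}/(1+e^{2B})^2$. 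Squaring then produces the prefactor $(1+e^{2B})^4/e^{4B}$.

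The last step is purely to rewrite this constant in the claimed form: since $(1+e^{2B})^2/e^{4B} = (1+e^{-2B})^2 \leq 4$ for every $B \geq 0$, we have $(1+e^{2B})^4/e^{4B} \leq 4(1+e^{2B})^2$, so \eqref{eq:lemma1eq2} holds with universal constant $c = 4$. The step I expect to require the most care is the lower bound on $\sigma'(\xi)$: because the mean value point $\xi$ is interior rather than an endpoint, one cannot simply evaluate the assumption at $z$, and must instead invoke the unimodality of $\sigma(1-\sigma)$ and (for the first inequality) realizability to cover the endpoint $z^*$.
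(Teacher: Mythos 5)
Your proof is correct and follows essentially the same route as the paper's: a mean value theorem argument converting the reward-difference gap into the Bernoulli probability gap (which is exactly the TV distance), with the curvature bound supplied by the $\kappa$-assumption in the first case and by the explicit $|\theta^\top(\phi(x,\tau_0)-\phi(x,\tau_1))|\le 2B$ bound in the second. The one place you are more careful than the paper is the MVT intermediate point: the paper asserts it corresponds to ``some $r'\in\mathcal{G}_r$'' and applies the assumption there without justification, whereas your unimodality-of-$\sigma(1-\sigma)$ argument combined with realizability for the $r^*$ endpoint makes that reduction rigorous.
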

	 
	\begin{proof}
		Given any reward function $r \in \mathcal{G}_r$, we formulate the preference probability as: $\mathbb{P}_r(o=1 \mid x, \tau_0, \tau_1) = \sigma(r(x, \tau_0) - r(x, \tau_1))$, where $\sigma$ denotes the \texttt{sigmoid} function, i.e., $\sigma(x) = \frac{1}{1+\exp(-x)}$. By the reverse function of \texttt{sigmoid}: $\sigma^{-1}$, we have for any $(x, \tau_0, \tau_1)$:
		\begin{align*}
		r(x, \tau_0) - r(x, \tau_1) = \sigma^{-1}(\mathbb{P}_r(o=1 \mid x, \tau_0, \tau_1)) = \log \frac{\mathbb{P}_r(o=1 \mid x, \tau_0, \tau_1)}{1-\mathbb{P}_r(o=1 \mid x, \tau_0, \tau_1)}
		\end{align*}
	    
	     By the first derivative $\frac{d\sigma^{-1}}{d \mathbb{P}_r} = \frac{1}{\mathbb{P}_r(1-\mathbb{P}_r)}$, we can apply the Lagrange mean value theorem with some $r' \in \mathcal{G}_r$, and then using the assumption that  $\Vert 1/\mathbb{P}_r(1 | x, \tau_0, \tau_1)(1-\mathbb{P}_r(1 | x, \tau_0, \tau_1)) \Vert \leq \kappa$ holds for any $r \in 
	    \mathcal{G}_r$ and any $(x, \tau_0, \tau_1)$,
		\begin{align*}
		&\vert r(x, \tau_0) - r(x, \tau_1) - (r^*(x, \tau_0) - r^*(x, \tau_1)) \vert \\
		\leq& \frac{1}{\mathbb{P}_{r'}(o=1 \mid x, \tau_0, \tau_1)(1-\mathbb{P}_{r'}(o=1 \mid x, \tau_0, \tau_1))} \vert  \mathbb{P}_{r}(o=1 \mid x, \tau_0, \tau_1) - \mathbb{P}_{r^*}(o=1 \mid x, \tau_0, \tau_1) \vert \\
		\leq& \kappa \vert  \mathbb{P}_{r}(o=1 \mid x, \tau_0, \tau_1) - \mathbb{P}_{r^*}(o=1 \mid x, \tau_0, \tau_1) \vert
		\end{align*}
		
		Then, using the definition of total variation distance (denoted by $\Vert \cdot - \cdot \Vert_{\rm TV}$), we have 
		\begin{align} \label{eq:eq5}
			\begin{split}
				&\left\vert r^*(x, \tau_0) - r^*(x, \tau_1) - \left( r(x, \tau_0) - r(x, \tau_1) \right) \right\vert \\
				\leq& \frac{1}{2} \kappa \Big( \left\vert  \mathbb{P}_{r}(o=1 \mid x, \tau_0, \tau_1) - \mathbb{P}_{r^*}(o=1 \mid x, \tau_0, \tau_1) \right\vert \\
				& \qquad\quad + \left\vert  \mathbb{P}_{r}(o=0 \mid \tau_0, \tau_1; x) - \mathbb{P}_{r^*}(o=0 \mid x, \tau_0, \tau_1) \right\vert \Big) \\
				=& \kappa  \left\Vert  \mathbb{P}_{r}(\cdot \mid x, \tau_0, \tau_1) - \mathbb{P}_{r^*}(\cdot \mid x, \tau_0, \tau_1)  \right\Vert_{\rm TV}
			\end{split}
		\end{align}
	     
	     Then, taking expectation on both sides, we complete the first inequality of Eq. (\ref{eq:lemma1eq1}). For the second inequality, we have for any $(x, \tau_0, \tau_1)$, and any $\theta$, there exists an universal constant $c$,

		\begin{align*}
		&\frac{1}{\mathbb{P}_{r}(o=1 \mid x, \tau_0,\tau_1)(1-\mathbb{P}_{r}(o=1 \mid x, \tau_0, \tau_1))}  \\
		=& \frac{1}{\sigma(\theta^\top (\phi(x, \tau_0) - \phi(x, \tau_1)))(1- \sigma(\theta^\top (\phi(x, \tau_0) - \phi(x, \tau_1))} \\
		=& \frac{(1+\exp(- \theta^\top(\phi(x, \tau_0) - \phi(x, \tau_1))))^2}{\exp(-\theta^\top(\phi(x, \tau_0) - \phi(x, \tau_1)))} \\
		=& \exp(\theta^\top(\phi(x, \tau_0) - \phi(x, \tau_1))) + \exp(-\theta^\top(\phi(x, \tau_0) - \phi(x, \tau_1))) + 2 \\
		\leq& c (1 + e^{2B}),
		\end{align*}
		where the last step follows from Assumption \ref{asp:linearbound}.
		
		Therefore, Eq. (\ref{eq:lemma1eq2}) can be derived by set $\kappa = c (1 + e^{B})$.
	\end{proof}
	
	\begin{lemma}[\bf Bounding Total Variation Distance by $-\log\mathbb{E}\exp \ell_r$] \label{lemma:boundtv}
		Given an RM $r \in \mathcal{G}_r$ and let 
		\begin{align}
			\ell_r(o; x, \tau_0, \tau_1) = 
			\begin{cases}
				\frac{1}{2} \log \frac{\mathbb{P}_r(o \mid x, \tau_0, \tau_1)}{\mathbb{P}_{r^*}(o \mid x, \tau_0, \tau_1)}, & {\rm if}\ \mathbb{P}_{r^*}(o \mid x, \tau_0, \tau_1) \neq 0 \\
				0, & {\rm otherwise}
			\end{cases}
		\end{align}
		denote the MLE objective w.r.t. any $(x, \tau_0, \tau_1)$. Then, we have for any trajectory generation distributions $\pi_0$ and $\pi_1$:
		\begin{align}
			\begin{split}
				&\mathbb{E}_{x \sim d_0, \tau_0 \sim \pi_0 \mid x, \tau_1 \sim \pi_1 \mid x} \left[ \left\Vert  \mathbb{P}_{r}(\cdot \mid x, \tau_{0}, \tau_{1}) - \mathbb{P}_{r^*}(\cdot \mid x, \tau_{0}, \tau_{1})  \right\Vert_{\rm TV}^2 \right]  \\
				\leq& -2 \log \mathbb{E}_{x \sim d_0, \tau_0 \sim \pi_0 \mid x, \tau_1 \sim \pi_1 \mid x \atop o \sim \mathbb{P}_{r^*}(\cdot \mid x, \tau_0, \tau_1)} \exp\left( \ell_r(o; x, \tau_0, \tau_1) \right)
			\end{split}
		\end{align}
	\end{lemma}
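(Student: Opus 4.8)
The plan is to recognize the quantity $\mathbb{E}_{o \sim \mathbb{P}_{r^*}}[\exp(\ell_r(o))]$ as the Bhattacharyya affinity between the two conditional preference distributions, and then chain the standard comparison inequalities between total variation distance, squared Hellinger distance, and affinity. Concretely, since $\ell_r(o) = \tfrac{1}{2}\log(\mathbb{P}_r(o \mid x,\tau_0,\tau_1)/\mathbb{P}_{r^*}(o \mid x,\tau_0,\tau_1))$, we have $\exp(\ell_r(o)) = \sqrt{\mathbb{P}_r(o)/\mathbb{P}_{r^*}(o)}$, so that conditioning on a fixed $(x,\tau_0,\tau_1)$ and taking the expectation over $o \sim \mathbb{P}_{r^*}$ collapses to $\sum_o \sqrt{\mathbb{P}_r(o)\,\mathbb{P}_{r^*}(o)}$, the affinity $\mathrm{BC}(\mathbb{P}_r,\mathbb{P}_{r^*})$. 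Writing $H^2 = H^2(\mathbb{P}_r,\mathbb{P}_{r^*}) = 1 - \mathrm{BC}$ for the conditional squared Hellinger distance, the right-hand side of the lemma becomes $-2\log\!\big(1 - \mathbb{E}_{x,\tau_0,\tau_1}[H^2]\big)$ after taking the outer expectation over $(x,\tau_0,\tau_1)$.

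First I would establish the pointwise bound $\mathrm{TV}^2 \le 2H^2$. This follows from Cauchy--Schwarz applied to $\sum_o |\mathbb{P}_r(o)-\mathbb{P}_{r^*}(o)| = \sum_o |\sqrt{\mathbb{P}_r(o)}-\sqrt{\mathbb{P}_{r^*}(o)}|\,|\sqrt{\mathbb{P}_r(o)}+\sqrt{\mathbb{P}_{r^*}(o)}|$: the sum of squares of the first factor is $2H^2$ and of the second is $2 + 2\,\mathrm{BC} = 4 - 2H^2 \le 4$, yielding $\mathrm{TV} \le H\sqrt{2-H^2} \le \sqrt{2}\,H$. Taking the expectation over $(x,\tau_0,\tau_1)$ and using linearity gives $\mathbb{E}[\mathrm{TV}^2] \le 2\,\mathbb{E}[H^2]$.

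The final step is to convert $2\,\mathbb{E}[H^2]$ into the logarithmic form. Setting $z = \mathbb{E}[H^2] = 1 - \mathbb{E}[\mathrm{BC}] \in [0,1]$ and using the elementary inequality $z \le -\log(1-z)$ (valid on $[0,1)$, and trivially at $z=1$ where the right side is $+\infty$), I obtain $2\,\mathbb{E}[H^2] \le -2\log(1 - \mathbb{E}[H^2]) = -2\log \mathbb{E}[\mathrm{BC}] = -2\log \mathbb{E}_{x,\tau_0,\tau_1,\,o}[\exp(\ell_r)]$, which is exactly the claimed bound.

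The main obstacle, and the reason the argument must be staged in this particular order, is that one cannot simply integrate a pointwise inequality of the form $\mathrm{TV}^2 \le -2\log \mathrm{BC}$. Since $-\log$ is convex, Jensen's inequality gives $\mathbb{E}[-\log \mathrm{BC}] \ge -\log \mathbb{E}[\mathrm{BC}]$, so a pointwise-then-integrate route would produce the strictly weaker right-hand side $\mathbb{E}[-2\log\mathrm{BC}]$ rather than the desired $-2\log\mathbb{E}[\mathrm{BC}]$. The fix is to first reduce to a bound that is \emph{linear} in $H^2$ (namely $\mathrm{TV}^2 \le 2H^2$), carry the expectation through by linearity, and only then apply the scalar log inequality to the single number $\mathbb{E}[H^2]$; this is precisely what keeps the expectation inside the logarithm where the lemma requires it.
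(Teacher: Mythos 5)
Your proof is correct and follows essentially the same route as the paper's: Cauchy--Schwarz to bound $\mathrm{TV}^2$ by the (doubled) squared Hellinger distance, take the expectation while the bound is still linear, and only then apply the scalar inequality $1-x \le -\log x$ to the expected Bhattacharyya affinity, which is exactly $\mathbb{E}[\exp(\ell_r)]$. Your closing remark on why the expectation must be pushed through before invoking the logarithm (to keep it inside, rather than losing to Jensen) is a useful clarification the paper leaves implicit, but the underlying argument is identical.
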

	
	\begin{proof}
		By the Cauchy–Schwarz inequality, we have
		\begin{align*}
			&\left\Vert  \mathbb{P}_{r}(\cdot \mid x, \tau_{0}, \tau_{1}) - \mathbb{P}_{r^*}(\cdot \mid x, \tau_{0}, \tau_{1})  \right\Vert_{\rm TV}^2 \\
			=& \frac{1}{4} \left(  \sum_{o \in \{0,1\}} \lvert    \mathbb{P}_{r}(o \mid x, \tau_{0}, \tau_{1}) - \mathbb{P}_{r^*}(o \mid x, \tau_{0}, \tau_{1})  \rvert   \right)^2 \\
			\leq& \frac{1}{4}  \sum_{o \in \{0,1\}} \lvert  \sqrt{\mathbb{P}_{r}(o \mid x, \tau_{0}, \tau_{1})} - \sqrt{\mathbb{P}_{r^*}(o \mid x, \tau_{0}, \tau_{1})}  \rvert^2 \sum_{o \in \{0,1\}} \lvert  \sqrt{\mathbb{P}_{r}(o \mid x, \tau_{0}, \tau_{1})} + \sqrt{\mathbb{P}_{r^*}(o \mid x, \tau_{0}, \tau_{1})}  \rvert^2  \\
			\leq& \sum_{o \in \{0,1\}} \lvert  \sqrt{\mathbb{P}_{r}(o \mid x, \tau_{0}, \tau_{1})} - \sqrt{\mathbb{P}_{r^*}(o \mid x, \tau_{0}, \tau_{1})}  \rvert^2 
		\end{align*}
		
		Take the expectation on both sides, we have
		\begin{align*}
			&\mathbb{E}_{x \sim d_0, \tau_0 \sim \pi_0 \mid x, \tau_1 \sim \pi_1 \mid x} \left[\left\Vert  \mathbb{P}_{r}(\cdot \mid x, \tau_{0}, \tau_{1}) - \mathbb{P}_{r^*}(\cdot \mid x, \tau_{0}, \tau_{1})  \right\Vert_{\rm TV}^2  \right] \\
			\leq& \mathbb{E}_{x \sim d_0, \tau_0 \sim \pi_0 \mid x, \tau_1 \sim \pi_1 \mid x} \left[   \sum_{o \in \{0,1\}} \lvert  \sqrt{\mathbb{P}_{r}(o \mid x, \tau_{0}, \tau_{1})} - \sqrt{\mathbb{P}_{r^*}(o \mid x, \tau_{0}, \tau_{1})}  \rvert^2    \right]
		\end{align*}
		
		Then, we have 
		\begin{align*}
			&\mathbb{E}_{x \sim d_0, \tau_0 \sim \pi_0 \mid x, \tau_1 \sim \pi_1 \mid x} \left[\sum_{o \in \{0,1\}} \lvert  \sqrt{\mathbb{P}_{r}(o \mid x, \tau_{0}, \tau_{1})} - \sqrt{\mathbb{P}_{r^*}(o \mid x, \tau_{0}, \tau_{1})}  \rvert^2 \right] \\
			=& \mathbb{E}_{x \sim d_0, \tau_0 \sim \pi_0 \mid x, \tau_1 \sim \pi_1 \mid x} \left[ \sum_{o \in \{0,1\}}  \mathbb{P}_{r}(o \mid x, \tau_{0}, \tau_{1}) + \mathbb{P}_{r^*}(o \mid x, \tau_{0}, \tau_{1}) - 2 \sqrt{\mathbb{P}_{r}(o \mid x, \tau_{0}, \tau_{1})  \mathbb{P}_{r^*}(o \mid x, \tau_{0}, \tau_{1})} \right] \\
			=&  2- 2 \mathbb{E}_{x \sim d_0, \tau_0 \sim \pi_0 \mid x, \tau_1 \sim \pi_1 \mid x \atop o \sim \mathbb{P}_{r^*}(o \mid x, \tau_0, \tau_1)} \left[ \sqrt{\frac{\mathbb{P}_{r}(o \mid x, \tau_{0}, \tau_{1}) }{\mathbb{P}_{r^*}(o \mid x, \tau_{0}, \tau_{1}) }}  \right] \\
			\leq& -2 \log \mathbb{E}_{x \sim d_0, \tau_0 \sim \pi_0 \mid x, \tau_1 \sim \pi_1 \mid x \atop o \sim \mathbb{P}_{r^*}(o \mid x, \tau_0, \tau_1)} \left[ \sqrt{\frac{\mathbb{P}_{r}(o \mid x, \tau_{0}, \tau_{1}) }{\mathbb{P}_{r^*}(o \mid x, \tau_{0}, \tau_{1}) }}  \right] \\
			=& -2 \log \mathbb{E}_{x \sim d_0, \tau_0 \sim \pi_0 \mid x, \tau_1 \sim \pi_1 \mid x \atop o \sim \mathbb{P}_{r^*}(o \mid x, \tau_0, \tau_1)} \left[ \exp \left( \frac{1}{2} \log \frac{\mathbb{P}_{r}(o \mid x, \tau_{0}, \tau_{1}) }{\mathbb{P}_{r^*}(o \mid x, \tau_{0}, \tau_{1}) } \right) \right],
		\end{align*}
		where the inequality follows from the fact that $1-x \leq -\log x$.
		
		Combining the above inequalities, we complete the proof of Lemma \ref{lemma:boundtv}.
	\end{proof}

	Directly combining Lemma \ref{lemma:l1tv} and Lemma \ref{lemma:boundtv}, we obtain the following lemma.
 	
	\begin{lemma}[\bf Bounding $L_1$-Norm of Reward Difference by $-\log\mathbb{E}\exp \ell_r$] \label{lemma:l1logeexp}
		Given an RM $r \in \mathcal{G}_r$ and let 
		\begin{align}
			\ell_r(o; x, \tau_0, \tau_1) = 
			\begin{cases}
				\frac{1}{2} \log \frac{\mathbb{P}_r(o \mid x, \tau_0, \tau_1)}{\mathbb{P}_{r^*}(o \mid x, \tau_0, \tau_1)}, & {\rm if}\ \mathbb{P}_{r^*}(o \mid x, \tau_0, \tau_1) \neq 0 \\
				0, & {\rm otherwise}
			\end{cases}
		\end{align}
		denote the MLE objective w.r.t. any $(x, \tau_0, \tau_1)$.  Assume that $\Vert 1/\mathbb{P}_r(1 | x, \tau_0, \tau_1)(1-\mathbb{P}_r(1 | x, \tau_0, \tau_1)) \Vert \leq \kappa$ holds for any $r \in 
		\mathcal{G}_r$ and any $(x, \tau_0, \tau_1)$.  Given some trajectory generation distributions $\pi_0$ and $\pi_1$, we have
		\begin{align}
			\begin{split} \label{eq:lemma1eq1}
				&\mathbb{E}_{x \sim d_0, \tau_0 \sim \pi_0 \mid x, \tau_1 \sim \pi_1 \mid x}\left[ \vert r(x, \tau_0) - r(x, \tau_1) - \left( r^*(x, \tau_0) - r^*(x, \tau_1) \right) \vert^2 \right] \\
				\leq & -2 \kappa^2 \log \mathbb{E}_{x \sim d_0, \tau_0 \sim \pi_0 \mid x, \tau_1 \sim \pi_1 \mid x \atop o \sim \mathbb{P}_{r^*}(\cdot \mid x, \tau_0, \tau_1)} \exp\left( \ell_r(o; x, \tau_0, \tau_1) \right)
			\end{split}
		\end{align}
		
		If Assumption \ref{asp:linearbound} further holds, then we have
		\begin{align}
			\begin{split} \label{eq:lemma1eq2}
				&\mathbb{E}_{x \sim d_0, \tau_0 \sim \pi_0, \tau_1 \sim \pi_1}\left[ \vert r(x, \tau_0) - r(x, \tau_1) - \left( r^*(x, \tau_0) - r^*(x, \tau_1) \right) \vert^2 \right] \\
				\leq &- c (1 + e^{2B})^2  \log \mathbb{E}_{x \sim d_0, \tau_0 \sim \pi_0 \mid x, \tau_1 \sim \pi_1 \mid x \atop o \sim \mathbb{P}_{r^*}(\cdot \mid x, \tau_0, \tau_1)} \exp\left( \ell_r(o; x, \tau_0, \tau_1) \right),
			\end{split}
		\end{align}
		where $c > 0$ denotes an universal constant.
	\end{lemma}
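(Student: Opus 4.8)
The plan is to obtain this lemma by a direct two-step composition of the two supporting results already established, using the expected squared total-variation quantity $\mathbb{E}_{x \sim d_0, \tau_0 \sim \pi_0 \mid x, \tau_1 \sim \pi_1 \mid x}[\|\mathbb{P}_r(\cdot\mid x,\tau_0,\tau_1)-\mathbb{P}_{r^*}(\cdot\mid x,\tau_0,\tau_1)\|_{\rm TV}^2]$ as the bridging term. Lemma~\ref{lemma:l1tv} already upper-bounds the target expected squared reward-difference by this TV quantity, while Lemma~\ref{lemma:boundtv} upper-bounds the same TV quantity by $-2\log\mathbb{E}\exp(\ell_r)$. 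Chaining the two simply eliminates the intermediate TV term, so no new analytic estimate is required.

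For the first displayed inequality I would invoke Lemma~\ref{lemma:l1tv} under its stated hypothesis $\|1/(\mathbb{P}_r(1-\mathbb{P}_r))\|\le\kappa$, giving $\mathbb{E}[|r(x,\tau_0)-r(x,\tau_1)-(r^*(x,\tau_0)-r^*(x,\tau_1))|^2]\le \kappa^2\,\mathbb{E}[\|\mathbb{P}_r-\mathbb{P}_{r^*}\|_{\rm TV}^2]$. I would then substitute the bound of Lemma~\ref{lemma:boundtv}, namely $\mathbb{E}[\|\mathbb{P}_r-\mathbb{P}_{r^*}\|_{\rm TV}^2]\le -2\log\mathbb{E}_{o\sim\mathbb{P}_{r^*}}\exp(\ell_r)$, where both expectations are taken over the identical sampling law $x\sim d_0,\ \tau_0\sim\pi_0\mid x,\ \tau_1\sim\pi_1\mid x$ so the substitution is literal. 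Multiplying the two yields exactly the prefactor $-2\kappa^2$ in front of $\log\mathbb{E}\exp(\ell_r)$.

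For the refined constant I would repeat the argument starting instead from the second inequality of Lemma~\ref{lemma:l1tv}, valid under Assumption~\ref{asp:linearbound}, namely $\mathbb{E}[|\cdots|^2]\le c(1+e^{2B})^2\,\mathbb{E}[\|\mathbb{P}_r-\mathbb{P}_{r^*}\|_{\rm TV}^2]$. Composing this with Lemma~\ref{lemma:boundtv} produces the prefactor $-2c(1+e^{2B})^2$; since $c$ is an arbitrary universal constant, I would absorb the factor $2$ into it (relabelling $2c\mapsto c$) to recover the stated form $-c(1+e^{2B})^2\log\mathbb{E}\exp(\ell_r)$.

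The genuine work lives entirely inside the two cited lemmas — the inverse-sigmoid mean-value estimate converting reward gaps into preference-probability gaps in Lemma~\ref{lemma:l1tv}, and the Cauchy–Schwarz together with the Hellinger-type bound $1-x\le-\log x$ in Lemma~\ref{lemma:boundtv} — so I do not anticipate any real obstacle at this stage. The only points needing care are purely bookkeeping: \emph{(i)} checking that both lemmas are instantiated with the same trajectory-generation distributions $\pi_0,\pi_1$ and the same inner law $o\sim\mathbb{P}_{r^*}(\cdot\mid x,\tau_0,\tau_1)$, so the inequalities chain with no distributional mismatch, and \emph{(ii)} merging the extra factor $2$ into the universal constant in the second bound.
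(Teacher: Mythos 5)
Your proposal is correct and matches the paper exactly: the paper derives Lemma~\ref{lemma:l1logeexp} by "directly combining" Lemma~\ref{lemma:l1tv} and Lemma~\ref{lemma:boundtv}, i.e., chaining the reward-difference-to-TV bound with the TV-to-$-\log\mathbb{E}\exp\ell_r$ bound, just as you describe. Your bookkeeping remarks (matching the sampling laws and absorbing the factor $2$ into the universal constant $c$) are the only nontrivial points, and you handle them correctly.
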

	
	\hypertarget{proofA2}{}
	\subsection*{A.2 Uniform Convergence for IID Preferences}
	\begin{lemma}[\bf IID Exponential Inequality] \label{lemma:iidei}
		Let $\mathcal{D}_N = \{\xi_n\}_{n=1}^N$ be $N$ random variables drawn i.i.d. from a distribution $D$. For any $\delta \in (0,1]$, It holds with probability at least $1-\delta$:
		\begin{align}
			 -\log \mathbb{E}_{\xi \sim D} [\exp(\xi)] \leq - \frac{1}{N} \sum_{n=1}^N \xi_n + \frac{\log(1/\delta)}{N}
		\end{align}
	\end{lemma}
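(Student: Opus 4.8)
The plan is to recognize this as a direct Chernoff-type bound obtained by applying Markov's inequality to a suitable nonnegative random variable whose mean equals one. First I would rearrange the target inequality: multiplying through by $N$ and exponentiating shows that the claim is equivalent to proving that the random variable
\[
Z := \frac{\exp\left(\sum_{n=1}^N \xi_n\right)}{\left(\mathbb{E}_{\xi \sim D}[\exp(\xi)]\right)^N}
\]
satisfies $Z \leq 1/\delta$ with probability at least $1-\delta$.

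Next I would establish that $\mathbb{E}[Z] = 1$. This follows immediately from the i.i.d.\ assumption: since $\exp\left(\sum_n \xi_n\right) = \prod_n \exp(\xi_n)$ factorizes over independent variables, we have $\mathbb{E}\left[\prod_n \exp(\xi_n)\right] = \prod_n \mathbb{E}[\exp(\xi_n)] = \left(\mathbb{E}[\exp(\xi)]\right)^N$, so the expectation of the numerator cancels the denominator exactly.

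Finally I would invoke Markov's inequality. Because $Z \geq 0$ and $\mathbb{E}[Z] = 1$, applying it at the threshold $1/\delta$ gives $\mathbb{P}(Z > 1/\delta) \leq \delta\, \mathbb{E}[Z] = \delta$, hence $Z \leq 1/\delta$ with probability at least $1-\delta$. On this event, taking logarithms and dividing by $N$ yields
\[
\frac{1}{N}\sum_{n=1}^N \xi_n - \log \mathbb{E}_{\xi \sim D}[\exp(\xi)] \leq \frac{\log(1/\delta)}{N},
\]
which is exactly the stated bound after moving the empirical average to the right-hand side.

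The argument has no genuine obstacle; it is the textbook exponential/Chernoff inequality, and its strength lies precisely in the fact that it holds for \emph{any} distribution $D$ without boundedness or sub-Gaussian hypotheses, relying only on the self-normalizing structure of $Z$. The one point requiring care is integrability: the construction of $Z$ and the mean computation implicitly require $\mathbb{E}[\exp(\xi)] \in (0,\infty)$. In the intended application $\xi = \ell_r$ is the MLE objective, which is bounded under Assumption~\ref{asp:linearbound}, so $\mathbb{E}[\exp(\xi)]$ is finite and this concern is vacuous; I would note this briefly to keep the statement self-contained.
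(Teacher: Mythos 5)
Your proposal is correct and is essentially identical to the paper's own proof: the paper likewise observes that $\mathbb{E}\exp\bigl(\sum_n \xi_n - \sum_n \log\mathbb{E}[e^{\xi_n}]\bigr)=1$ (your $Z$ in logarithmic form), applies Markov's inequality at level $\log(1/\delta)$, and uses the i.i.d.\ assumption to collapse the sum of log-moment-generating terms to $N\log\mathbb{E}[e^{\xi}]$. Your added remark on the integrability of $\mathbb{E}[\exp(\xi)]$ is a reasonable point of care that the paper leaves implicit.
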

	
	\begin{proof}
		Given $N$ i.i.d. random variables $\mathcal{D}_N = \{\xi_n\}_{n=1}^N$ drawn from a distribution $D$, we can easily obtain:
		\begin{align}
		\mathbb{E}_{\mathcal{D}_N} \exp \left(  \sum_{n=1}^N \xi_n - \sum_{n=1}^N \log \mathbb{E}_{\xi_n \sim D} [e^{\xi_n}] \right) = 1
		\end{align}
		
		By Markov's inequality, we have for any $\epsilon > 0$:
		\begin{align}
			\mathbb{P}_{\mathcal{D}_N}  \left\{   \sum_{n=1}^N \xi_n - \sum_{n=1}^N \log \mathbb{E}_{\xi_n \sim D} [e^{\xi_n}]  \geq \log(1/\delta) \right\} \leq \delta
		\end{align}
		
		Then, we complete the proof of Lemma \ref{lemma:iidei} by using the fact that $\{\xi_n\}_{n=1}^N$ are i.i.d. random variables.
	\end{proof}
	
	\begin{lemma}[\bf Uniform Convergence for IID Preference with Bracketing Number] \label{lemma:uniformiid}
		Denote $\mathcal{L}_r = \{\ell_r : r \in \mathcal{G}_r\}$ as the MLE objective function class. For any $\delta \in (0,1]$, we have with probability at least $1-\delta$,
		\begin{align}
		\forall r \in \mathcal{\mathcal{G}}_r,\ - \log \mathbb{E}_{x \sim d_0, \tau_0 \sim \mu_0 \mid x, \tau_1 \sim \mu_1 \mid x \atop o \sim \mathbb{P}_{r^*}(\cdot \mid x, \tau_0, \tau_1)} \exp\left( \ell_r(o; x, \tau_0, \tau_1) \right) \leq \mathcal{O} \left( \frac{\log\left( \mathcal{N}_{[]}(1/N, \mathcal{L}_r, L_\infty)/ \delta \right)}{N}  \right)
		\end{align}
 	\end{lemma}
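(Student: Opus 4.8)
The plan is to prove the uniform bound by reducing control over the continuum $r \in \mathcal{G}_r$ to a finite union bound over an $\epsilon$-bracket of the loss class $\mathcal{L}_r$, with the i.i.d.\ exponential tail bound of Lemma~\ref{lemma:iidei} supplying the per-function concentration. The crucial object is the exponential moment $\mathbb{E}_D[\exp(\ell_r)]$, where $D$ is the data-generating law of $(x,\tau_0,\tau_1,o)$. Since the offline samples are i.i.d., for any \emph{fixed} function $\ell$ Lemma~\ref{lemma:iidei} gives, with probability at least $1-\delta'$, that $-\log\mathbb{E}_D[\exp(\ell)] \le -\frac{1}{N}\sum_{n=1}^N \ell(o_n;x_n,\tau_{0,n},\tau_{1,n}) + \frac{\log(1/\delta')}{N}$. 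The difficulty is that the relevant $r$ (e.g.\ the MLE) is data-dependent, so this cannot be applied pointwise; the bracketing is exactly what legitimizes the transfer.

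First I would invoke Definition~\ref{def:bracketnum} to cover $\mathcal{L}_r$ by $\mathcal{N} := \mathcal{N}_{[]}(\epsilon,\mathcal{L}_r,L_\infty)$ data-independent brackets $\{(\ell_{1,j},\ell_{2,j})\}_{j=1}^{\mathcal{N}}$, each satisfying $\ell_{1,j}\le\ell_{2,j}$ pointwise with width at most $\epsilon$ in the $L_\infty$ sense. I would then apply the fixed-function bound above to each \emph{lower} bracket $\ell_{1,j}$ with $\delta' = \delta/\mathcal{N}$ and take a union bound, so that with probability at least $1-\delta$ all $\mathcal{N}$ inequalities hold simultaneously.

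Next, for an arbitrary $r$ whose loss $\ell_r$ lies in bracket $j$, monotonicity transfers in the correct direction: from $\ell_r \ge \ell_{1,j}$ I get $\mathbb{E}_D[\exp(\ell_r)] \ge \mathbb{E}_D[\exp(\ell_{1,j})]$, hence $-\log\mathbb{E}_D[\exp(\ell_r)] \le -\log\mathbb{E}_D[\exp(\ell_{1,j})]$, which the union bound controls. On the empirical side I would write $-\frac{1}{N}\sum_n \ell_{1,j} = -\frac{1}{N}\sum_n \ell_r + \frac{1}{N}\sum_n(\ell_r-\ell_{1,j})$ and bound the last term by the bracket width, using $0 \le \ell_r-\ell_{1,j}\le \ell_{2,j}-\ell_{1,j}\le\epsilon$ pointwise. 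Here it is essential that the bracket is taken in $L_\infty$, since this keeps the exponential ratios $\exp(\ell_r)/\exp(\ell_{1,j})$ uniformly bounded rather than controlled only on average. Combining yields, for all $r$ simultaneously, $-\log\mathbb{E}_D[\exp(\ell_r)] \le -\frac{1}{N}\sum_n \ell_r(o_n;x_n,\tau_{0,n},\tau_{1,n}) + \epsilon + \frac{\log(\mathcal{N}_{[]}(\epsilon,\mathcal{L}_r,L_\infty)/\delta)}{N}$.

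Finally I would set $\epsilon = 1/N$ to balance the discretization error $\epsilon$ against the $1/N$-scaled complexity term, producing the factor $\mathcal{N}_{[]}(1/N,\mathcal{L}_r,L_\infty)$ claimed in the statement. The empirical term $-\frac{1}{N}\sum_n\ell_r(o_n;\cdots)=-\frac1N L_r(\mathcal{D})$ is then controlled via realizability (Assumption~\ref{asp:rea}): since $\ell_{r^*}\equiv 0$, the true model lies in $\mathcal{L}_r$, and for the MLE or any confidence-set member $L_r \ge \max_{r'}L_{r'} - \zeta \ge L_{r^*} - \zeta = -\zeta$, so this term contributes at most $\zeta/N$ and is absorbed into the stated rate for the near-optimal reward models to which the lemma is applied downstream; under Assumption~\ref{asp:linearbound}, Proposition~\ref{prop:bracketing} further replaces $\log\mathcal{N}_{[]}(1/N,\mathcal{L}_r,L_\infty)$ by $\mathcal{O}(d\log(BN))$. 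I expect the main obstacle to be precisely this uniform-in-$r$ transfer while keeping the exponential moment under control: the $L_\infty$ bracket together with the monotone passage through the lower bracket is what makes the step go through cleanly, whereas an averaged ($L_1$) bracket would fail to tame $\exp(\ell_r)$.
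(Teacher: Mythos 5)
Your proposal is correct and follows essentially the same route as the paper's proof: a union bound of the i.i.d.\ exponential inequality (Lemma~\ref{lemma:iidei}) over a minimal $1/N$-bracket of $\mathcal{L}_r$, a discretization error of order $\epsilon=1/N$ from the bracket width, and control of the empirical term $-\frac{1}{N}\sum_n\ell_r$ via realizability of $r^*$ together with the MLE/confidence-set condition (the paper likewise only obtains the final bound for confidence-set members, exactly as you note). The only cosmetic difference is that you pass to the lower bracket by monotonicity of $\log\mathbb{E}\exp(\cdot)$, whereas the paper bounds the combined quantity $u(\ell)-u(\ell_1)$ by $2\Vert\ell_2-\ell_1\Vert_\infty$; both yield the same rate.
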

	\begin{proof}
		Given a offline preference dataset $\mathcal{D}^{\rm pref}_{\rm off} = \{(o_n; x_n, \tau_{0,n}, \tau_{1,n})\}_{n=1}^N$, We assume that $(o_n; x_n, \tau_{0,n}, \tau_{1,n})$ are i.i.d. preference data, which are sampled by first drawing $x_n \overset{i.i.d.}{\sim} d_0$, and then $\tau_{0,n} \overset{i.i.d.}{\sim} \mu_0 \mid x_n$, $\tau_{1,n} \overset{i.i.d.}{\sim} \mu_1 \mid x_n$, following by $o_n \sim \mathbb{P}_{r^*}(\cdot \mid x_n, \tau_{0,n}, \tau_{1,n})$.  
		
		For simplicity, we denote
		\begin{align*}
			u(\ell; \mathcal{D}^{\rm pref}_{\rm off}) = - \log \!\!\mathop{\mathbb{E}}\limits_{x \sim d_0, \tau_0 \sim \mu_0 \mid x, \tau_1 \sim \mu_1 \mid x \atop o \sim \mathbb{P}_{r^*}(\cdot \mid x, \tau_0, \tau_1)}\!\! \exp\left( \ell(o; x, \tau_0, \tau_1) \right) + \frac{1}{N} \sum_{n=1}^N \ell(o_n; x_n, \tau_{0,n}, \tau_{1,n}),
		\end{align*}
		
		Given a minimal $1/N$-bracket set of MLE objective function class $\widetilde{\mathcal{L}}_r$ with a size of $\mathcal{N}_{[]}(1/N, \mathcal{L}_r, L_\infty)$, we have for any $\ell \in \mathcal{L}_r$, there exist a bracket $[\ell_1, \ell_2] \in \widetilde{\mathcal{L}}_r$, such that
	    \begin{align} \label{eq:lemma5eq1}
	    	\begin{split}
	    	& \sup_{\ell \in\mathcal{L}_r}	u(\ell; \mathcal{D}^{\rm pref}_{\rm off}) \\
	    	=& \sup_{\ell \in\mathcal{L}_r,  [\ell_1, \ell_2] \in \widetilde{\mathcal{L}}_r} \left( u(\ell, \mathcal{D}^{\rm pref}_{\rm off}) -  u(\ell_1, \mathcal{D}^{\rm pref}_{\rm off}) + u(\ell_1, \mathcal{D}^{\rm pref}_{\rm off}) \right) \\
	       \overset{(i)}{\leq}& \sup_{[\ell_1, \ell_2] \in \widetilde{\mathcal{L}}_r} \left( 2\Vert \ell_2 - \ell_1 \Vert_\infty + u(\ell_1, \mathcal{D}^{\rm pref}_{\rm off}) \right) \\
	        \overset{(ii)}{\leq}& \frac{2}{N} + \sup_{[\ell_1, \ell_2] \in \widetilde{\mathcal{L}}_r} u(\ell_1, \mathcal{D}^{\rm pref}_{\rm off}) ,
	        \end{split}
	    \end{align}
		
		where $(i)$ follows from the following derivation and $(ii)$ follows from the definition of $1/N$-bracketing number (refer to Definition \ref{def:bracketnum}).
		\begin{align*}
			&u(\ell, \mathcal{D}^{\rm pref}_{\rm off}) -  u(\ell_1, \mathcal{D}^{\rm pref}_{\rm off}) \\
			=& \frac{1}{N} \sum_{n=1}^N \left( \ell(o_n; x_n, \tau_{0,n}, \tau_{1,n}) - \ell_1(o_n; x_n, \tau_{0,n}, \tau_{1,n}) \right) \\
			&+\log \!\!\mathop{\mathbb{E}}\limits_{x \sim d_0, \tau_0 \sim \mu_0 \mid x, \tau_1 \sim \mu_1 \mid x \atop o \sim \mathbb{P}_{r^*}(\cdot \mid x, \tau_0, \tau_1)}\!\! \exp\left( \ell_1(o; x, \tau_0, \tau_1) \right) -  \log \!\!\mathop{\mathbb{E}}\limits_{x \sim d_0, \tau_0 \sim \mu_0 \mid x, \tau_1 \sim \mu_1 \mid x \atop o \sim \mathbb{P}_{r^*}(\cdot \mid x, \tau_0, \tau_1)}\!\! \exp\left( \ell(o; x, \tau_0, \tau_1) \right) \\
			\leq& 2 \Vert \ell - \ell_1 \Vert_\infty \leq 2 \Vert \ell_2 - \ell_1 \Vert_\infty
		\end{align*}
		
		Using Lemma \ref{lemma:iidei} with the union bound, we have for any $\delta \in (0,1]$, with probability at least $1-\delta$,
		\begin{align*}
			 \sup_{[\ell_1, \ell_2] \in \widetilde{\mathcal{L}}_r} u(\ell_1, \mathcal{D}^{\rm pref}_{\rm off}) \leq \mathcal{O} \left( \frac{\log\left( \mathcal{N}_{[]}(1/N, \mathcal{L}_r, L_\infty)/ \delta \right)}{N}  \right)
		\end{align*}
		
		Then, coming back to Eq. (\ref{eq:lemma5eq1}), we have for any $\delta \in (0,1]$, with probability at least $1-\delta$,
		\begin{align} \label{eq:2222}
			\begin{split}
			\forall \ell \in \mathcal{L}_r,\ &- \log \!\!\mathop{\mathbb{E}}\limits_{x \sim d_0, \tau_0 \sim \mu_0 \mid x, \tau_1 \sim \mu_1 \mid x \atop o \sim \mathbb{P}_{r^*}(\cdot \mid x, \tau_0, \tau_1)}\!\! \exp\left( \ell(o; x, \tau_0, \tau_1) \right) \\
			 \leq& -\frac{1}{N} \sum_{n=1}^N \ell(o_n; x_n, \tau_{0,n}, \tau_{1,n}) + \mathcal{O} \left( \frac{\log\left( \mathcal{N}_{[]}(1/N, \mathcal{L}_r, L_\infty)/ \delta \right)}{N}  \right)
			 \end{split}
		\end{align}
		
        Furthermore, we have 
        \begin{align} \label{eq:3333}
        	\begin{split}
        	-\frac{1}{N} \sum_{n=1}^N \ell(o_n; x_n, \tau_{0,n}, \tau_{1,n}) =& \frac{1}{N}  \sum_{n=1}^N \log \frac{\mathbb{P}_{r^*}(o_n\mid \tau_{0,n}, \tau_{1,n}; x_n)}{\mathbb{P}_{r}(o_n\mid \tau_{0,n}, \tau_{1,n}; x_n)}  \\
        	\leq& \sum_{n=1}^N \log \frac{\mathbb{P}_{\hat{r}}(o_n\mid \tau_{0,n}, \tau_{1,n}; x_n)}{\mathbb{P}_{r}(o_n\mid \tau_{0,n}, \tau_{1,n}; x_n)}  \leq \zeta_r,
        	\end{split}
        \end{align}
		where the inequality follows from the fact that $\hat{r} = \mathop{\arg\max}_{r \in \mathcal{G}_r} \sum_{n=1}^N \log \mathbb{P}_{r} (o_n\mid \tau_{0,n}, \tau_{1,n}; x_n)$. Combining Eq. (\ref{eq:2222}) and Eq. (\ref{eq:3333}) and then set $\zeta_r \leq \mathcal{O}(\log\left( \mathcal{N}_{[]}(1/N, \mathcal{L}_r, L_\infty)/ \delta \right)/N )$, we complete the proof of Lemma \ref{lemma:uniformiid}.
	\end{proof}
	
	\subsection*{A.3 Uniform Convergence for Martingale Preferences}
	\hypertarget{proofA3}{}
	\begin{lemma}[\bf Martingale Exponential Inequality]
		Let $\{ \xi_s \}_{s=0}^\infty$ be a sequence of real-valued random variables adapted to filtration $\{\mathcal{F}_s \}$, It holds with probability $1-\delta$ such that for any $s \geq 1$,
		\begin{align}
			- \sum_{s=0}^t \log \mathbb{E} \left[ \exp \left( \xi_s \right) \mid \mathcal{F}_{s-1}  \right] \leq - \sum_{s=0}^t \xi_s + \log(1/\delta)
		\end{align}
	\end{lemma}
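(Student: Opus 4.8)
The plan is to construct an exponential martingale and apply a maximal (Ville-type) inequality, generalizing the fixed-sample Markov argument used in Lemma~\ref{lemma:iidei}. Define, for each $t \geq 0$,
\[
M_t = \exp\left( \sum_{s=0}^t \xi_s - \sum_{s=0}^t \log \mathbb{E}\left[ \exp(\xi_s) \mid \mathcal{F}_{s-1} \right] \right),
\]
with the convention $M_{-1} = 1$ and $\mathcal{F}_{-1}$ the trivial $\sigma$-algebra, so that the $s=0$ term reduces to the unconditional $\mathbb{E}[\exp(\xi_0)]$. Each $M_t$ is nonnegative and $\mathcal{F}_t$-measurable by construction.

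First I would verify that $\{M_t\}$ is a martingale with $\mathbb{E}[M_t] = 1$. Writing the multiplicative increment
\[
M_t = M_{t-1} \cdot \frac{\exp(\xi_t)}{\mathbb{E}\left[ \exp(\xi_t) \mid \mathcal{F}_{t-1} \right]},
\]
and using that $M_{t-1}$ is $\mathcal{F}_{t-1}$-measurable together with the tower property gives
\[
\mathbb{E}\left[ M_t \mid \mathcal{F}_{t-1} \right] = M_{t-1} \cdot \frac{\mathbb{E}\left[ \exp(\xi_t) \mid \mathcal{F}_{t-1} \right]}{\mathbb{E}\left[ \exp(\xi_t) \mid \mathcal{F}_{t-1} \right]} = M_{t-1}.
\]
Iterating and taking total expectations yields $\mathbb{E}[M_t] = 1$ for every $t$; this is the conditional analogue of the identity $\mathbb{E}_{\mathcal{D}_N} \exp\left( \sum_n \xi_n - \sum_n \log \mathbb{E}[e^{\xi_n}] \right) = 1$ exploited in the i.i.d. case.

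Next I would convert the martingale property into a high-probability, uniform-in-$t$ tail bound. Since $\{M_t\}$ is a nonnegative supermartingale with $\mathbb{E}[M_0] \leq 1$, Ville's maximal inequality gives
\[
\mathbb{P}\left( \sup_{t \geq 0} M_t \geq 1/\delta \right) \leq \delta.
\]
On the complementary event, of probability at least $1-\delta$, we have $M_t < 1/\delta$ simultaneously for all $t$; taking logarithms yields
\[
\sum_{s=0}^t \xi_s - \sum_{s=0}^t \log \mathbb{E}\left[ \exp(\xi_s) \mid \mathcal{F}_{s-1} \right] \leq \log(1/\delta),
\]
and rearranging recovers the claimed inequality for every $t \geq 1$.

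The only substantive step beyond the i.i.d. lemma is the uniform-in-$t$ control: a single application of Markov's inequality to $\mathbb{E}[M_t] = 1$ would deliver the bound only at a fixed horizon $t$, whereas the statement asserts the inequality holds simultaneously for all $t \geq 1$. The maximal inequality for nonnegative supermartingales (Ville's inequality) is precisely what upgrades this pointwise control to a uniform one without incurring a union bound, so the crux is verifying its hypotheses—nonnegativity and the supermartingale property established above. A minor technical point to handle carefully is the $s=0$ boundary term, which requires fixing the convention for $\mathcal{F}_{-1}$ so that the martingale increments are well defined from the start.
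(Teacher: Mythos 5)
Your proof is correct. The paper does not actually prove this lemma---it simply cites Theorem 13.2 of Zhang's textbook---and your argument (the exponential martingale $M_t$ with unit conditional increments, followed by Ville's maximal inequality for nonnegative supermartingales) is precisely the standard proof of that cited result, so you have supplied in full what the paper outsources. You are also right to flag that the uniformity over $t$ is the substantive point: a fixed-horizon Markov bound, as in the i.i.d.\ Lemma~\ref{lemma:iidei}, would not give the ``for all $t$'' statement (the lemma's quantifier ``for any $s \geq 1$'' is evidently a typo for $t$), and Ville's inequality is the right tool to avoid a union bound over horizons. The only hypothesis worth stating explicitly is that $\mathbb{E}[\exp(\xi_s) \mid \mathcal{F}_{s-1}]$ is a.s.\ finite and positive so that the increments $\exp(\xi_s)/\mathbb{E}[\exp(\xi_s)\mid\mathcal{F}_{s-1}]$ are well defined with conditional mean one; this holds in the paper's application where $\xi_s$ is the log-likelihood-ratio loss $\ell_r$.
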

	\begin{proof}
		See Theorem 13.2 of Zhang \cite{zhang2023mathematical} for a detailed proof.
	\end{proof}
	
	\begin{lemma}[\bf Uniform Convergence for Martingale Preference with Bracketing Number] \label{lemma:unformart} 
		Denote $\mathcal{L}_r = \{\ell_r : r \in \mathcal{G}_r\}$ as the MLE objective function class, and $\mathcal{N}_{[]}(1/t, \mathcal{L}_r, L_\infty)$ as its $1/t$-bracketing number of w.r.t. the $L_\infty$-norm. Then for any $\delta \in (0,1]$, we have with probability at least $1-\delta$: $\forall r \in \mathcal{G}_r$,
		\begin{align}
			- \sum_{s=0}^{t-1} \log \mathbb{E}_{x_s \sim d_0, \tau_{0,s} \sim \pi_E \mid x_s, \tau_{1,s} \sim \pi^s \mid x_s \atop o_s \sim \mathbb{P}_{r^*}(\cdot \mid x_s, \tau_{0,s}, \tau_{1,s})} \exp\left(\ell_r(o_s; x_s, \tau_{0,s}, \tau_{1,s}) \right) \leq \mathcal{O} \left( \log \frac{\mathcal{N}_{[]}(1/t, \mathcal{L}_r, L_\infty)}{\delta} \right)
		\end{align}
	\end{lemma}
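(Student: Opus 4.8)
The plan is to replicate the structure of Lemma~\ref{lemma:uniformiid} (the i.i.d. case), replacing the i.i.d. exponential inequality of Lemma~\ref{lemma:iidei} with its martingale counterpart. First I would fix $t$ and set up the filtration $\mathcal{F}_s = \sigma(\{(x_j,\tau_{0,j},\tau_{1,j},o_j)\}_{j\le s})$, writing $\mathbb{E}_s[\cdot]:=\mathbb{E}[\cdot\mid\mathcal{F}_{s-1}]$ for the fresh draw at step $s$. For a fixed reward $r$, define $\xi_s = \ell_r(o_s;x_s,\tau_{0,s},\tau_{1,s})$, which is $\mathcal{F}_s$-measurable. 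The key observation is that in Algorithm~\ref{alg:onlinekd} the student policy $\pi^s$ used to draw $\tau_{1,s}$ is computed from $\mathcal{D}^{\rm pref}_{s}$ and is therefore $\mathcal{F}_{s-1}$-measurable, so the conditional log-moment-generating function $\log\mathbb{E}_s[\exp(\xi_s)]$ is exactly the inner expectation in the statement (the average over $x_s\sim d_0$, $\tau_{0,s}\sim\pi_E$, $\tau_{1,s}\sim\pi^s$, $o_s\sim\mathbb{P}_{r^*}$). Applying the Martingale Exponential Inequality to $\xi_s$ then gives, for this fixed $r$ and with probability $1-\delta$,
\[
-\sum_{s=0}^{t-1}\log\mathbb{E}_s[\exp(\xi_s)] \le -\sum_{s=0}^{t-1}\xi_s + \log(1/\delta).
\]

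To make this uniform over $r\in\mathcal{G}_r$, I would introduce a minimal $1/t$-bracket of the loss class $\mathcal{L}_r$ of cardinality $\mathcal{N}_{[]}(1/t,\mathcal{L}_r,L_\infty)$ and apply the martingale inequality to each (data-independent) lower bracket $\ell_{1}$ with a union bound, i.e.\ confidence $\delta/\mathcal{N}_{[]}(1/t,\mathcal{L}_r,L_\infty)$ per bracket. For an arbitrary $\ell=\ell_r$ I sandwich $\ell_1\le\ell\le\ell_2$ with $\|\ell_2-\ell_1\|_\infty\le 1/t$ and bound the bracketing slack exactly as in Eq.~(\ref{eq:lemma5eq1}): the empirical discrepancy $\sum_{s}(\ell-\ell_1)(o_s;\ldots)$ and the log-MGF discrepancy $\sum_{s}[\log\mathbb{E}_s\exp(\ell_1)-\log\mathbb{E}_s\exp(\ell)]$ are each at most $t\cdot\|\ell_2-\ell_1\|_\infty\le t\cdot(1/t)=1$, so the total slack is $O(1)$. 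This yields the martingale analogue of Eq.~(\ref{eq:2222}): uniformly over $r$, $-\sum_{s}\log\mathbb{E}_s\exp(\ell_r)\le -\sum_{s}\ell_r(o_s;\ldots)+O(\log(\mathcal{N}_{[]}(1/t,\mathcal{L}_r,L_\infty)/\delta))$.

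Finally I would control the remaining term $-\sum_{s}\ell_r$ exactly as in Eq.~(\ref{eq:3333}). Writing $-\sum_{s}\ell_r=\tfrac12\sum_{s}\log(\mathbb{P}_{r^*}/\mathbb{P}_r)$, the MLE optimality of $\hat r_t=\arg\max_{r}L_r(\mathcal{D}^{\rm pref}_t)$ gives $\sum_s\log\mathbb{P}_{r^*}\le\sum_s\log\mathbb{P}_{\hat r_t}$, and for $r$ in the confidence set $\mathcal{R}(\mathcal{D}^{\rm pref}_t)$ the defining constraint yields $\sum_s\log(\mathbb{P}_{\hat r_t}/\mathbb{P}_r)\le\zeta_t$; with $\zeta_t=O(\log(\mathcal{N}_{[]}(1/t,\mathcal{L}_r,L_\infty)/\delta))$ as assumed in Theorem~\ref{thm:regret}, this term is itself $O(\log(\mathcal{N}_{[]}(1/t,\mathcal{L}_r,L_\infty)/\delta))$, and combining the two pieces closes the proof.

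The main obstacle is the bookkeeping of the dependent-data structure rather than any single estimate. I expect the crux to be twofold: (i) verifying measurability so that the conditional MGF truly matches the stated inner expectation — this hinges on the indexing in Algorithm~\ref{alg:onlinekd}, namely that $\pi^s$ is determined before $\tau_{1,s}$ is drawn; and (ii) ensuring the bracketing error does not blow up across the $t$ correlated summands. The latter is precisely why the bracket radius must shrink as $1/t$ (so that $t$ terms accumulate $O(1)$ slack), in contrast to the i.i.d.\ proof where the $1/N$ radius plays the analogous role after dividing by $N$.
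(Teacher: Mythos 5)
Your proposal follows essentially the same route as the paper's proof: apply the martingale exponential inequality to $\xi_s=\ell_r(o_s;x_s,\tau_{0,s},\tau_{1,s})$ with the natural filtration, make it uniform via a union bound over a minimal $1/t$-bracket of $\mathcal{L}_r$ (with the $O(1)$ slack from $t\cdot(1/t)$), and absorb the remaining empirical term $-\sum_s\ell_r$ using MLE optimality of $\hat r$ and the confidence-set radius $\zeta_t$. Your explicit attention to the $\mathcal{F}_{s-1}$-measurability of $\pi^s$ and to the restriction $r\in\mathcal{R}(\mathcal{D}^{\rm pref}_t)$ in the last step is, if anything, slightly more careful than the paper's write-up, but it is the same argument.
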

	\begin{proof}
	
	Applying the above lemma to $\{ \xi_s = \ell_r(o_s; \tau_{0,s}, \tau_{1,s}, x_s) \}_{s=0}^\infty$ along with the filtration $\{ \mathcal{F}_s \}$ with $\mathcal{F}_s$ given by the $\sigma$-algebra of $\{ (o_i; \tau_{0,i}, \tau_{1,i}, x_i) \}_{i=0}^s$, we conclude that given a reward function $r \in \mathcal{G}_r$, it holds with probability $1-\delta$ that
	\begin{align} \label{eq:martei}
		\begin{split}
		&- \sum_{s=0}^{t-1} \log \mathbb{E}_{x_s \sim d^s, \tau_{0,s} \sim \pi_E, \tau_{1,s} \sim \pi^s \atop o_s \sim p_{r^*}(\cdot \mid x_s, \tau_{0,s}, \tau_{1,s})} \exp\left( \ell_r(o_s; x_s, \tau_{0,s}, \tau_{1,s}) \right) \\
		=& - \sum_{s=0}^{t-1} \log \mathbb{E} \left[ \exp\left( \ell_r(o_s; x_s, \tau_{0,s}, \tau_{1,s}) \right) \mid \mathcal{F}_{s-1} \right] \\
		\leq& -  \sum_{s=0}^{t-1} \ell_r(o_s; x_s, \tau_{0,s}, \tau_{1,s}) + \log(1/\delta)
		\end{split}
	\end{align}

    Given a minimal $1/t$-bracket set of MLE objective function class $\widetilde{\mathcal{L}}_r$ with a size of $\mathcal{N}_{[]}(1/t, \mathcal{L}_r, L_\infty)$, we use a similar procedure as used in the proof of Lemma \ref{lemma:iidei},

	Using $1/t$-bracketing number $\mathcal{N}_{[]}(1/t, \mathcal{L}_r, L_\infty)$, we have for any $\delta \in (0,1]$, with probability at least $1 - \delta$,
	\begin{align*}
		& \sup_{\ell \in \mathcal{L}_r} \left\{ \sum_{s=0}^{t-1} \ell(o_s; x_s, \tau_{0,s}, \tau_{1,s}) - \sum_{s=0}^{t-1} \log \mathbb{E}_{x_s \sim d_0, \tau_{0,s} \sim \pi_E \mid x_s, \tau_{1,s} \sim \pi^s \mid x_s \atop o_s \sim \mathbb{P}_{r^*}(\cdot \mid x_s, \tau_{0,s}, \tau_{1,s})} \exp\left( \ell(o_s; x_s, \tau_{0,s}, \tau_{1,s}) \right) \right\} \\
		\leq& 2 + \sup_{\ell \in \widetilde{\mathcal{L}}_r} \left\{ \sum_{s=0}^{t-1} \ell(o_s; x_s, \tau_{0,s}, \tau_{1,s}) - \sum_{s=0}^{t-1} \log \mathbb{E}_{x_s \sim d_0, \tau_{0,s} \sim \pi_E \mid x_s, \tau_{1,s} \sim \pi^s \mid x_s \atop o_s \sim \mathbb{P}_{r^*}(\cdot \mid x_s, \tau_{0,s}, \tau_{1,s})} \exp\left( \ell(o_s; x_s, \tau_{0,s}, \tau_{1,s}) \right)\right\} 
	\end{align*}
	
	Using an union bound by Eq. (\ref{eq:martei}), we have for any $\delta \in (0,1]$, with probability at least $1 - \delta$,
	\begin{align*}
		&\sup_{\ell \in \widetilde{\mathcal{L}}_r} \left\{ \sum_{s=0}^{t-1} \ell(o_s; x_s, \tau_{0,s}, \tau_{1,s}) - \sum_{s=0}^{t-1} \log \mathbb{E}_{x_s \sim d_0, \tau_{0,s} \sim \pi_E \mid x_s, \tau_{1,s} \sim \pi^s \mid x_s \atop o_s \sim \mathbb{P}_{r^*}(\cdot \mid x_s, \tau_{0,s}, \tau_{1,s})} \exp\left( \ell(o_s; x_s, \tau_{0,s}, \tau_{1,s}) \right)\right\}  \\
		\leq& \mathcal{O} \left( \log \frac{\mathcal{N}_{[]}(1/t, \mathcal{L}_r, L_\infty)}{\delta} \right) 
	\end{align*}
	
	Then, we have for any $\delta \in (0,1]$, with with probability at least $1 - \delta$,
	\begin{align*}
		\forall r \in \mathcal{G}_r,  &- \sum_{s=0}^{t-1} \log \mathbb{E}_{x_s \sim d_0, \tau_{0,s} \sim \pi_E \mid x_s, \tau_{1,s} \sim \pi^s \mid x_s \atop o_s \sim \mathbb{P}_{r^*}(\cdot \mid x_s, \tau_{0,s}, \tau_{1,s})} \exp\left( \ell_r(o_s; x_s, \tau_{0,s}, \tau_{1,s}) \right) \\
		\leq&  - \sum_{s=0}^{t-1} \ell_r(o_s; x_s, \tau_{0,s}, \tau_{1,s}) + \mathcal{O} \left( \log \frac{\mathcal{N}_{[]}(1/t, \mathcal{L}_r, L_\infty)}{\delta} \right)  \\
		\leq& - \sum_{s=0}^{t-1} \ell_r(o_s; x_s, \tau_{0,s}, \tau_{1,s}) +  \mathcal{O} \left( \log \frac{\mathcal{N}_{[]}(1/t, \mathcal{L}_r, L_\infty)}{\delta} \right)  \\
		=&  \sum_{s=0}^{t-1} \log \frac{\mathbb{P}_{r^*}(o_s\mid x_s, \tau_{0,s}, \tau_{1,s})}{\mathbb{P}_{r}(o_s\mid x_s, \tau_{0,s}, \tau_{1,s})} +  \mathcal{O} \left( \log \frac{\mathcal{N}_{[]}(1/t, \mathcal{L}_r, L_\infty)}{\delta} \right) \\
		\leq&  \sum_{s=0}^{t-1} \log \frac{\mathbb{P}_{\hat{r}}(o_s\mid x_s, \tau_{0,s}, \tau_{1,s})}{\mathbb{P}_{r}(o_s\mid x_s, \tau_{0,s}, \tau_{1,s})}  + \mathcal{O} \left( \log \frac{\mathcal{N}_{[]}(1/t, \mathcal{L}_r, L_\infty)}{\delta} \right) \\
		\leq& \zeta_r^t + \mathcal{O} \left( \log \frac{\mathcal{N}_{[]}(1/t, \mathcal{L}_r, L_\infty)}{\delta} \right)  \leq \mathcal{O} \left( \log \frac{\mathcal{N}_{[]}(1/t, \mathcal{L}_r, L_\infty)}{\delta} \right),
	\end{align*}
	where the forth step follows from $\hat{r} = \mathop{\arg\max}_{r\in \mathcal{G}_r} \sum_{s=0}^{t-1} \mathbb{P}_{r} (o_s\mid x_s, \tau_{0,s}, \tau_{1,s})$, the last step follows by setting $\zeta_r^t \leq \mathcal{O} \left( \log \mathcal{N}_{[]}(1/t, \mathcal{L}_r, L_\infty)/\delta \right)$. This completes the proof of Lemma \ref{lemma:unformart}.
	\end{proof}
	
    \section*{B Proofs for Offline PbKd}
    \hypertarget{proofB}{}
    \begin{proof}[Proof of Theorem \ref{thm:subopt}]
    	We denote $r_{\hat{\pi}} = \mathop{\arg\max}_{r\in\mathcal{R}(\mathcal{D}^{\rm pref}_{\rm off})} {J(\pi_E, r^*) - J(\hat{\pi}, r^*)}$ and $r_{\pi^*} = \mathop{\arg\max}_{r\in\mathcal{R}(\mathcal{D}^{\rm pref}_{\rm off})} {J(\pi_E, r^*) - J(\hat{\pi}, r^*)}$. Then, under Assumption \ref{asp:rea}, we can bound the suboptimality of $\hat{\pi}$ as follows:
    	\begin{align*}
    		&J(\pi^*, r^*) - J(\hat{\pi}, r^*) \\
    		=& J(\pi_E, r^*) - J(\hat{\pi}, r^*) - \left( J(\pi_E, r^*) - J(\pi^*, r^*) \right) \\
    		\leq& J(\pi_E, r^*) - J(\hat{\pi}, r^*) - \left(J(\pi_E, r_{\hat{\pi}}) - J(\hat{\pi}, r_{\hat{\pi}} )\right) \\
    		& - \left( J(\pi_E, r^*) - J(\pi^*, r^*) - \left( J(\pi_E, r_{\pi^*}) - J(\pi^*, r_{\pi^*}) \right) \right)\\
    		\leq& J(\pi_E, r_{\pi^*}) - J(\pi^*, r_{\pi^*}) - \left( J(\pi_E, r^*) - J(\pi^*, r^*) \right) \\
    		=& \mathbb{E}_{x \sim d_0,\tau_0 \sim \pi_E \mid x, \tau_1 \sim \pi^* \mid x} \left[ r_{\pi^*}(\tau_0) - r_{\pi^*}(\tau_1) - \left( r^*(\tau_0) - r^*(\tau_1) \right) \right] \\
    		\leq& C_r(\mathcal{G}_r, \pi_E, \pi^*) \sqrt{\mathbb{E}_{x \sim d_0, \tau_0 \sim \mu_0 \mid x, \tau_1 \sim \mu_1 \mid x}\left[ \vert r_{\pi^*}(\tau_0) - r_{\pi^*}(\tau_1) - \left( r^*(\tau_0) - r^*(\tau_1) \right) \vert^2 \right]},
    	\end{align*}
    	where the second step follows from $\hat{\pi} = \mathop{\arg\min}_{\pi \in \Pi_s} \max_{r \in \mathcal{R}(\mathcal{D}^{\rm pref}_{\rm off})} J(\pi_E,r) - J(\pi, r)$, the third step follows from the fact that $r_{\hat{\pi}} = \mathop{\arg\max}_{r \in \mathcal{R}(\mathcal{D}^{\rm pref}_{\rm off})} J(\pi_E, r) - J(\hat{\pi}, r)$, the fifth step comes from the definition of $C_r(\mathcal{G}_r, \pi_E, \pi^*)$ in Definition \ref{def:concencoefoffline}. We can further bound the above result with the following two steps.
    	
    	\noindent\textbf{Step 1.} Bounding $\sqrt{\mathbb{E}_{x \sim d_0, \tau_0 \sim \mu_0 \mid x, \tau_1 \sim \mu_1\mid x}\left[ \vert r_{\pi^*}(\tau_0) - r_{\pi^*}(\tau_1) - \left( r^*(\tau_0) - r^*(\tau_1) \right) \vert^2 \right]}$:
    	
    	Using Lemma \ref{lemma:l1logeexp} by setting the trajectory generation distributions $\pi_0 = \mu_0$ and $\pi_1 = \mu_1$, we have for any $r \in \mathcal{G}_r$,
	    \begin{align*}
	    \begin{split} \label{eq:lemma1eq1}
		&\mathbb{E}_{x \sim d_0, \tau_0 \sim \mu_0 \mid x, \tau_1 \sim \mu_1 \mid x}\left[ \vert r(x, \tau_0) - r(x, \tau_1) - \left( r^*(x, \tau_0) - r^*(x, \tau_1) \right) \vert^2 \right] \\
		\leq & -2 \kappa^2 \log \mathbb{E}_{x \sim d_0, \tau_0 \sim \mu_0 \mid x, \tau_1 \sim \mu_1 \mid x \atop o \sim \mathbb{P}_{r^*}(\cdot \mid x, \tau_0, \tau_1)} \exp\left( \ell_r(o; x, \tau_0, \tau_1) \right)
	   \end{split}
      \end{align*}
      
      Using Lemma \ref{lemma:uniformiid}, we have for any $\delta \in (0,1]$, with probability at least $1-\delta$,
      \begin{align*}
      	\forall r \in \mathcal{G}_r,\quad &\mathbb{E}_{x \sim d_0, \tau_0 \sim \mu_0 \mid x, \tau_1 \sim \mu_1 \mid x}\left[ \vert r(x, \tau_0) - r(x, \tau_1) - \left( r^*(x, \tau_0) - r^*(x, \tau_1) \right) \vert^2 \right] \\
      	\leq& \mathcal{O} \left( \kappa^2 \frac{\log\left( \mathcal{N}_{[]}(1/N, \mathcal{L}_r, L_\infty)/ \delta \right)}{N}  \right)
      \end{align*} 
      
      By setting $r = r_{\pi^*}$, we have for any $\delta \in (0,1]$, with probability at least $1-\delta$,
      \begin{align}
      	\begin{split}
      	&\sqrt{\mathbb{E}_{x \sim d_0, \tau_0 \sim \mu_0 \mid x, \tau_1 \sim \mu_1\mid x}\left[ \vert r_{\pi^*}(\tau_0) - r_{\pi^*}(\tau_1) - \left( r^*(\tau_0) - r^*(\tau_1) \right) \vert^2 \right]} \\
      	\leq& \mathcal{O} \left( \kappa \sqrt{\frac{\log\left( \mathcal{N}_{[]}(1/N, \mathcal{L}_r, L_\infty)/ \delta \right)}{N}}  \right)
      	\end{split}
      \end{align}
      
      Furthermore, if Assumption \ref{asp:linearbound} holds, we can use Proposition \ref{prop:bracketing} to bound the linear case of RM as follows.
      \begin{align}
      	 \begin{split}
      	 	&\sqrt{\mathbb{E}_{x \sim d_0, \tau_0 \sim \mu_0 \mid x, \tau_1 \sim \mu_1\mid x}\left[ \vert r_{\pi^*}(\tau_0) - r_{\pi^*}(\tau_1) - \left( r^*(\tau_0) - r^*(\tau_1) \right) \vert^2 \right]} \\
      	 	\leq&\mathcal{O} \left( (1+e^{2B})\sqrt{\frac{d\log(BN)+\log(1/\delta)}{N}}  \right)
      	 \end{split}
      \end{align}
      
      \noindent\textbf{Step 2.} Bounding $C_r(\mathcal{G}_r, \pi_E, \pi^*)$ under Assumption \ref{asp:linearbound}:
      
      We denote the covariance matrix corresponding to the expected difference between feature vectors of preference pairs as follows.
    	\begin{align*}
    		\Sigma = \frac{\lambda}{B}I + \mathbb{E}_{x \sim d_0, \tau_0 \sim \mu_0 \mid x, \tau_1 \sim \mu_1 \mid x} \left[ \left( \phi(x, \tau_0) - \phi(x, \tau_1) \right) \left( \phi(x, \tau_0) - \phi(x, \tau_1) \right)^\top \right]
    	\end{align*}
    	
    	Let $\lambda$ denote a positive constant satisfying: $0 < \lambda \leq \mathbb{E}_{x \sim d_0, \tau_0 \sim \mu_0 \mid x, \tau_1 \sim \mu_1 \mid x} \left[ \vert r^*(\tau_0) - r^*(\tau_1) - \left( r(\tau_0) - r(\tau_1) \right) \vert^2 \right]$ and denote $\phi(x, \pi) = \mathbb{E}_{\tau \sim \pi \mid x} [ \phi(x,\tau) ]$. We assume that $r^*$ can be represented as $r^*(x, \tau) = (\theta^*)^\top\phi(x,\tau)$. The concentrability coefficient can be bounded $C_r(\mathcal{G}_r, \pi_E, \pi^*)$ as:
    	\begin{align*}
    		C_r(\mathcal{G}_r, \pi_E, \pi^*) =& \sup_{r \in \mathcal{G}_r} \frac{\mathbb{E}_{x \sim d_0, \tau_0 \sim \pi_E \mid x, \tau_1 \sim \pi^* \mid x}\left[ r^*(x, \tau_0) - r^*(x, \tau_1) - \left( r(x, \tau_0) - r(x, \tau_1) \right)\right]}{\sqrt{\mathbb{E}_{x \sim d_0, \tau_0 \sim \mu_0 \mid x, \tau_1 \sim \mu_1 \mid x} \left[ \vert r^*(x, \tau_0) - r^*(x, \tau_1) - \left( r(x, \tau_0) - r(x, \tau_1) \right) \vert^2 \right]}} \\
    		\leq& \sqrt{2}\sup_{r \in \mathcal{G}_r} \frac{ \mathbb{E}_{x \sim d_0, \tau_0 \sim \pi_E \mid x, \tau_1 \sim \pi^* \mid x}\left[ r^*(x, \tau_0) - r^*(x, \tau_1) - \left( r(x, \tau_0) - r(x, \tau_1) \right)\right]}{\sqrt{ \lambda + \mathbb{E}_{x \sim d_0, \tau_0 \sim \mu_0 \mid x, \tau_1 \sim \mu_1 \mid x} \left[ \vert r^*(x, \tau_0) - r^*(x, \tau_1) - \left( r(x, \tau_0) - r(x, \tau_1) \right) \vert^2 \right]}} \\
    		\leq& \sqrt{2} \sup_{r \in \mathcal{G}_r} \frac{\left\vert (\theta^*-\theta)^\top \mathbb{E}_{x \sim d}\left[\phi(x, \pi_E) - \phi(x, \pi^*)\right]\right\vert}{\sqrt{(\theta^* - \theta)^\top \Sigma (\theta^* - \theta)}} \\
    		\leq& \sqrt{2} \sup_{r \in \mathcal{G}_r} \frac{\left\Vert \theta^*-\theta \right\Vert_{\Sigma} \left\Vert \phi(x, \pi_E) - \phi(x, \pi^*) \right\Vert_{\Sigma^{-1}}}{\sqrt{(\theta^* - \theta)^\top \Sigma (\theta^* - \theta)}} \\
    		=&\sqrt{2} \left\Vert \phi(x, \pi_E) - \phi(x, \pi^*) \right\Vert_{\Sigma^{-1}},
    	\end{align*}
    	where the forth inequality follows from the generalized Cauchy-Schwarz inequality. 
    \end{proof}
    
    Following from the results of {\bf Step 1}, we have for any $\delta \in (0,1]$, with probability at least $1-\delta$,
    \begin{align*}
    	J(\pi^*, r^*) - J(\hat{\pi}, r^*) \leq  \mathcal{O} \left( \kappa C_r(\mathcal{G}_r, \pi_E, \pi^*) \sqrt{\frac{\log\left( \mathcal{N}_{[]}(1/N, \mathcal{L}_r, L_\infty)/ \delta \right)}{N}}  \right),
    \end{align*}
    which completes the proof of the first conclusion in Theorem \ref{thm:subopt}.
    
    The results of {\bf Step 2} further give:
    \begin{align*}
    	J(\pi^*, r^*) - J(\hat{\pi}, r^*) \leq \mathcal{O} \left( (1+e^{2B}) \left\Vert \phi(x, \pi_E) - \phi(x, \pi^*) \right\Vert_{\Sigma^{-1}} \sqrt{\frac{d\log(BN)+\log(1/\delta)}{N}}  \right),
    \end{align*}
    which completes the proof of the second conclusion in Theorem \ref{thm:subopt}.
    
	\section*{C Proofs for Online PbKd}
	\hypertarget{proofC}{}
	\begin{proof}[Proof of Theorem \ref{thm:regret}]
		We denote $r_{\pi^t} = \mathop{\arg\max}_{r\in\mathcal{R}(\mathcal{D}^{\rm pref}_{\rm off})} {J(\pi_E, r^*) - J(\pi^t, r^*)}$ and $r_{\pi^*} = \mathop{\arg\max}_{r\in\mathcal{R}(\mathcal{D}^{\rm pref}_{\rm off})} {J(\pi_E, r^*) - J(\hat{\pi}, r^*)}$. Then, under Assumption \ref{asp:rea}, we have a regret bound as follows:
		\begin{align*}
			&\sum_{t=1}^T \left( J(\pi^*, r^*) - J(\pi^t, r^*) \right) \\
			=& \sum_{t=1}^T \left(J(\pi_E, r^*) - J(\pi^t, r^*)\right) - \sum_{t=1}^T \left( J(\pi_E, r^*) - J(\pi^*, r^*) \right) \\
			\leq& \sum_{t=1}^T \left(J(\pi_E, r^*) - J(\pi^t, r^*)\right) - \sum_{t=1}^T \left(J(\pi_E, r_{\pi^t}) - J({\pi}^t, r_{\pi^t} )\right) \\
			& - \left( \sum_{t=1}^T \left( J(\pi_E, r^*) - J(\pi^*, r^*) \right) - \sum_{t=1}^T \left( J(\pi_E, r_{\pi^*}) - J(\pi^*, r_{\pi^*}) \right) \right) \\
			\leq& \sum_{t=1}^T \left( J(\pi_E, r_{\pi^*}) - J(\pi^*, r_{\pi^*})\right) - \sum_{t=1}^T \left( J(\pi_E, r^*) - J(\pi^*, r^*) \right) \\
			=& \sum_{t=1}^T \mathbb{E}_{x \sim d_0, \tau_0 \sim \pi_E \mid x, \tau_1 \sim \pi^* \mid x} \left[ r_{\pi^*}(x, \tau_0) - r_{\pi^*}(x, \tau_1) - \left( r^*(x, \tau_0) - r^*(x, \tau_1) \right) \right] \\
			\leq& \sum_{t=1}^T C_r^t(\mathcal{G}_r, \pi_E, \pi^*) \sqrt{\sum_{s=0}^{t-1} \mathop{\mathbb{E}}\limits_{x_s \sim d_0, \tau_{0,s} \sim \pi_E\mid x_s, \tau_{1,s} \sim \pi^s \mid x_s} \!\! \left[ \vert r_{\pi^*}(x, \tau_{0,s}) - r_{\pi^*}(x, \tau_{1,s}) - \left( r^*(x, \tau_{0,s}) - r^*(x, \tau_{1,s}) \right) \vert^2 \right]},
		\end{align*}
		where the second step follows from the fact that $\pi^t = \mathop{\arg\min}_{\pi \in \Pi} \max_{r \in \mathcal{R}(\mathcal{D}^{\rm pref}_{t})}$, the third step comes from the fact that $r_{\pi^t} = \mathop{\arg\max}_{r\in\mathcal{R}(\mathcal{D}^{\rm pref}_{t})} {J(\pi_E, r^*) - J(\pi^t, r^*)}$. The last step follows from the definition of $ C_r^t(\mathcal{G}_r, \pi_E, \pi^*)$ in Definition \ref{def:concoeonline}. We can further bound the above result with the following two steps.
		
		\noindent\textbf{Step 1.} Bounding $\sqrt{\sum_{s=0}^{t-1} \mathop{\mathbb{E}}\limits_{x_s \sim d_0, \tau_{0,s} \sim \pi_E\mid x_s, \tau_{1,s} \sim \pi^s \mid x_s} \!\! \left[ \vert r_{\pi^*}(x, \tau_{0,s}) - r_{\pi^*}(x, \tau_{1,s}) - \left( r^*(x, \tau_{0,s}) - r^*(x, \tau_{1,s}) \right) \vert^2 \right]}$:
		
		 We use Lemma \ref{lemma:l1logeexp} by iteratively setting $\pi_0 = \pi_E$, $\pi_1 = \pi^s$ over $s \in \{0,1,\ldots, t-1\}$. We have for any $r \in \mathcal{G}_r$,
		\begin{align*}
			&\sum_{s=0}^{t-1}\mathop{\mathbb{E}}\limits_{x_s \sim d_0, \tau_{0,s} \sim \pi_E\mid x_s, \tau_{1,s} \sim \pi^s \mid x_s} \!\! \left[ \vert r(x, \tau_{0,s}) - r(x, \tau_{1,s}) - \left( r^*(x, \tau_{0,s}) - r^*(x, \tau_{1,s}) \right) \vert^2 \right] \\
			\leq&  -2 \kappa^2 \sum_{s=0}^{t-1} \log \mathbb{E}_{x_s \sim d_0, \tau_{0,s} \sim \pi_E \mid x_s, \tau_{1,s} \sim \pi^s \mid x _s\atop o_s \sim \mathbb{P}_{r^*}(\cdot \mid x_s, \tau_{0,s}, \tau_{1,s})} \exp\left( \ell_r(o_s; x_s, \tau_{0,s}, \tau_{1,s}) \right)
		\end{align*}
		
		By Lemma \ref{lemma:unformart}, we have for any $\delta \in (0,1]$, with probability at least $1-\delta$,
		\begin{align*}
			\forall r \in \mathcal{G}_r,\ &\sum_{s=0}^{t-1}\mathop{\mathbb{E}}\limits_{x_s \sim d_0, \tau_{0,s} \sim \pi_E\mid x_s, \tau_{1,s} \sim \pi^s \mid x_s} \!\! \left[ \vert r(x, \tau_{0,s}) - r(x, \tau_{1,s}) - \left( r^*(x, \tau_{0,s}) - r^*(x, \tau_{1,s}) \right) \vert^2 \right] \\
			\leq&  \mathcal{O} \left( \kappa^2 \log \frac{\mathcal{N}_{[]}(1/t, \mathcal{L}_r, L_\infty)}{\delta} \right)
		\end{align*}
		
		Thus, by setting $r = r_{\pi^*}$, we have for any $\delta \in (0,1]$, with probability at least $1-\delta$, 
		\begin{align*}
		    &\sqrt{ \sum_{s=0}^{t-1}\mathop{\mathbb{E}}\limits_{x_s \sim d_0, \tau_{0,s} \sim \pi_E\mid x_s, \tau_{1,s} \sim \pi^s \mid x_s} \!\! \left[ \vert  r_{\pi^*}(x, \tau_{0,s}) -  r_{\pi^*}(x, \tau_{1,s}) - \left( r^*(x, \tau_{0,s}) - r^*(x, \tau_{1,s}) \right) \vert^2 \right]} \\
			\leq&  \mathcal{O} \left( \kappa \sqrt{ \log \frac{\mathcal{N}_{[]}(1/t, \mathcal{L}_r, L_\infty)}{\delta} } \right)
		\end{align*}
		
		 Furthermore, if Assumption \ref{asp:linearbound} holds, we can use Proposition \ref{prop:bracketing} to bound the linear case of RM as follows
		 \begin{align*}
		 	&\sqrt{ \sum_{s=0}^{t-1}\mathop{\mathbb{E}}\limits_{x_s \sim d_0, \tau_{0,s} \sim \pi_E\mid x_s, \tau_{1,s} \sim \pi^s \mid x_s} \!\! \left[ \vert  r_{\pi^*}(x, \tau_{0,s}) -  r_{\pi^*}(x, \tau_{1,s}) - \left( r^*(x, \tau_{0,s}) - r^*(x, \tau_{1,s}) \right) \vert^2 \right]} \\
		 	\leq&  \mathcal{O} \left( (1+e^{2B}) \sqrt{ d\log Bt + \log(1/\delta) } \right)
		 \end{align*}
		
		\noindent\textbf{Step 2.} Bounding $C_r^t(\mathcal{G}_r, \pi_E, \pi^*)$ under Assumption \ref{asp:linearbound}:
		
		Definding the concentrability coefficient for $t \in \{1,2,\ldots,T\}$ as follows:
		\begin{align*}
			\Sigma_t = \frac{\lambda_t}{B} I + \sum_{s=0}^{t-1} \mathbb{E}_{x_s \sim d_0, \tau_{0,s} \sim \pi_E \mid x_s, \tau_{1,s} \sim \pi^s \mid x_s} \left[ (\phi(x_s, \tau_{0,s}) - \phi(x_s, \tau_{0,s})) (\phi(x_s, \tau_{0,s}) - \phi(x_s, \tau_{0,s})) ^\top\right]
		\end{align*}
		
		Given a positive constant $\lambda_t$ for each $t \in \{1,2,\ldots, T\}$: $0 < \lambda_t \leq \sum_{s=0}^{t-1} \mathbb{E}_{x_s \sim d_0, \tau_{0,s} \sim \pi_E \mid x_s, \tau_{1,s} \sim \pi^s \mid x_s} \left[ \vert  r^*(x_s, \tau_{0,1}) - r^*(x_s, \tau_{1,s}) - \left( r(x_s, \tau_{0,s}) - r(x_s, \tau_{1,s}) \right) \vert^2 \right]$ and denote $\phi(x, \pi) = \mathbb{E}_{\tau \sim \pi \mid x} [ \phi(x,\tau) ]$ for any $\pi \in \Pi$. We assume that $r^*$ can be represented as $r^*(x, \tau) = (\theta^*)^\top\phi(x,\tau)$. The concentrability coefficient can be bounded $C_r^t(\mathcal{G}_r, \pi_E, \pi^*)$ as:
		\begin{align*}
			&C_r^t(\mathcal{G}_r, \pi_E, \pi^*) \\
			=& \sup_{r \in \mathcal{G}_r} \frac{\mathbb{E}_{x \sim d_0, \tau_0 \sim \pi_E \mid x, \tau_1 \sim \pi^* \mid x}\left[r^*(x, \tau_0) - r^*(x, \tau_1) - \left( r(x, \tau_0) - r(x, \tau_1) \right)\right]}{\sqrt{\sum_{s=0}^{t-1} \mathbb{E}_{x_s \sim d_0, \tau_{0,s} \sim \pi_E \mid x_s, \tau_{1,s} \sim \pi^s \mid x_s} \left[ \vert  r^*(x_s, \tau_{0,s}) - r^*(x_s, \tau_{1,s}) - \left( r(x_s, \tau_{0,s}) - r(x_s, \tau_{1,s}) \right) \vert^2 \right]}} \\
			\leq& \sqrt{2} \sup_{r \in \mathcal{G}_r} \frac{\mathbb{E}_{x \sim d_0, \tau_0 \mid x \sim \pi_E \mid x, \tau_1 \sim \pi^* \mid x}\left[ r^*(x, \tau_0) - r^*(x, \tau_1) - \left( r(x, \tau_0) - r(x, \tau_1) \right)\right]}{\sqrt{\lambda_t + \sum_{s=0}^{t-1} \mathbb{E}_{x_s \sim d_0, \tau_{0,s} \sim \pi_E \mid x_s, \tau_{1,s} \sim \pi^s \mid x_s} \left[ \vert  r^*(x_s, \tau_{0,s}) - r^*(x_s, \tau_{1,s}) - \left( r(x_s, \tau_{0,s}) - r(x_s, \tau_{1,s}) \right) \vert^2 \right]}} \\
			\leq& \sqrt{2} \sup_{r \in \mathcal{G}_r} \frac{\left\vert (\theta^*-\theta)^\top \mathbb{E}_{x \sim d_0}\left[\phi(x, \pi_E) - \phi(x, \pi^*)\right]\right\vert}{\sqrt{(\theta^* - \theta)^\top \Sigma_t (\theta^* - \theta)}} \\
			\leq& \sqrt{2} \sup_{r \in \mathcal{G}_r} \frac{\left\Vert \theta^*-\theta \right\Vert_{\Sigma_t} \left\Vert \phi(x, \pi_E) - \phi(x, \pi^*) \right\Vert_{\Sigma^{-1}_t}}{\sqrt{(\theta^* - \theta)^\top \Sigma_t (\theta^* - \theta)}} \\
			=&\sqrt{2} \left\Vert \phi(x, \pi_E) - \phi(x, \pi^*) \right\Vert_{\Sigma_t^{-1}},
		\end{align*}
		where the forth step uses the generalized Cauchy-Schwarz inequality.
		
		Following from the result of {\bf Step 1}, we have for any $\delta$, with probability at least $1-\delta$,
		\begin{align*}
			\sum_{t=1}^T \left( J(\pi^*, r^*) - J(\pi_t, r^*) \right) \leq   \mathcal{O} \left( \kappa \sum_{t=1}^T C_r^t(\mathcal{G}_r, \pi_E, \pi^*) \sqrt{ \log \frac{\mathcal{N}_{[]}(1/t, \mathcal{L}_r, L_\infty)}{\delta} } \right),
		\end{align*}
		which completes the proof of the first conclusion in Theorem \ref{thm:regret}.
		
		Using the results of {\bf Step 2}, we further have
		\begin{align*}
			\sum_{t=1}^T \left( J(\pi^*, r^*) - J(\pi_t, r^*) \right) \leq \mathcal{O} \left( (1+e^{2B}) \sum_{t=1}^T \left\Vert \phi(x, \pi_E) - \phi(x, \pi^*) \right\Vert_{\Sigma_t^{-1}} \sqrt{ d\log Bt + \log(1/\delta) } \right) 
		\end{align*}
		
		By the Cauchy-Schwarz inequality, we have
		\begin{align*}
			&\sum_{t=1}^T \left( J(\pi^*, r^*) - J(\pi^t, r^*) \right) \\
			\leq&   \mathcal{O} \left( (1+e^{2B})  \sqrt{ \sum_{t=1}^T \left\Vert \phi(x, \pi_E) - \phi(x, \pi^*) \right\Vert_{\Sigma_t^{-1}}^2 }   \sqrt{ \sum_{t=1}^T \left( d\log Bt + \log(1/\delta) \right) }  \right) 
		\end{align*}
		
		By designing of optimistic algorithms (e.g., \cite{xiong2024iterative}) to maximize the
		relative uncertainty w.r.t. the teacher policy, the policy $\pi^t$ at each step $t \in \{1,2,\ldots, T\}$ is selected to maximize the confidence-bound metric in the inverse covariance-weighted norm: $\pi^t =  \mathop{\arg\max}_{\pi \in \Pi} \left\Vert \phi(x, \pi_E) - \phi(x, \pi) \right\Vert_{\Sigma_t^{-1}}^2$, in such case, we have
		\begin{align*}
			\sum_{t=1}^T \left\Vert \phi(x, \pi_E) - \phi(x, \pi^*) \right\Vert_{\Sigma_t^{-1}}^2 \leq \sum_{t=1}^T \left\Vert \phi(x, \pi_E) - \phi(x, \pi^t) \right\Vert_{\Sigma_t^{-1}}^2 \leq 2d\log\left( 1 + \frac{TB}{d\lambda_1} \right),
		\end{align*}
		where the second step follows from the Elliptical Potential Lemma (refer to e.g.,  \cite{dani2008stochastic,rusmevichientong2010linearly,abbasi2011improved,xiong2024iterative}).
		
		Then, we have
		\begin{align*}
			&\sum_{t=1}^T \left( J(\pi^*, r^*) - J(\pi^t, r^*) \right) \\
			\leq&   \mathcal{O} \left( (1+e^{2B})  \sqrt{ d\log\left( 1 + \frac{TB}{d\lambda_1} \right) \sum_{t=1}^T \left( d\log(Bt) + \log(1/\delta) \right) }  \right),
		\end{align*}
		which completes the proof of the second conclusion in Theorem \ref{thm:regret}.
		\end{proof}
	
	\section*{D Practical Implementation of Online PbKD}
	\hypertarget{proofD}{}
	
     \noindent\textbf{Algorithm Description.}  
Algorithm~\ref{alg:onlinekdpra} presents a practical implementation of online PbKD through a gradient-based adversarial training procedure.  
Given a prompt dataset $\mathcal{D}_x$, we initialize the student model with weights pre-trained on outputs generated by the teacher model.  
At the beginning of each online training iteration $t \in \{1, 2, \ldots, T\}$, new preference data are collected by sampling trajectories from both the teacher policy and the current student policy. These new samples are added to the cumulative preference dataset from previous iterations.  
The reward model (RM) is then updated by maximizing the objective defined in Eq.~(\ref{eq:blackobjpra}) using stochastic gradient ascent (SGA). Simultaneously, the student policy is optimized to minimize the same objective using a policy gradient method, incorporating a clipping mechanism~\cite{schulman2017proximal} to improve training stability.  
Additionally, to encourage exploration during training, we incorporate an uncertainty-based term that maximizes deviation from the teacher policy, following the approach proposed by Xiong et al.~\cite{xiong2024iterative} and Ye et al.~\cite{ye2024online}.
	
	\begin{algorithm}[h] 
		\caption{Online PbKD via Gradient-Based Adversarial Training} \label{alg:onlinekdpra}
		\KwIn{ Initial empty preference dataset $\mathcal{D}^{\rm pref}_0$, student policy $\pi$ pretrained on teacher-generated outputs, teacher policy $\pi_E$, prompt dataset $\mathcal{D}_x$, reward model $r_{\theta}$ with trainable parameters $\theta$ and a feature extractor $\phi$.}
		\For{$t = 1,2, \ldots, T$}{
			Sample $N$ inputs from the prompt dataset $\mathcal{D}_x$, i.e., for each $n \in [N]$: $x_{t-1}^n \in \mathcal{D}_x$ and trajectories $\tau_{0,t-1}^n \sim \pi_{E} \mid x_{t-1}^n$ and $\tau_{1,t-1}^n \sim \pi^{t-1} \mid x_{t-1}^n$ \\
			Add the sample into the previous preference set: \\
			$\mathcal{D}^{\rm pref}_{t} \leftarrow \mathcal{D}^{\rm pref}_{t-1} \cup \{(o_{t-1}^n=1; x_{t-1}^n, \tau_{0,t-1}^n, \tau_{1,t-1}^n)\}_{n=1}^N$ \\
			\# Compute the $t$-iteration best policy under constraints on the $t$-iteration confidence set: \\
			Sample a $M$ inputs from the prompt dataset $\mathcal{D}_x$, i.e., for each $m \in [M]$, $x_m \in \mathcal{D}_x$ and trajectories $\tau_{0,m} \sim \pi_E \mid x_m$ and $\tau_{1,m} \sim \pi^{t-1} \mid x_m$ \\
				\# Optimize the RM to maximize the objective of Eq. (\ref{eq:blackobjpra}): \\ 
				$\theta \leftarrow \theta + \sum_{m=1}^M \nabla_\theta \left[ \left( r_{\theta}(x_m, \tau_{0,m}) - r_{\theta}(x_m, \tau_{1,m})\right) + \beta L_{r_\theta}(\mathcal{D}_t^{\rm pref}) \right]$\\
				\# Optimize the student policy to minimize the objective of Eq. (\ref{eq:blackobjpra}): \\ 
				$\pi^t \leftarrow \pi^{t-1} + \sum_{m=1}^M r(x_m, \tau_{1,m}) \nabla_\pi \min \left( \frac{\pi(\tau_{1,m} \mid x_m)}{\pi^{t-1}(\tau_{1,m} \mid x_m)} , {\rm clip}( \frac{\pi(\tau_{1,m} \mid x_m)}{\pi^{t-1}(\tau_{1,m} \mid x_m)}, 1+\epsilon, 1-\epsilon)  \right)$ \\
				\# Maximize the uncertainty term: \\
				$\pi^t \leftarrow \pi^{t} + \alpha \nabla_\pi \left\Vert \phi(x, \pi_E) - \phi(x, \pi) \right\Vert_{\hat{\Sigma}_t^{-1}}^2$
			}
		\KwOut{The sequence of optimized student policy $\{\pi^t\}_{t=1}^T$.}
	\end{algorithm}

	\section*{E Detailed Experimental Setup and Additional Experiments}
	\hypertarget{proofE1}{}
	\subsection*{E.1 Experimental Setup for Black-box Knowledge Distillation}
	\hypertarget{proofE1}{}
	\noindent\textbf{Datasets.}
    Following Chen et al. \cite{chen2024knowledgedistillationblackboxlarge}, we construct our training corpus by sampling 100,000 prompts from a combined pool of OpenOrca \cite{lian2023openorca} and Nectar \cite{starling2023} datasets, which together contain GPT-4-generated outputs. OpenOrca comprises diverse instruction-following tasks, while Nectar provides 7-wise comparisons, from which we retain only GPT-4 responses. Following Chen et al. \cite{chen2024knowledgedistillationblackboxlarge}, we also incorporate GPT-4-generated synthetic data based on standard benchmarks to further enrich the training set.
	For evaluation, we use multiple benchmarks: the complex reasoning dataset BBH \cite{suzgun2023challenging}, knowledge-based datasets (AGIEval \cite{zhong2024agieval}, ARC-Challenge \cite{clark2018think}, and MMLU \cite{hendryckstest2021}), and the mathematical reasoning dataset GSM8K \cite{cobbe2021training}. All the reported results of accuracy (\%) on these five datasets are the average across three random seeds of $\{10,20,30\}$ by a zero-shot decoding strategy. 
	
	\noindent\textbf{Offline Preference Data Collection.}
	To precollect offline preference data, we fine-tune several large language models (LLMs) on 10,000 instruction-following instances from the training split to generate diverse candidate completions. Specifically, we employ models from both the LLaMA2 \cite{touvron2023llama} and Qwen2 \cite{qwen2} families, including LLaMA2-7B/13B/70B-Chat, Qwen2-1.5B/7B/72B. For each prompt, multiple completions are sampled across models to increase diversity, and candidate pairs are formed for evaluation. To assess the quality of these model outputs, we use GPT-4 as an automatic evaluator. Each candidate response is independently rated using the following standardized prompt in Figure \ref{fig:prompt}, The numeric ratings assigned by GPT-4 are used as pseudo-labels for offline reward modeling or preference data construction. When comparing pairs, we select the higher-rated response as the preferred one. If ratings are tied, we either discard the pair or resolve ties via a second round of evaluation to ensure label consistency.
	
	\noindent\textbf{Hyperparameters.}
	We select GPT-4 \cite{achiam2023gpt} as the black-box teacher model, a powerful proprietary large language model (LLM). For the student model, we use a pretrained LLaMA2-7B \cite{touvron2023llama} model with the teacher-generated training data. The reward model is constructed by adding a linear layer on top of a frozen LLaMA-7B model. For offline PbKD, the trainable parameters are randomly initialized, whereas for online PbKD, they are pretrained on the offline preference data.
	We iterate 5 iterations of online PbKD in total and in each iteration, we randomly select 1,000 prompts from the training set to construct preference data using outputs from the current student policy and the teacher model. These are incrementally added to the preference dataset for reward model training in the current iteration. 
	
	\noindent\textbf{Baselines.}  
	We compare our method with the following baselines:
	\begin{itemize}
		\item \textbf{Vanilla Black-Box KD:} The student model is fine-tuned directly on the outputs generated by the black-box teacher without additional sampling or optimization.
		
		\item \textbf{Best-of-$N$ Sampling}~\cite{stiennon2020learning}: To improve output quality, $N = 10$ candidate responses are sampled from the teacher using nucleus sampling (top-p = 0.9, top-k = 40, temperature = 0.8), and the one with the highest score according to a learned reward model is selected as the supervision signal.
		
		\item \textbf{Proxy-KD}~\cite{chen2024knowledgedistillationblackboxlarge}: A state-of-the-art black-box KD method. It first trains a white-box proxy model (LLaMA2-70B) to align with the black-box teacher via preference optimization, and then distills knowledge from this proxy to the student using standard white-box KD techniques.
	\end{itemize}
	
	\hypertarget{proofE2}{}
	\subsection*{E.2 Experimental Setup for White-box Knowledge Distillation}
	\hypertarget{proofE2}{}
	\noindent\textbf{Datasets.}
    We evaluate white-box knowledge distillation on several well-studied instruction-following datasets \cite{gu2023minillm,ko2024distillm,jiaadversarial}. Specifically, we construct the training data from the {\tt databricks-dolly-15k} dataset \cite{dolly2023introducing}, randomly selecting 15,000 samples for training and evenly splitting 500 samples for validation and testing. 
	We evaluate the trained student model on five instruction-following benchmarks: DollyEval, SelfInst \cite{wang2022self}, VicunaEval \cite{chiang2023vicuna}, S-NI \cite{wang2022super}, and UnNI \cite{honovich2022unnatural}. 
	
   \noindent\textbf{Offline Preference Data Collection.}
   To obtain preference data for policy-based knowledge distillation (PbKD), we fine-tune several LLMs on 5,000 randomly selected instruction-following instances from the {\tt databricks-dolly-15k} training split to generate diverse candidate completions. Similar to the black-box setting, we sample multiple responses per prompt across models to encourage diversity. GPT-4 is used as an automatic evaluator to score and rank the completions using the standardized evaluation prompt as illustrated in Figure \ref{fig:prompt}. Candidate pairs are then formed based on relative scores, with the higher-rated response selected as preferred. Tied cases are either discarded or resolved via a second evaluation pass to maintain label reliability.
	
	\noindent\textbf{Hyperparameters.}
	We use a fine-tuned LLaMA2-13B model \cite{touvron2023llama} as the teacher and a fine-tuned TinyLLaMA-1.1B model \cite{zhang2024tinyllama} as the student. The $Q$-function is implemented by adding a linear layer on top of a frozen OpenLLaMA-3B model \cite{geng2023openllama}. Similar to the black-box setting, $Q$-function is pretrained with the precollected preference data for online MM PbKD. The training procedure of the $Q$-function follows the same setting as in the black-box scenario, where 1,000 additional preference samples are incrementally added in each of 3 iterations for online PbKD. 
	
	\begin{figure}[h] 
		\centering 
		\includegraphics[width=1.0\linewidth]{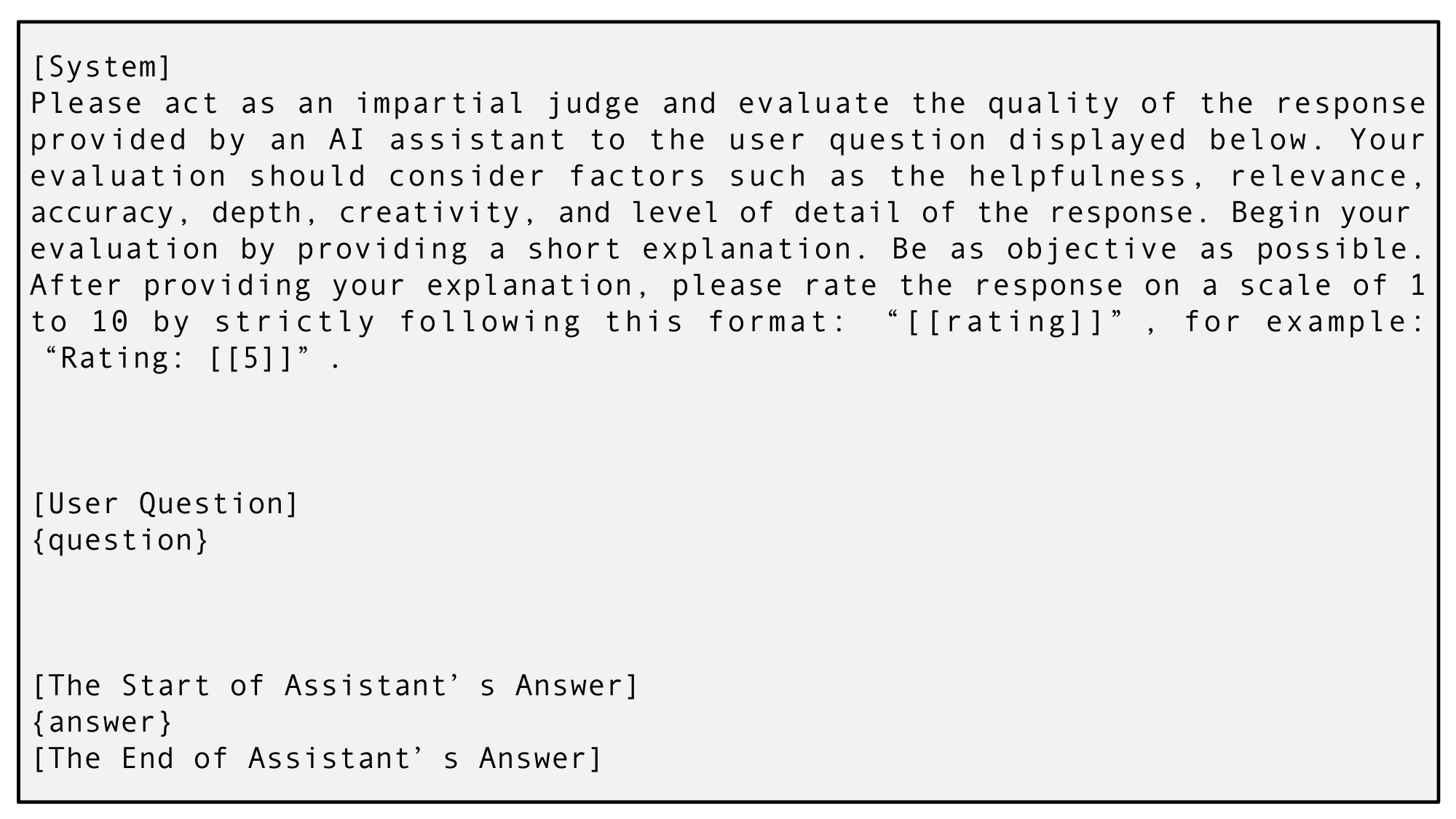} 
		\caption{The prompt template for GPT-4 feedback.} 
		\label{fig:prompt} 
	\end{figure}
	
	\noindent\textbf{Evaluation.}
	Evaluation metrics include ROUGE-L \cite{lin2004rouge}. Besides, we include the GPT-4 feedback scores \cite{zheng2023judging} as a supplementary evaluation metric, by asking GPT-4 to compare model-generated responses with the ground truth answers and raise 1-10 scores for both responses. We the prompt is largely followed \cite{ko2024distillm} and illustrated in Figure \ref{fig:prompt} and set the temperature $= 0.7$ to call the GPT-4 API. We report the ratio of the total score of model responses and ground truth answers. This metric is only applied to Dolly, SelfInst, and Vicuna. During the evaluation, we sample the responses from each model using a temperature of $0.8$ and three random seeds $\{10, 20, 30\}$.

\noindent\textbf{Baselines.}  
We compare our method against the following baselines:

\begin{itemize}
	\item \textbf{Black-Box KD:} Applies supervised fine-tuning (SFT) to the student using outputs generated by a black-box teacher.
	
	\item \textbf{KL}~\cite{hinton2015distilling}: Uses Kullback–Leibler divergence between the teacher and student output distributions on supervised datasets.
	
	\item \textbf{RKL}~\cite{lin2020autoregressive}: Employs reverse KL divergence on student-generated outputs, enabling on-policy learning.
	
	\item \textbf{MiniLLM}~\cite{gu2023minillm}: Enhances RKL with policy gradient methods to reduce training variance in auto-regressive generation.
	
	\item \textbf{GKD}~\cite{agarwal2024policy}: Utilizes Jensen–Shannon (JS) divergence under an on-policy formulation for stable distribution matching.
	
	\item \textbf{DistiLLM}~\cite{ko2024distillm}: Proposes adaptive off-policy training with skew KL divergence for improved sample efficiency.
	
	\item \textbf{AMMD}~\cite{jiaadversarial}: Employs adversarial moment-matching to reduce the imitation gap between teacher and student. For fair comparison, we apply Best-of-$N$ sampling (with $N=10$) during student training, following our black-box KD setting.
\end{itemize}
	 
	\hypertarget{proofE3}{}
	\subsection*{E.3  Additional Results Based on Qwen2}
	\begin{table*}[t!] 
		\centering
		\caption{Black-box KD on five datasets based on Qwen2 backbone. We format {\bf the best} and \underline{the second best} results.}
		\label{tbl:blackqwen}
		\scalebox{.9}{
			\begin{tabular}{lcccccc}
				\toprule
				{\bf Method}& BBH &AGIEval & ARC-Challenge & MMLU &GSM8K & {\bf Avg.} \\
				\midrule
				Teacher (GPT-4) & {\it 88.0} &{\it 56.4} & {\it 93.3} & {\it 86.4} & {\it 92.0} & {\it 83.2} \\ 
				\midrule
				\multicolumn{7}{c}{\it Qwen2-1.5B as the student backbone} \\ 
				\midrule
				Student (Qwen2-1.5B)  &36.0$_{\pm 0.3}$ &42.6$_{\pm 0.4}$&42.7$_{\pm 0.2}$&55.4$_{\pm 0.1}$&56.3$_{\pm 0.3}$&46.6 \\
				Vanilla Black-Box KD & 41.2$_{\pm 0.3}$ & 43.7$_{\pm 0.3}$ & 45.6$_{\pm 0.1}$ & 56.8$_{\pm 0.3}$ & 57.4$_{\pm 0.2}$ & 48.9 \\
				$\llcorner$ Best-of-$N$ & 43.5$_{\pm 0.2}$ & 43.8$_{\pm 0.2}$ & 50.7$_{\pm 0.1}$& 57.1$_{\pm 0.2}$ & 57.8$_{\pm 0.3}$ & 50.6  \\
				\midrule
				Offline PbKD  &47.4$_{\pm 0.2}$&44.7$_{\pm 0.3}$&50.4$_{\pm 0.3}$&58.5$_{\pm 0.2}$&60.5$_{\pm 0.2}$& 52.3 \\ 
				Online PbKD (1-iter) &\underline{50.4}$_{\pm 0.4}$&45.2$_{\pm 0.3}$&\underline{52.7}$_{\pm 0.3}$&59.4$_{\pm 0.2}$&60.5$_{\pm 0.3}$& 53.5 \\
				Online PbKD (3-iter) &49.7$_{\pm 0.3}$&\underline{46.7}$_{\pm 0.1}$&52.0$_{\pm 0.3}$&\underline{60.7}$_{\pm 0.3}$&\textbf{62.7}$_{\pm 0.4}$& 54.4 \\
				Online PbKD (5-iter) &\textbf{51.0}$_{\pm 0.6}$&\textbf{47.1}$_{\pm 0.2}$&\textbf{55.7}$_{\pm 0.4}$&\textbf{61.8}$_{\pm 0.2}$&\underline{61.4}$_{\pm 0.5}$& 55.4 \\
				\midrule
				\multicolumn{7}{c}{\it  Qwen2-7B as the student backbone} \\ 
				\midrule
				Student (Qwen2-7B)  &60.7$_{\pm 0.2}$&51.2$_{\pm 0.3}$&59.8$_{\pm 0.1}$&68.7$_{\pm 0.3}$&\underline{78.2}$_{\pm 0.3}$&63.7 \\
				Vanilla Black-Box KD & 61.4$_{\pm 0.2}$ & 50.0$_{\pm 0.2}$ & 61.5$_{\pm 0.4}$ & 69.5$_{\pm 0.3}$ & 75.5$_{\pm 0.3}$ & 63.6 \\
				$\llcorner$ Best-of-$N$ & 63.0$_{\pm 0.2}$ & 48.4$_{\pm 0.2}$ & 65.3$_{\pm 0.1}$& \textbf{70.4}$_{\pm 0.2}$ & 76.2$_{\pm 0.3}$ & 64.7  \\
				\midrule
				Offline PbKD  &63.7$_{\pm 0.2}$&50.1$_{\pm 0.2}$&64.2$_{\pm 0.3}$&67.4$_{\pm 0.5}$&74.7$_{\pm 0.3}$& 64.0 \\ 
				Online PbKD (1-iter) &64.3$_{\pm 0.3}$&\underline{51.7}$_{\pm 0.3}$&66.7$_{\pm 0.3}$&68.2$_{\pm 0.4}$&76.1$_{\pm 0.3}$& 65.4 \\
				Online PbKD (3-iter) &\underline{64.7}$_{\pm 0.2}$&50.8$_{\pm 0.1}$&\underline{68.7}$_{\pm 0.3}$&\underline{70.2}$_{\pm 0.4}$&\textbf{79.2}$_{\pm 0.5}$& 66.7 \\
				Online PbKD (5-iter) &\textbf{65.0}$_{\pm 0.4}$&\textbf{52.7}$_{\pm 0.3}$&\textbf{70.4}$_{\pm 0.2}$&69.7$_{\pm 0.3}$&77.0$_{\pm 0.3}$& 67.0 \\
				\bottomrule
		\end{tabular}}
	\end{table*}
	
 \noindent\textbf{Black-Box KD Results Based on Qwen2.}
 We evaluate black-box knowledge distillation (KD) using GPT-4 as the teacher model, with Qwen2-1.5B and Qwen2-7B as student models. The reward models are constructed based on Qwen2-1.5B and Qwen2-7B, respectively. The results, presented in Table~\ref{tbl:blackqwen}, exhibit trends similar to those observed in the LLaMA2-7B experiments.
 We compare our method against the vanilla black-box KD baseline, which fine-tunes the student model directly on outputs generated by the teacher. Additionally, we employ Best-of-$N$ sampling~\cite{stiennon2020learning} (with $N=10$, top-p = 0.9, top-k = 40, and temperature = 0.8) to enhance generation quality by selecting the highest-reward candidate according to a reward model trained on pre-collected preference data.
 Our method consistently outperforms both baselines across all five evaluation datasets, demonstrating the effectiveness of reward-guided optimization in improving student performance.
 Moreover, we perform an ablation study on the number of iterations in online PbKD. The results show that online PbKD consistently outperforms its offline counterpart, highlighting the benefit of iterative alignment through preference optimization. Performance continues to improve with more iterations, underscoring the value of reward-guided iterative distillation.

 \begin{table*}[t!] 
 	\centering
 	\caption{White-box KD on five instruction-following dataset based on Qwen2 backbone. We format {\bf the best} and \underline{the second best} results.}
 	\label{tbl:whiteboxqwen}
 	\scalebox{.8}{
 		\begin{tabular}{lcccccccc}
 			\toprule
 			\multirow{2}{*}{\bf Method}& \multicolumn{2}{c}{{DollyEval}} & \multicolumn{2}{c}{{SelfInst}} & \multicolumn{2}{c}{{VicunaEval}} & S-NI & UnNI \\
 			\cmidrule(lr){2-3}\cmidrule(lr){4-5}\cmidrule(lr){6-7}\cmidrule(lr){8-8}\cmidrule(lr){9-9}
 			& GPT-4 & R-L & GPT-4 & R-L & GPT-4 & R-L & R-L & R-L \\
 			\midrule
 			Teacher (Qwen2-7B) & {\it 58.5$_{\pm 0.5}$} & {\it 32.7$_{\pm 0.4}$} & {\it 53.4$_{\pm 0.4}$} & {\it 22.7$_{\pm 0.2}$} &{\it 50.5$_{\pm 0.2}$}  & {\it 24.0$_{\pm 0.3}$} & {\it 40.5$_{\pm 0.3}$} &{\it 37.3$_{\pm 0.4}$}  \\
 			\midrule
 			Student (Qwen2-0.5B) &34.6$_{\pm 0.2}$ &22.3$_{\pm 0.3}$&35.7$_{\pm 0.3}$&15.4$_{\pm 0.1}$&27.6$_{\pm 0.3}$&16.7$_{\pm 0.2}$&26.8$_{\pm 0.3}$&20.4$_{\pm 0.2}$ \\
 			Black-Box KD & 30.8$_{\pm 0.5}$ & 23.7$_{\pm 0.4}$ & 35.7$_{\pm 0.3}$  &14.3$_{\pm 0.2}$  & 26.7$_{\pm 0.6}$ & 15.4$_{\pm 0.3}$ & 27.2$_{\pm 0.3}$ & 21.7$_{\pm 0.2}$ \\
 			KL \cite{hinton2015distilling} & 32.5$_{\pm 0.3}$& 24.2$_{\pm 0.4}$ & 36.2$_{\pm 0.3}$& 15.0$_{\pm 0.1}$ & 28.2$_{\pm 0.1}$ & 17.2$_{\pm 0.3}$ & 27.4$_{\pm 0.3}$ & 22.3$_{\pm 0.2}$ \\
 			RKL \cite{lin2020autoregressive} & 32.5$_{\pm 0.7}$ &  24.5$_{\pm 0.3}$ & 37.5$_{\pm 0.5}$ & 16.0$_{\pm 0.2}$ & 30.1$_{\pm 0.4}$ & 17.5$_{\pm 0.0}$ & 25.2$_{\pm 0.7}$ & 20.7$_{\pm 0.2}$ \\
 			MiniLLM \cite{gu2023minillm} & 36.2$_{\pm 0.5}$ & 25.1$_{\pm 0.2}$ & 38.2$_{\pm 0.6}$ & 17.7$_{\pm 0.3}$ & 31.2$_{\pm 0.4}$ & 18.2$_{\pm 0.1}$ & 26.7$_{\pm 0.2}$ & 22.5$_{\pm 0.1}$ \\
 			GKD \cite{agarwal2024policy} & 35.7$_{\pm 0.6}$ & 23.6$_{\pm 0.4}$ & 41.2$_{\pm 0.8}$ & 18.6$_{\pm 0.4}$ & 29.2$_{\pm 0.3}$ & 17.4$_{\pm 0.2}$ & 28.8$_{\pm 0.1}$ & 21.5$_{\pm 0.5}$ \\
 			DistiLLM \cite{ko2024distillm} & 37.2$_{\pm 0.3}$ & 26.5$_{\pm 0.4}$ & 42.3$_{\pm 0.4}$ &  18.7$_{\pm 0.4}$ & 32.3$_{\pm 0.2}$ &17.4$_{\pm 0.3}$ & 27.4$_{\pm 0.2}$ & 23.8$_{\pm 0.2}$ \\
 			AMMD \cite{jiaadversarial} & 36.5$_{\pm 0.8}$ & 23.2$_{\pm 0.3}$ & 43.0$_{\pm 1.0}$ &18.9$_{\pm 0.5}$ & 30.2$_{\pm 0.4}$ & 16.7$_{\pm 0.4}$ & 28.0$_{\pm 0.5}$ & 25.6$_{\pm 0.1}$ \\
 			$\llcorner$ Best-of-$N$ &37.0$_{\pm 0.5}$&22.7$_{\pm 0.5}$&44.6$_{\pm 0.3}$&20.4$_{\pm 0.1}$&31.5$_{\pm 0.6}$&18.2$_{\pm 0.2}$&27.5$_{\pm 0.1}$&26.3$_{\pm 0.4}$ \\ 
 			\midrule
 			Offline MM PbKD  &38.4$_{\pm 0.7}$&25.2$_{\pm 0.6}$&44.5$_{\pm 0.2}$&19.7$_{\pm 0.1}$&32.5$_{\pm 0.7}$&18.5$_{\pm 0.2}$&28.5$_{\pm 0.3}$&27.6$_{\pm 0.2}$ \\ 
 			Online MM PbKD (1-iter) &\underline{39.7}$_{\pm 0.4}$&24.2$_{\pm 0.2}$&45.8$_{\pm 0.7}$&20.0$_{\pm 0.4}$&32.7$_{\pm 0.6}$&20.5$_{\pm 0.2}$&28.8$_{\pm 0.2}$&29.7$_{\pm 0.1}$ \\ 
 			Online MM PbKD (2-iter) &39.2$_{\pm 0.5}$&\underline{26.8}$_{\pm 0.3}$&\textbf{46.7}$_{\pm 0.7}$&\textbf{21.4}$_{\pm 0.2}$&\underline{33.5}$_{\pm 0.5}$&\underline{21.7}$_{\pm 0.3}$&\underline{30.2}$_{\pm 0.3}$&\underline{30.9}$_{\pm 0.1}$ \\ 
 			Online MM PbKD (3-iter) &\textbf{40.1}$_{\pm 0.7}$&\textbf{27.4}$_{\pm 0.3}$&\underline{46.2}$_{\pm 0.5}$&\underline{21.0}$_{\pm 0.1}$&\textbf{34.7}$_{\pm 0.5}$&\textbf{22.1}$_{\pm 0.2}$&\textbf{31.8}$_{\pm 0.3}$&\textbf{32.2}$_{\pm 0.2}$ \\ 
 			\bottomrule
 	\end{tabular}}
 \end{table*}

	\noindent\textbf{White-Box KD Results based on Qwen2.}
	We evaluate white-box knowledge distillation (KD) using Qwen2-7B as the teacher model, with Qwen2-0.5B as student models. The $Q$-function is constructed with a linear model on top of a fixed Qwen2-1.5B model.
	We present the white-box knowledge distillation (KD) results in Table \ref{tbl:whiteboxqwen}. Our approach is compared with Black-Box KD, where supervised fine-tuning (SFT) is applied to the student model using teacher-generated outputs. The results demonstrate that all white-box KD baselines generally outperform Black-Box KD, indicating that the token-level output probabilities from the white-box teacher provide richer information for implicit knowledge transfer. We further compare our method with several distribution matching baselines: KL \cite{hinton2015distilling}, which uses KL divergence on the supervised dataset; RKL \cite{lin2020autoregressive}, which applies reverse KL divergence on student-generated outputs; MiniLLM \cite{gu2023minillm}, which combines RKL divergence with policy gradient methods; GKD \cite{agarwal2024policy}, which utilizes JS divergence with an on-policy approach; and DistiLLM \cite{ko2024distillm}, which implements adaptive training for off-policy optimization of skew KL divergence. Our method surpasses these approaches by optimizing a more effective implicit imitation learning objective. Additionally, we compare with AMMD \cite{jiaadversarial}, which employs adversarial moment-matching to reduce the imitation gap between student and teacher policies. For fair comparison, we enhanced AMMD with a Best-of-$N$ sampling approach similar to our black-box KD experiments. However, AMMD's performance does not improve significantly with only Best-of-$N$ sampling and reward guidance. In contrast, our method enhances AMMD by incorporating reward guidance into an online preference-based KD learning process. The ablation study on online iterations reveals trends consistent with the black-box experiments, validating the effectiveness of iterative knowledge distillation with online preference optimization.

    \hypertarget{proofE4}{}
    \subsection*{E.4 Analysis Experiments}
    
    \begin{figure}[h]
    	\centering
    	\subfigure[Average performance of black-box KD across five datasets: BBH, AGIEval, ARC-Challenge, MMLU, and GSM8K.]{
    		\begin{minipage}[h]{0.45\textwidth}
    			\centering
    			\includegraphics[width=7cm]{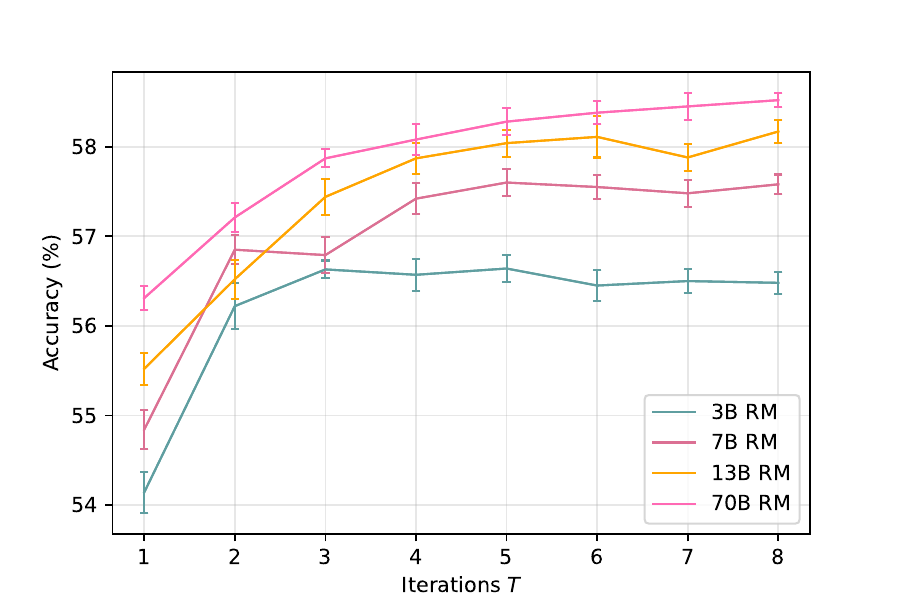}
    	\end{minipage}}
    	\hspace{0.1in}
    	\subfigure[Average ROUGE-L performance of white-box KD across five datasets: DollyEval, SelfInst, VicunaEval, S-NI, and UnNI.]{
    		\begin{minipage}[h]{0.45\textwidth}
    			\centering
    			\includegraphics[width=7cm]{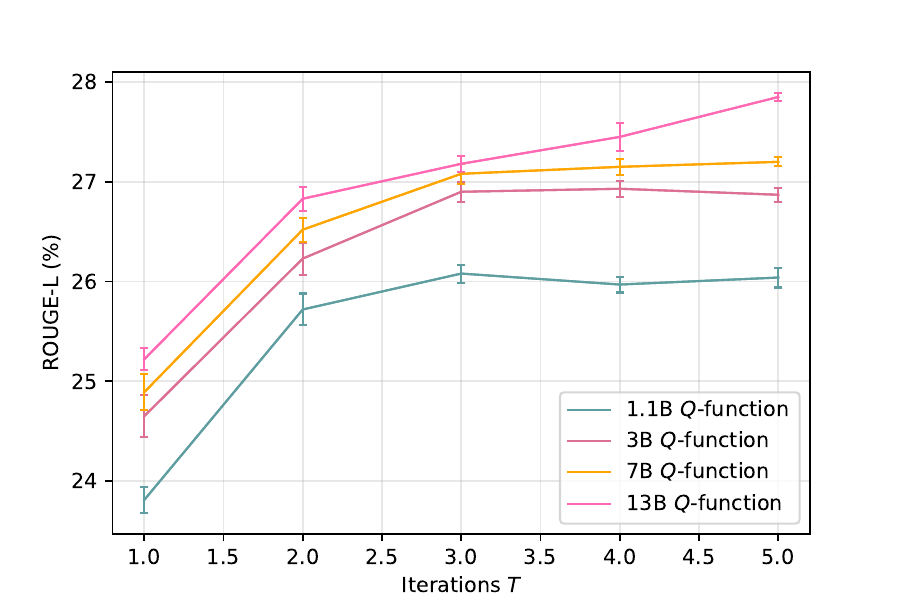}
    	\end{minipage}}
    	\caption{Impact of reward model ($Q$-Function) size on performance over online iterations.} \label{fig:rmsize}
    \end{figure}

    \begin{table*}[t!] 
    	\centering
    	\caption{Comparison with a combined RLHF and knowledge distillation method.}
    	\label{tbl:rlhfkd}
    	\scalebox{.9}{
    		\begin{tabular}{lcccccl}
    			\toprule
    			\multicolumn{7}{c}{\it  Black-box KD (Accuracy)} \\ 
    			\midrule
    			{\bf Method}& BBH &AGIEval & ARC-Challenge & MMLU &GSM8K&{\bf Avg.} \\   
    			\midrule
    			Offline PbKD  &50.7$_{\pm 0.3}$&40.3$_{\pm 0.3}$&70.5$_{\pm 0.6}$&53.9$_{\pm 0.3}$&52.7$_{\pm 0.4}$& 53.6  \\ 
    			Offline RLHF+KD &46.2$_{\pm 0.5}$&35.8$_{\pm 0.2}$&65.4$_{\pm 0.4}$&51.4$_{\pm 0.3}$&50.5$_{\pm 0.4}$& 49.9 ($-$3.7)  \\ 
    			Online PbKD  &54.7$_{\pm 0.6}$&44.7$_{\pm 0.3}$&73.1$_{\pm 0.5}$&57.3$_{\pm 0.6}$&58.2$_{\pm 0.4}$& 57.6  \\
    			Online RLHF+KD  &48.2$_{\pm 0.5}$&37.5$_{\pm 0.4}$&68.7$_{\pm 0.6}$&52.0$_{\pm 0.7}$&53.0$_{\pm 0.6}$& 51.9 ($-$5.7)  \\
    			\midrule
    			\multicolumn{7}{c}{\it  White-box KD (ROUGE-L)} \\ 
    			\midrule
    			{\bf Method}& DollyEval & SelfInst & VicunaEval & S-NI &UnNI &{\bf Avg.}  \\  
    			\midrule
    			Offline MM PbKD  &29.0$_{\pm 0.4}$&22.1$_{\pm 0.2}$&21.7$_{\pm 0.4}$&38.2$_{\pm 0.6}$&38.3$_{\pm 0.4}$& 29.9 \\ 
    			Offline RLHF+KD  &25.5$_{\pm 0.5}$&17.3$_{\pm 0.6}$&17.9$_{\pm 0.5}$&30.5$_{\pm 0.5}$&28.6$_{\pm 0.7}$& 24.0 ($-$5.9) \\ 
    			Online MM PbKD  &31.3$_{\pm 0.5}$&23.7$_{\pm 0.2}$&23.5$_{\pm 0.4}$&39.7$_{\pm 0.6}$&40.2$_{\pm 0.4}$& 31.7 \\ 
    			Online RLHF+KD  &26.7$_{\pm 0.6}$&20.7$_{\pm 0.5}$&20.5$_{\pm 0.6}$&32.0$_{\pm 0.7}$&32.8$_{\pm 0.7}$& 26.5 ($-$5.2) \\ 
    			\bottomrule
    	\end{tabular}}
    \end{table*}

  \noindent\textbf{Effects of Min-Max Optimization.}
  To evaluate the effectiveness of our proposed min-max optimization framework for joint policy and reward learning, we compare it against a pipeline approach. In this baseline, a reward model (RM) is first trained using either offline preference data or online generated preference data. Subsequently, the student policy is optimized using an RLHF method, combined with a cross-entropy loss (for black-box KD) or a KL-divergence objective (for white-box KD), based on outputs generated by the teacher model. We refer to these baselines as Offline RLHF+KD and Online RLHF+KD, corresponding to the offline and online settings, respectively.
  As shown in Table \ref{tbl:rlhfkd}, our min-max optimization framework improves distributional robustness in preference-based KD, especially when training and evaluation data come from different distributions. In such cases, conventional RLHF methods tend to underperform because the RM trained on the training distribution fails to accurately capture the true reward function in the test distribution.

  \noindent\textbf{Impact of RM Size.}
  Figure~\ref{fig:rmsize} presents an analysis of how the size of the reward model (RM) in black-box KD and the $Q$-function in white-box KD affects the performance of our online knowledge distillation method. Specifically, we construct the RM in black-box KD using four model sizes: OpenLLaMA-3B, LLaMA2-7B, LLaMA2-13B, and LLaMA2-70B; and the $Q$-function in white-box KD using TinyLLaMA-1.1B, OpenLLaMA-3B, OpenLLaMA-7B, and OpenLLaMA-13B. 
  From Figure~\ref{fig:rmsize}, we observe that as the number of iterations increases, all RM sizes exhibit a similar trend of performance improvement. This indicates that our PbKD approach benefits from multiple iterations of online learning. Notably, smaller RMs (e.g., OpenLLaMA-3B in black-box KD and TinyLLaMA-1.1B in white-box KD) tend to converge faster but reach a lower performance plateau. In contrast, larger RMs achieve better performance with more iterations, suggesting that the higher capability of larger models facilitates more effective alignment with the teacher model in our online PbKD framework.

\end{document}